\newcommand{\R}{\mathbb{R}}
\newcommand{\E}{\mathbb{E}}
\newcommand{\M}{\mathbf{M}}
\newcommand{\A}{\mathbf{A}}
\newcommand{\B}{\mathbf{B}}
\newcommand{\C}{\mathbf{C}}
\newcommand{\tA}{\tilde{\mathbf{A}}}
\newcommand{\tB}{\tilde{\mathbf{B}}}
\newcommand{\tC}{\tilde{\mathbf{C}}}
\newcommand{\lb}{\langle}
\newcommand{\rb}{\rangle}
\newcommand{\al}{\alpha}
\newcommand{\be}{\beta}
\newcommand{\ga}{\gamma}
\newcommand{\la}{\lambda}
\newcommand{\La}{\Lambda}
\newcommand{\Real}{\mbox{Re}}
\newcommand{\Imag}{\mbox{Im}}
\newcommand{\dotr}[1]{#1^{\bullet}} 
\newcommand{\mf}{f}
\newcommand{\mx}{x}
\newcommand{\my}{ y }
\newcommand{\Mx}{ \mathcal{M}_1 }
\newcommand{\My}{ \mathcal{M}_2 }
\newcommand{\mR}{ \mathcal{R}}
\newcommand{\dimx}{ d_{1} }
\newcommand{\dimy}{ d_{2} }
\newcommand{\gradx}{\mbox{grad}_{x}}
\newcommand{\grady}{\mbox{grad}_{y}}
\newcommand{\Hessx}{\mbox{Hess}_{x}}
\newcommand{\Hessy}{\mbox{Hess}_{y}}
\newcommand{\gradyx}{\mbox{grad}^2_{yx}}
\newcommand{\gradxy}{\mbox{grad}^2_{xy}}
\newcommand{\connx}{\nabla_{x}}
\newcommand{\conny}{\nabla_{y}}
\newcommand{\mvec}{\xi}
\newcommand{\meta}{\eta}
\newcommand{\mdelta}{\delta}
\newcommand{\mg}{g}
\newcommand{\tgda}{\mathbf{T}}
\newcommand{\Mxy}{ \mathcal{M} }
\newcommand{\mxy}{\mathbf{p}}
\newcommand{\dimxb}{ d_{1}' }
\newcommand{\dimyb}{ d_{2}' }
\newcommand{\metab}{\eta'}
\newcommand{\mdeltab}{\delta'}
\newcommand{\chartinv}{ \varphi^{-1}}
\newcommand{\chart}{ \varphi}
\newcommand{\locR}{ \bar{\mathcal{R}}}
\newcommand{\locf}{\bar{f}}
\newcommand{\locx}{u_1}
\newcommand{\locy}{u_2}
\newcommand{\lvec}{\bar{\xi}}
\newcommand{\leta}{\bar{\eta}}
\newcommand{\ldelta}{\bar{\delta}}
\newcommand{\locg}{\bar{g}}
\newcommand{\DTstar}{\mathbf{D} \tgda^\ast}
\newtheorem{theorem}{Theorem}[section]
\newtheorem{proposition}{Proposition}[section]
\newtheorem{lemma}[theorem]{Lemma}
\newtheorem{definition}{Definition}[section]
\newtheorem{assumption}{Assumption}[section]
\title{Local convergence of simultaneous min-max algorithms 
to differential equilibrium on Riemannian manifold}
\author{Sixin Zhang\\
Universit\'e de Toulouse, INP, IRIT\\
2, rue Camichel, BP 7122, 31071 Toulouse Cedex 7, France\\
\texttt{sixin.zhang@irit.fr} \\
}
\begin{document}

\maketitle

\begin{abstract}
We study min-max algorithms to solve zero-sum differential games on
Riemannian manifold.
Based on the notions of
differential Stackelberg equilibrium
and differential Nash equilibrium on Riemannian manifold,
we analyze the local convergence of 
two representative deterministic simultaneous algorithms $\tau$-GDA and $\tau$-SGA
to such equilibria.
Sufficient conditions are obtained to establish the linear convergence rate
of $\tau$-GDA based on the Ostrowski theorem on manifold and spectral analysis. 
To avoid strong rotational dynamics in $\tau$-GDA, 
$\tau$-SGA is extended from
the symplectic gradient-adjustment method in Euclidean space.
We analyze an 
asymptotic approximation of $\tau$-SGA 
when the learning rate ratio $\tau$ is big. 
In some cases, it can achieve a faster convergence rate 
to differential Stackelberg equilibrium compared to $\tau$-GDA. 
We show numerically how the insights obtained from the
convergence analysis may improve
the training of orthogonal Wasserstein GANs using 
stochastic $\tau$-GDA and $\tau$-SGA on simple benchmarks.
\end{abstract}

\section{Introduction}

Riemannian min-max problem has attracted a lot of research attention in recent years,
with various machine learning applications including robust PCA \citep{JordanLinVlatakisEmmanouil2022}, 
robust neural network training \citep{huang2023gradient}
and generative adversarial network (GAN) \citep{han2023riemannian}.
This problem is formalized as a two-player zero-sum game, 
where the variables of each player
are constrained on the Riemannian manifold $\Mx$ and $\My$,
\[ 
	\min_{\mx  \in \Mx } \max_{ \my  \in \My} \mf(\mx, \my)   .
\]
When $\Mx$ and $\My$ are Euclidean,
gradient-based methods such as 
gradient-descent-ascent (GDA) \citep{daskalakis2018limit,pmlr-v119-lin20a}, 
extra-gradient \citep{GidelBVVL19,mahdavinia2022tight}, 
optimistic mirror descent  \citep{MertikopoulosLZ19}, 
Hamiltonian-gradient descent \citep{pmlr-v119-loizou20a} 
and symplectic gradient-adjustment (SGA) \citep{pmlr-v80-balduzzi18a}
are often considered to solve this problem.
When $\Mx$ and $\My$ are Riemannian,
how to extend existing algorithms from Euclidean space
to Riemannian manifold
and how to analyze their convergence become 
an interesting topic 
in recent years. 
In Table \ref{tbl:summaryminmax}, we summarize
some related works on the Riemannian min-max problem.
In particular, we observe that under suitable assumptions 
of the game, one 
can obtain the global convergence of a suitable algorithm
towards Nash equilibrium.
One major issue is that these assumptions 
typically do not hold in applications such as GAN \citep{razaviyayn2020nonconvex}.
However, when
a min-max problem is non-convex and non-concave,
even in the Euclidean case, 
the global convergence of 
existing algorithms can be complicated \citep{pmlr-v139-hsieh21a}.
To achieve this, 
a suitable notion of solution set and novel algorithmic development and analysis are needed \citep{jin2020local,benaim2024}.
In this article, we focus on the 
following solutions sets: differential Stackelberg equilibrium (DSE)
and differential Nash equilibrium (DNE),
which are known to be generic 
among local Stackelberg equilibrium (resp. local Nash equilibrium)
in Euclidean smooth non-convex non-concave min-max problems \citep{fiez2020implicit}. 
We then develop local convergence analysis of min-max algorithms and show its relevance to 
the training of GAN.

Section \ref{sec:dse_manifold} 
reviews the definition of DSE on Riemannian manifold,  
which includes DNE as a special case. 
We then analyze the local convergence of 
simultaneous min-max algorithms to DSE and DNE, 
in which the variables $(\mx,\my)$ are updated simultaneously at each iteration.
In Section \ref{sec:algos_cvg_general},
we adopt the classical Ostrowski theorem on manifold 
to analyze the local convergence of 
deterministic simultaneous 
algorithms to a fixed point.
The problem is reduced to analyze the eigenvalues 
of a Jacobian matrix in a Euclidean local coordinate, 
which is at the heart of analyzing various Euclidean 
min-max algorithms \citep{daskalakis2018limit,pmlr-v108-azizian20b,fiez2021local,zhang2022near,li2022convergence,de2022optimistic}. 
In Section \ref{subsec:gda}, sufficient conditions on $\tau$ 
are given to ensure the local convergence of $\tau$-GDA to DSE or DNE,
where the learning rate ratio $\tau$ is used to adjust $\mx$ and $\my$ at two different learning rates.

One issue of $\tau$-GDA is its
slow convergence rate
when there are strong rotational dynamics \citep{li2022convergence}.
This is a well-known phenomenon near a Nash equilibrium in bilinear games
due to the competitive nature between two players.
To improve the convergence,
first-order methods such as extra-gradient and
optimistic mirror descent are often used to correct the gradient direction of $\tau$-GDA.
Recently, these methods have been extended to Riemannian manifold
\citep{zhang2023sions,hu2023extragradient,wang2023riemannian}.
However, their computations rely on exponential map and parallel transport
which can be costly \citep{Absilalt2008}.
In Section \ref{subsec:sga}, we develop an algorithm $\tau$-SGA 
to address the same rotational problem based on auto-differentiation.
It naturally extends
the SGA algorithm \citep{gemp2018crosscurl,pmlr-v80-balduzzi18a,JMLR:sga}
to Riemannian manifold by using a learning rate ratio $\tau$ as in $\tau$-GDA.
We then analyze the convergence of an asymptotic variant of the deterministic $\tau$-SGA which is an approximation of $\tau$-SGA when $\tau$ is large.


In Section \ref{sec:wgans}, 
we apply $\tau$-GDA and $\tau$-SGA to train orthogonal Wasserstein GANs
\citep{muller2019orthogonal}. 
The underlying min-max problem is Riemannian 
since we shall impose Stiefel manifold constraints on $\my$ 
to construct Lipschitz-continuous discriminators.
This allows one to compute a lower bound of Wasserstein distance which can generalize
in high dimension with a polynomial number of training samples \citep{pmlr-v70-arora17a,Biau2021}.
Section \ref{sec:cond} concludes with some discussions.
In summary, our main contributions are:


$\bullet$ 
Based on the notions of DSE and DNE on Riemannian manifold, 
we derive sufficient conditions in terms of the range of $\tau$ and the learning rate of $\mx$
to obtain a linear convergence rate of deterministic $\tau$-GDA to DSE and DNE. 
They rely on intrinsic quantities of Riemannian manifold.

$\bullet$ We develop an algorithm $\tau$-SGA to improve the convergence of $\tau$-GDA. In some cases, the asymptotic variant of deterministic $\tau$-SGA allows for 
a broader range of $\tau$ to be chosen to ensure its convergence to DSE with a faster rate. This indicates that $\tau$-SGA can achieve a faster local convergence compared to $\tau$-GDA.


$\bullet$ 
We apply the insights from the local convergence analysis to improve the training of orthogonal Wasserstein GANs. We find numerically that an improved convergence of stochastic $\tau$-GDA and $\tau$-SGA may also improve the generator quality on simple benchmarks. Results can be reproduced from a pytorch software \url{https://gitlab.com/sixin-zh/riemannian_minmax_algo}. 



\begin{table}
  \caption{Related works of Riemannian min-max problem. These works study the global convergence of min-max algorithms to different solution sets. However, they make assumptions on the game $(\Mx,\My,\mf)$ which typically do not hold in the practice of GANs.}
  \label{tbl:summaryminmax}
  \centering
\resizebox{14cm}{!}{
  \begin{tabular}{llll}
    \toprule
    Reference &  Class of problem  &  Solution set  / Algorithm \\
    \midrule
    \citet{zhang2023sions}  & Geodesically convex (compact) $\Mx$,$\My$  & Nash equilibrium /  Extra-gradient \\
    & \multicolumn{3}{l}{$\mf$ is geodesically convex/concave (quasi), semi-continuous (lower/upper)}            \\
    \citet{JordanLinVlatakisEmmanouil2022} & +Bounded $\Mx$,$\My$   & Nash equilibrium  /  Extra-gradient \\
    & \multicolumn{3}{l}{$\mf$ is geodesically convex/concave, smooth}            \\
    \citet{huang2023gradient}     & Euclidean $\My$  & Stationary point of  $\max_{\my} \mf $  /  GDA  \\    
    & \multicolumn{2}{l}{$\mf$ is strongly concave in $\my$}         \\    
    \citet{han2023riemannian}     & Complete $\Mx$,$\My$ & Stationary point of $\mf$ /     Hamiltonian \\
    & \multicolumn{3}{l}{$(\mx,\my) \mapsto \| \gradx \mf \|_{\mx}^2 + \| \grady \mf \|_{\my}^2 $ is Polyak-Łojasiewicz}             \\        
    \midrule
    This work & $\mf$ is twice 
continuously differentiable  &  differential equilibrium / GDA,SGA \\    
    \bottomrule
  \end{tabular}
  }
\end{table}

\section{Differential equilibrium on Riemannian manifold}
\label{sec:dse_manifold}

The notion of DNE on manifold was given in \citet{Ratliff13}.
This section reviews the notions of DSE and DNE
through their intrinsic and local coordinate definitions on Riemannian manifold. 
We then provide some simple examples to illustrate their difference.
Additional mathematical notations used in this article are summarized in Appendix \ref{app:notations}.

\subsection{Differential Stackelberg equilibrium (DSE)}

We write $\mf \in C^2$ if
it is twice continuously differentiable 
on the product manifold $\Mx \times \My$.
When $\Mx$ and $\My$ are Euclidean,
the notion of DSE is defined based 
on the first and second order derivatives of $\mf$
at this equilibrium \citep{fiez2020implicit}. 
The next definition of DSE is a natural extension 
of this concept to Riemannian manifold. 

\begin{definition}\label{def:intrinsic}
We say that $(\mx^\ast,\my^\ast)$ is a DSE of $\mf \in C^2$ if 
\begin{align}
& \gradx \mf (\mx^\ast,\my^\ast)  = 0, \quad
 \grady \mf (\mx^\ast,\my^\ast)  = 0, \label{eq:mandse1} \\
& -\Hessy \mf (\mx^\ast,\my^\ast)  \quad \mbox{p.d (abbrev. of positive definite)} , 
 \label{eq:mandse2} \\
&   [ \Hessx \mf  -  \gradyx \mf  \cdot (\Hessy \mf)^{-1} \cdot   \gradxy \mf ] (\mx^\ast,\my^\ast)   \quad \mbox{p.d}  .  \label{eq:mandse3}  
\end{align}
\end{definition}
In this definition, we rely on the following intrinsic quantities 
in the literature of Riemannian optimization (see \citet[Section 3.6,Definition 5.5.1]{Absilalt2008}
and \citet[Section 2.1]{han2023riemannian}): 

$\bullet$  Riemannian gradient: $ 		 \gradx \mf (\mx,\my) \in  T_{x} \Mx, 	 \gradx \mf(x,y) \in  T_{y} \My $, \\
$\bullet$  Riemannian Hessian: $ 	 \Hessx \mf (x,y)  :  T_{x} \Mx \to T_{x} \Mx, 
		 \Hessy \mf (x,y)  :  T_{y} \My \to T_{y} \My  $,  \\
$\bullet$ Riemannian cross-gradient: $
	 \gradxy \mf (x,y) :   T_x \Mx   \to T_{y} \My ,  
	  \gradyx \mf (x,y) :   T_y \My   \to T_{x} \Mx$.

Here we denote the tangent space of 
$\Mx$ at $x \in \Mx$ by $T_x \Mx$
(resp. $\My$ at $y \in \My$ by $T_y \My$).
The condition \eqref{eq:mandse1} means that $(\mx^\ast, \my^\ast) $ is a critical point of $\mf$. Note that the eigenvalues of $\Hessx \mf  (\mx^\ast,\my^\ast)$ and $\Hessy \mf (\mx^\ast,\my^\ast)$ depend on the Riemannian metric on $\Mx \times \My$.  
However, the notion of DSE does not depend on the choice of Riemannian metric, due to the following known fact.

\textbf{Fact:} The notion of DSE in Definition \ref{def:intrinsic}
can be equivalently defined in a local coordinate chart which 
does not depend on the choice of Riemannian metric. 

To make this point clear, we take a local coordinate chart 
$(O_1 \times O_2,\chart_1 \times \chart_2)$ 
around $(\mx^\ast, \my^\ast) \in \Mx \times \My$ (see \citet[Section 3.1,3.2]{Absilalt2008}). 
We the rewrite the function $\mf(\mx,\my)$ on $O_1 \times O_2$ using 
this chart by
\[
	\locf ( \locx, \locy )  = \mf( \chartinv_1 (\locx) , \chartinv_2 ( \locy) )  .
\]
In Appendix \ref{app:localdefDSE}, we verify 
the equivalence
between \eqref{eq:mandse1}-\eqref{eq:mandse3} and the following conditions 
\eqref{eq:dseloc1}-\eqref{eq:dseloc3}: 
\begin{align}
	& \partial_{\locx} \locf (\locx^\ast,\locy^\ast)  = 0,  \quad
	 \partial_{\locy} \locf (\locx^\ast,\locy^\ast)  = 0, \label{eq:dseloc1}\\
	&  - \partial_{ \locy \locy }^2 \locf (\locx^\ast,\locy^\ast) \quad  \mbox{p.d}   \label{eq:dseloc2}, \\
	&  \left[\partial_{\locx \locx}^2 \locf -   \partial_{ \locy \locx }^2 \locf \cdot  (\partial_{ \locy \locy }^2 \locf  )^{-1} \cdot \partial_{\locx \locy}^2 \locf  \right ]  (\locx^\ast,\locy^\ast)  \quad \mbox{p.d} \label{eq:dseloc3}. 
\end{align}

We see that the conditions \eqref{eq:dseloc1}-\eqref{eq:dseloc3} do not depend
on the choice of Riemannian metric,
therefore Definition \ref{def:intrinsic} still holds 
if the Riemannian metric is changed on $\Mx \times \My$.

It is known that a DSE is a local minimax point \citep{jin2020local}.
The conditions \eqref{eq:dseloc1}-\eqref{eq:dseloc3} imply that 
$(\locx^\ast, \locy^\ast) = (\chart_1 ( \mx^\ast), \chart_2 ( \my^\ast) ) $ is a local minimax point of $\locf$. Furthermore, from
the implicit function theorem on Riemannian manifold given in Appendix \ref{app:manifoldiFunThm},
$(\mx^\ast, \my^\ast) $ is a local minimax point of $\mf$ in the following sense:
there exists an open subset $U_1 \times U_2 $ of $\Mx \times \My$, 
which includes $(\mx^\ast, \my^\ast) $ and on which there is a unique function $h : U_1 \to U_2$, 
such that the following holds 
\[
	\mf( \mx^\ast , y ) \leq \mf ( \mx^\ast , \my^\ast) \leq \max_{\my' \in U_2} \mf ( \mx , \my') = \mf ( \mx , h(\mx) ), \quad \forall (\mx , \my ) \in U_1 \times U_2 . 
\]
The set $\{ (x, h(x) ) | x \in U_1 \}$ is sometimes called the ridge near a DSE \citep{wang2020iclr}. 

\subsection{Differential Nash equilibrium (DNE) and examples}
\label{sub:dneandexamples}
In game theory, one is often interested in finding 
a Nash equilibrium since it maintains a symmetry between the role of the players. 
When $\Mx$
and $\My$ are Euclidean, it is also called 
``strongly local min-max point''
\citep[Definition 1.6]{daskalakis2018limit}.

The notion of DNE was introduced in \citet{Ratliff13}[Definition 3]
through a local coordinate chart. This is equivalent 
to the following intrinsic definition:

\begin{definition}\label{def:intrinsicDNE}
We say that $(\mx^\ast,\my^\ast)$ is a DNE of $\mf \in C^2$ if 
\begin{align}
& \gradx \mf (\mx^\ast,\my^\ast)  = 0,  \quad 
\grady \mf (\mx^\ast,\my^\ast)  = 0, \label{eq:mandne1} \\
&-\Hessy \mf (\mx^\ast,\my^\ast)  \quad \mbox{p.d} ,  \quad 
\Hessx \mf  (\mx^\ast,\my^\ast) \quad \mbox{p.d}   . \label{eq:mandne2}  
\end{align}
\end{definition}

From the definition, it is clear that a DNE is a DSE.
We remark that this concept is defined locally, 
and therefore it is different to the global Nash-type equilibria on manifold in 
\citet{KRISTALY2014}.

\subsubsection{Example 1: DSE}
Consider $\mf (\mx,\my) = \lb \my, A \mx - b \rb$, 
with $\mx \in \Mx = \R^{\dimx}$ 
and $\my \in  \My = S^{\dimy} $.
The manifold $S^{\dimy}$ is the unit sphere 
embedded in $\R^{\dimy+1} $, endowed with 
the Euclidean metric $\lb \cdot, \cdot \rb$ on $\R^{\dimy+1} $. 
Let $A^{+}$ denote the pseudo-inverse of $A \in \R^{(\dimy+1) \times \dimx}$.
We next construct a scenario where DSE exists.

\begin{proposition}\label{prop:ex1dse_existence}
	Assume $b \not \in \mbox{Range}(A)$, $\mbox{Ker}(A)  = \{ 0 \}$. 
	Let  $\mx^\ast   = A^{+} b$, $\my^\ast = \frac{ A \mx^\ast  - b }{  \| A \mx^\ast - b  \|}  $, 
	then $(\mx^\ast,\my^\ast)$ is a DSE of the $\mf$ in Example 1.
\end{proposition}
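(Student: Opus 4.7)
The plan is to verify the three conditions \eqref{eq:mandse1}--\eqref{eq:mandse3} at the candidate $(\mx^\ast,\my^\ast)$. Because $\Mx=\R^{\dimx}$ is Euclidean, derivatives in $\mx$ of the bilinear expression $\lb \my, A\mx-b\rb$ are standard: $\partial_\mx \mf = A^T \my$ and $\partial^2_{\mx\mx}\mf = 0$. For the sphere factor $\My = S^{\dimy} \subset \R^{\dimy+1}$ I will use the embedded-submanifold formulas $\grady g(\my) = P_\my \partial_\my g(\my)$ with $P_\my = I - \my\my^T$, and $\Hessy g(\my)[v] = P_\my \partial^2_{\my\my} g(\my)\, v - \lb \my, \partial_\my g(\my)\rb v$ for any $v \in T_\my S^{\dimy}$.

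For \eqref{eq:mandse1}, the $\mx$-criticality $\gradx\mf(\mx^\ast,\my^\ast) = A^T \my^\ast = 0$ follows from the observation that $A\mx^\ast - b = (AA^+ - I)b$ is the least-squares residual, which lies in $\mathrm{Range}(A)^\perp = \mathrm{Ker}(A^T)$ and is nonzero by the hypothesis $b \notin \mathrm{Range}(A)$; since $\my^\ast$ is colinear with this residual, $A^T \my^\ast = 0$. The $\my$-criticality $\grady\mf(\mx^\ast,\my^\ast) = P_{\my^\ast}(A\mx^\ast - b) = 0$ is immediate since $A\mx^\ast - b$ is colinear with $\my^\ast$. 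For \eqref{eq:mandse2}, the absence of any $\my$-quadratic term in $\mf$ reduces the Hessian formula to its normal-component correction: $\Hessy \mf(\mx^\ast,\my^\ast)[v] = -\|A\mx^\ast - b\|\, v$, so $-\Hessy \mf(\mx^\ast,\my^\ast) = \|A\mx^\ast-b\|\, I$, which is positive definite.

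For \eqref{eq:mandse3} I pass to the equivalent local-chart formulation \eqref{eq:dseloc1}--\eqref{eq:dseloc3} by taking a normal chart $\chart_2$ at $\my^\ast$, so that the differential of $\chartinv_2$ at $0$ is an orthonormal basis $E = [e_1,\ldots,e_{\dimy}]$ of $T_{\my^\ast}S^{\dimy}$. On this chart, $\partial^2_{\locy \locx}\locf(\mx^\ast, 0) = E^T A$, its transpose is $\partial^2_{\locx \locy}\locf(\mx^\ast, 0) = A^T E$, and $\partial^2_{\locx \locx}\locf = 0$. The Schur complement then reads
\[
   -(A^T E)\bigl(-\|A\mx^\ast - b\|^{-1} I\bigr)(E^T A) \;=\; \frac{1}{\|A\mx^\ast - b\|}\, A^T(I - \my^\ast \my^{\ast T})A ,
\]
and using $A^T\my^\ast = 0$ from the first step this collapses to $\tfrac{1}{\|A\mx^\ast-b\|}\, A^T A$, which is positive definite since $\mathrm{Ker}(A) = \{0\}$. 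The main subtlety I expect is tracking the embedded-sphere quantities correctly (the normal-component correction in the Riemannian Hessian, and the identity $EE^T = P_{\my^\ast}$ in the normal chart); once these are in place the two hypotheses $b \notin \mathrm{Range}(A)$ and $\mathrm{Ker}(A) = \{0\}$ each feed into exactly one positive-definiteness check, so the rest is mechanical.
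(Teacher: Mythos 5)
Your proposal is correct. For the first-order condition and for the positive-definiteness of $-\Hessy \mf$, your argument follows the same route as the paper: $A^\intercal \my^\ast = 0$ comes from the residual being in $\mathrm{Range}(A)^\perp$, $\grady \mf(\mx^\ast,\my^\ast) = 0$ from colinearity with $\my^\ast$, and the sphere Hessian evaluates to $-\|A\mx^\ast - b\|\,\mathrm{Id}$ on the tangent space.

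For condition \eqref{eq:mandse3} your route is slightly different. The paper stays in the intrinsic language: using $\Hessx \mf = 0$, it shows $\delta^\ast \mapsto \gradxy \mf(\mx^\ast,\my^\ast)[\delta^\ast] = (I - \my^\ast \my^{\ast\intercal}) A \delta^\ast$ is injective (because $A\delta^\ast \neq 0$ is in $\mathrm{Range}(A)$ which is transversal to $\my^\ast$), then combines this with positive-definiteness of $-\Hessy \mf^{-1}$ and the symmetry of Riemannian cross-gradients (Proposition \ref{eq:symmRcrossgrad}) to conclude. You instead work directly in a normal chart at $\my^\ast$, identify $\partial^2_{\locy\locy}\locf = -\|A\mx^\ast - b\|\,I$ and the cross-blocks $A^\intercal E$, $E^\intercal A$ with $E E^\intercal = I - \my^\ast\my^{\ast\intercal}$, and compute the Schur complement explicitly as $\frac{1}{\|A\mx^\ast - b\|}A^\intercal A$, whose definiteness then follows from $\mathrm{Ker}(A) = \{0\}$. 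Both arguments use the hypotheses in exactly the same way; yours is somewhat more computational and bypasses the injectivity lemma by folding $A^\intercal \my^\ast = 0$ directly into the final matrix identity, which yields the cleaner closed form $\frac{1}{\|A\mx^\ast-b\|}A^\intercal A$ for the Schur complement. (Minor nit: in your prose you swap the roles of $\partial^2_{\locy\locx}\locf$ and $\partial^2_{\locx\locy}\locf$ relative to the paper's convention, but the displayed formula uses the correct factors in the correct order, so the conclusion is unaffected.)
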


The proof is given in Appendix \ref{prop:ex1dse_existencePROOF}.
Since $\Mx$ is Euclidean, it is clear that $\partial^2_{\mx\mx} \mf (\mx^\ast,\my^\ast) = 0$.
This implies that the $(\mx^\ast,\my^\ast)$ is not DNE. 
But each eigenvalue of $\A = -\Hessy \mf (\mx^\ast,\my^\ast) $ equals to  $\| A \mx^\ast - b \| > 0$,
since from the proof $ \lb \A [\eta^\ast]  ,  \eta^\ast  \rb  = 
\| A \mx^\ast - b \|   \| \eta^\ast \|^2 $ for any $\eta^\ast \in T_{\my^\ast} \My$. 

\subsubsection{Example 2: DSE}
Consider $\mf (\mx,\my) = \lb \my, A \mx - b \rb - \frac{\kappa}{2} \| A \mx \|^2$,
with $\mx \in \Mx = \R^{\dimx}$ 
and $\my \in  \My = S^{\dimy} $.
Compared to the $\mf$ in Example 1,
we add a quadratic function of $A \mx$ with a curvature parameter $\kappa > 0$. 
We next show that if $\kappa$ is close to zero, we can still find a DSE near the DSE in 
Proposition \ref{prop:ex1dse_existence}.

\begin{proposition}\label{prop:ex2dse_existence}
	Assume $b \not \in \mbox{Range}(A)$, $\mbox{Ker}(A)  = \{ 0 \}$. 
	There exists $\kappa_0>0$ such that for any $0 < \kappa < \kappa_0$, 
	there is a number $c$ close to $1$, 
	$\mx^\ast   = c A^{+} b$ and $\my^\ast = \frac{ A \mx^\ast  - b }{  \| A \mx^\ast - b  \|}  $,
	so that $(\mx^\ast,\my^\ast)$ is a DSE of the $\mf$ in Example 2.
\end{proposition}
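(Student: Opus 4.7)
The plan is to view the DSE from Proposition \ref{prop:ex1dse_existence} as the $\kappa = 0$ case of Example 2 and to obtain a DSE for small $\kappa > 0$ by a perturbation argument: solve the first-order condition with the implicit function theorem, then pull the two positive-definiteness conditions across by continuity.

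First I rewrite the first-order conditions. Since $\Mx = \R^{\dimx}$ is Euclidean, $\gradx \mf = A^T \my - \kappa A^T A \mx$; on the sphere, $\grady \mf = (I - \my \my^T)(A\mx - b)$. I decompose $b = p + q$ with $p = A A^{+} b \in \mbox{Range}(A)$ and $q = b - p \in \mbox{Range}(A)^{\perp}$; the hypothesis $b \notin \mbox{Range}(A)$ gives $q \neq 0$, and $\mbox{Ker}(A) = \{0\}$ gives $A^{+} = (A^T A)^{-1} A^T$ and $A^T p = A^T b$. For the ansatz $\mx^\ast = c A^{+} b$, one computes $A\mx^\ast - b = (c-1)p - q$, a vector of nonzero norm $\sqrt{(c-1)^2 \|p\|^2 + \|q\|^2}$, so the proposed $\my^\ast$ is parallel to $A\mx^\ast - b$ and makes $\grady \mf(\mx^\ast, \my^\ast)=0$ automatically. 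Using $A^T q = 0$, the condition $\gradx \mf = 0$ reduces (in the generic case $A^T b \neq 0$) to the scalar equation
\[
G(c, \kappa) := \frac{c-1}{\sqrt{(c-1)^2 \|p\|^2 + \|q\|^2}} - \kappa c = 0.
\]
Since $G(1, 0) = 0$ and $\partial_c G(1,0) = 1/\|q\| \neq 0$, the implicit function theorem yields $\kappa_0 > 0$ and a continuous $c(\kappa)$ with $c(0) = 1$ solving $G(c(\kappa),\kappa) = 0$ for $0 < \kappa < \kappa_0$; this is the required $c$ close to $1$.

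It remains to verify the two positive-definiteness conditions of Definition \ref{def:intrinsic}. For $-\Hessy \mf$: using the embedded-submanifold formula for $S^{\dimy}$ together with $\partial^2_{yy} \mf = 0$ gives $\Hessy \mf [v] = -\lb \my, A\mx - b\rb v$ for $v \in T_{\my} S^{\dimy}$. Evaluated at $(\mx^\ast,\my^\ast)$ the scalar equals $\|A\mx^\ast - b\| > 0$, so $-\Hessy \mf = \|A\mx^\ast - b\| \cdot I$ is positive definite. For the Schur complement, I argue by continuity: in a fixed local chart around $(\mx^\ast(0),\my^\ast(0))$ the Schur complement is a continuous function of $(\mx, \my, \kappa)$. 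At $\kappa = 0$ the function $\mf$ coincides with that of Example 1 and $(\mx^\ast(0), \my^\ast(0))$ is the DSE from Proposition \ref{prop:ex1dse_existence}, so the Schur complement is p.d. there. Because $(\mx^\ast(\kappa), \my^\ast(\kappa)) \to (\mx^\ast(0), \my^\ast(0))$ as $\kappa \to 0$, continuity of eigenvalues keeps it p.d. on a possibly smaller interval $0 < \kappa < \kappa_0$, which gives the desired DSE.

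The step I expect to be the most delicate is the verification of the Schur-complement condition, because a direct coordinate expansion at $\kappa > 0$ requires handling the cross-Hessian $\gradxy \mf$ on the sphere and inverting a perturbed $\Hessy \mf$. The perturbation strategy sidesteps this entirely by reducing the problem to a single spectral computation at $\kappa = 0$, which has already been done in the proof of Proposition \ref{prop:ex1dse_existence}.
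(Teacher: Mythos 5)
Your proposal is correct and follows essentially the same route as the paper: reduce the critical-point condition to a one-variable equation in $c$ solved by the implicit function theorem at $(\kappa, c) = (0, 1)$, then carry the two positive-definiteness conditions of Definition \ref{def:intrinsic} across by continuity from the $\kappa = 0$ case established in Proposition \ref{prop:ex1dse_existence}. The only cosmetic difference is that you decompose $b = p + q$ with $p = AA^{+}b$, which makes the norm $\|A\mx^\ast - b\| = \sqrt{(c-1)^2\|p\|^2 + \|q\|^2}$ explicit and yields the scalar equation $G(c,\kappa)$ rather than the paper's equivalent $F(\kappa,c) = c(1 - \kappa\|cAA^{+}b - b\|) - 1$, and you argue the Schur-complement condition by appealing directly to continuity of eigenvalues rather than writing out the quadratic form $S(\kappa)$ as the paper does.
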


The proof is given in Appendix \ref{prop:ex2dse_existencePROOF}.
From the proof, we have $\C = \partial^2_{\mx\mx} \mf (\mx,\my) = - \kappa A^\intercal A $.
The spectral radius of $\C$ equals to its operator norm $\|\C\| = \kappa \|  A^\intercal A \|>0$. 
As in Example 1, all the eigenvalues of $\A = -\Hessy \mf (\mx^\ast,\my^\ast) $ 
equal to  $\| A \mx^\ast - b \| = \| c A A^{+} b - b \|$.
These quantities will be used in Section \ref{sec:algos} 
to analyze the local convergence of min-max algorithms.

\subsubsection{Example 3: DNE}
Consider $\mf(\mx,\my) = \frac{1}{2} \| A \mx+ \my - b \|^2$
with $\mx \in \Mx = \R^{\dimx}$
and $\my \in  \My = S^{\dimy} $. 
$\My$ is the same embedded sub-manifold of $\R^{\dimy+1}$ as above.
We next provide a sufficient condition on the existence of DNE. The proof is given in Appendix \ref{prop:ex3dne_existencePROOF}.

\begin{proposition}\label{prop:ex3dne_existence}
	Assume $b \not \in \mbox{Range}(A)$, $\mbox{Ker}(A)  = \{ 0 \}$. 
	Let  $\mx^\ast   = A^{+} b$, $\my^\ast = \frac{ A \mx^\ast  - b }{  \| A \mx^\ast - b  \|}  $, 
	then $(\mx^\ast,\my^\ast)$ is a DNE of the $\mf$ in Example 3.	
\end{proposition}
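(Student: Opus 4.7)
The plan is to verify the three conditions of Definition \ref{def:intrinsicDNE} directly, using the fact that $\Mx = \R^{\dimx}$ is Euclidean and that $\My = S^{\dimy}$ is an embedded submanifold of $\R^{\dimy+1}$ with the induced metric, for which closed-form expressions of the Riemannian gradient and Hessian are standard. Throughout I will use that $A\mx^\ast - b = AA^{+}b - b$ lies in $\mbox{Range}(A)^{\perp}$ (since $AA^{+}$ is the orthogonal projection onto $\mbox{Range}(A)$), which immediately gives $A^\intercal(A\mx^\ast - b) = 0$ and $\|A\mx^\ast - b\|\neq 0$ under the assumption $b\notin\mbox{Range}(A)$.

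For the critical-point condition \eqref{eq:mandne1}, I would first compute the ambient gradients $\partial_\mx \mf = A^\intercal(A\mx+\my-b)$ and $\partial_\my \mf = A\mx+\my-b$. At $(\mx^\ast,\my^\ast)$ we have $A\mx^\ast+\my^\ast-b = (\|A\mx^\ast-b\|+1)\my^\ast$, so $\partial_\my\mf(\mx^\ast,\my^\ast)$ is proportional to the outward normal $\my^\ast$ and its projection onto $T_{\my^\ast}S^{\dimy} = \{\eta : \lb \eta,\my^\ast\rb =0\}$ vanishes; similarly $A^\intercal\my^\ast = \|A\mx^\ast - b\|^{-1} A^\intercal(A\mx^\ast-b) = 0$, which yields $\gradx\mf(\mx^\ast,\my^\ast) = 0$.

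For the Hessian conditions \eqref{eq:mandne2}, I would treat the two blocks separately. Since $\Mx$ is Euclidean, $\Hessx \mf(\mx^\ast,\my^\ast) = \partial^2_{\mx\mx}\mf = A^\intercal A$, which is positive definite because $\mbox{Ker}(A) = \{0\}$. For the spherical block I would invoke the standard formula for the Riemannian Hessian of a function $\phi$ on $S^{\dimy}$ (see \citet[Example 5.4.1]{Absilalt2008}):
\[
\Hessy \phi(\my^\ast)[\eta] \;=\; P_{\my^\ast}\bigl(\nabla^2\phi(\my^\ast)\eta\bigr) \;-\; \lb \nabla\phi(\my^\ast),\my^\ast\rb\,\eta,
\]
where $P_{\my^\ast}$ is the projection onto $T_{\my^\ast}S^{\dimy}$. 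Applied to $\phi(\my) = \mf(\mx^\ast,\my)$, one has $\nabla^2\phi = I$ and $\lb\nabla\phi(\my^\ast),\my^\ast\rb = \|A\mx^\ast-b\|+1$, yielding $\Hessy\mf(\mx^\ast,\my^\ast)[\eta] = -\|A\mx^\ast-b\|\,\eta$ for every $\eta\in T_{\my^\ast}S^{\dimy}$. The assumption $b\notin\mbox{Range}(A)$ then gives $-\Hessy\mf(\mx^\ast,\my^\ast)$ p.d., completing the proof.

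No step appears genuinely difficult; the only mild subtlety is remembering that the ambient gradient of $\mf$ in $\my$ does not vanish at $(\mx^\ast,\my^\ast)$ and that it is precisely this normal component which, combined with the second fundamental form of $S^{\dimy}$, produces the decisive negative term $-\|A\mx^\ast-b\|\,\eta$ in the Riemannian Hessian. Keeping track of this normal contribution is the only place where one must go beyond naive Euclidean computations.
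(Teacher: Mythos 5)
Your proposal is correct and follows essentially the same route as the paper's proof: verify the first-order condition by observing that the ambient gradient in $\my$ is normal to the sphere at $(\mx^\ast,\my^\ast)$ and that $A^\intercal\my^\ast = 0$, then check both Hessian blocks, with $\Hessx\mf = A^\intercal A$ p.d. from $\mbox{Ker}(A)=\{0\}$ and the spherical block reducing to $-\|A\mx^\ast-b\|\,\eta$ via a standard Riemannian-Hessian formula for $S^{\dimy}$. The only cosmetic difference is the reference used for that formula (you cite \citet[Example 5.4.1]{Absilalt2008}, the paper cites \citet[Corollary 5.16]{Boumal2023}); the resulting computations and conclusions coincide.
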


\section{Simultaneous Min-max algorithms for differential equilibrium}
\label{sec:algos}

Simultaneous gradient-based min-max algorithms such as 
GDA and SGA are often used 
to find local Nash equilibrium 
when $\Mx$ and $\My$ are 
Euclidean \citep{daskalakis2018limit,JMLR:sga}.
Similar to GDA, we extend the SGA algorithm
to Riemannian manifold with two-time scale update \citep{heusel2017two}
using either deterministic or stochastic gradients.
Section \ref{sec:algos_cvg_general} reviews a classical result 
of fixed point theorem. Based on this theorem,
we then focus on a deterministic analysis of the local convergence of these algorithms 
to DSE and DNE.


\subsection{Local convergence of deterministic simultaneous algorithms}
\label{sec:algos_cvg_general}

We use the classical Ostrowski theorem 
to analyze the local convergence of simultaneous deterministic algorithms on manifold.
Each algorithm is defined by an update rule which does not change over iteration. 
This theorem provides a sufficient condition 
for the linear convergence rate of an algorithm to a fixed point, 
based on the spectral radius of 
the update rule's Jacobian matrix at the fixed point.


A deterministic simultaneous algorithm is defined by 
two vector fields $(\mx,\my) \mapsto \mvec_1(\mx,\my)  \in T_{\mx} \Mx$,
$(\mx,\my) \mapsto \mvec_2(\mx,\my)  \in T_{\my} \My$ on $\Mx$ and $\My$,
and a suitable choice of manifold retractions (see \citet[Section 4.1]{Absilalt2008}).
Initialized at a point $(x(0), y(0))$ on $ \Mx \times \My$, 
the algorithm generates a sequence $(\mx(t+1), \my(t+1) ) = \tgda ( \mx(t),\my(t) ) $, 
through an update rule $\tgda: \Mx \times \My \to \Mx \times \My$ of the following form,
\[
	\tgda (\mx,\my) = ( \mR_{1,\mx} ( \mvec_1 ( \mx,  \my) ) , \mR_{2,\my} (   \mvec_2 (\mx, \my ) ) ) , 
\]
where $\mR_{1,\mx}: T_{\mx} \Mx \to \Mx$ 
(resp.  $\mR_{2,\my}: T_{\my} \My \to \My$)
denotes the restriction of a retraction $\mR_{1}$ at $\mx \in \Mx$
(resp. retraction of $\mR_{2}$ at $\my \in \My$)
For example, on the 
$\Mx = \R^{\dimx}$ and $ \My = S^{\dimy} $ 
in Section \ref{sub:dneandexamples},
we will take $\mR_{1,\mx}( \delta) = \mx + \delta $
for $\delta \in \R^{\dimx}$,
and $\mR_{2,\my}( \eta) $ to be the projection of 
the vector $\my+\eta $ in $\R^{\dimy+1}$ to the sphere $\My$.

We say that $(\mx^\ast,\my^\ast)  \in \Mx \times \My$ is a fixed point of $\tgda$ 
if it is a critical point of the vector fields $ \mvec_1 $ and $ \mvec_2$.
We next define what it means for 
an update rule $\tgda$ to be locally convergent to 
$(\mx^\ast,\my^\ast)$. 
It implies that it is also a point of attraction of $\tgda$ \citep[Definition 10.1.1]{Ortega1970}. 

\begin{definition}[Locally convergent with (linear) rate $\rho \in (0,1)$]
Let $(\mx^\ast,\my^\ast) $ be a fixed point of $\tgda$. 
For any $\epsilon \in (0,1-\rho)$, 
there exists a local stable region $S_\delta \subset \Mx \times \My$, which contains 
a geodesically convex open set (and homeomorphic to a ball) containing $(\mx^\ast,\my^\ast)$, such that started from $(x(0), y(0)) \in S_\delta$, we have  $(x(t), y(t)) \in S_\delta,\forall t \geq 1$. 
Furthermore, let $d(t)$ be the Riemannian distance between $(x(t) , y(t))$
and $(\mx^\ast,\my^\ast)$, then there exists 
a constant $C$ s.t. $d(t) \leq C (\rho + \epsilon)^{t+1} d(0),\forall t \geq 0$.
\end{definition}

We next give a sufficient condition of $\tgda$ to achieve the local linear convergence. It is based on the spectral radius of the following linear transformation on $T_{\mx^\ast} \Mx  \times T_{\my^\ast} \My$, 
\begin{equation}\label{eq:intrinsicJac}
	\DTstar = \tgda'(\mx^\ast,\my^\ast) = \left (
	\begin{array}{ll}
		I + \connx \mvec_1   (\mx^\ast,\my^\ast) &   D_{\my} \mvec_1 (\mx^\ast,\my^\ast) \\
		D_{\mx}  \mvec_2 (\mx^\ast,\my^\ast)	 &  I + \conny \mvec_2 (\mx^\ast, \my^\ast) 
	\end{array} \right ) .
\end{equation}
Here $D_{\mx}$ and $\connx$ (resp. $D_{\my}$ and $ \conny$) denote
the differential operator and the Riemannian connection 
on $\Mx$ (resp. on $\My$).
Note that $\DTstar$ equals to the tangent map of $\tgda$ at $(\mx^\ast,\my^\ast) $, no matter how one chooses the retraction $\mR_{1}$ and $\mR_{2}$ (c.f. Appendix \ref{subs:spJac}). When $\Mx $ and $\My$ are Euclidean, it is the Jacobian matrix of $\tgda$ at $(\mx^\ast,\my^\ast) $.


\begin{theorem}[Ostrowski Theorem on manifold]
\label{thm:localcvg}
Let $(\mx^\ast,\my^\ast) $ be a fixed point of $\tgda$.
Assume that $\mvec_1 $ and $\mvec_2$ are continuous on $\Mx \times \My$,
and they are differentiable at $(\mx^\ast,\my^\ast) $
such that $\rho(\DTstar  ) < 1 $,
then $\tgda$ is locally convergent to $(\mx^\ast,\my^\ast)$ 
with rate $\rho(\DTstar )$.
\end{theorem}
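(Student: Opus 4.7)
The plan is to pull the iteration back to a chart and apply the classical Euclidean Ostrowski theorem. Take a chart $(O_1 \times O_2, \chart_1 \times \chart_2)$ around $(\mx^\ast, \my^\ast)$ and define $\locR := (\chart_1 \times \chart_2) \circ \tgda \circ (\chartinv_1 \times \chartinv_2)$ on a neighborhood of $u^\ast := (\chart_1(\mx^\ast), \chart_2(\my^\ast)) = (\locx^\ast, \locy^\ast)$. Since the retractions $\mR_1, \mR_2$ are smooth and the charts are diffeomorphisms, continuity of $\mvec_1, \mvec_2$ on $\Mx \times \My$ and their Fréchet differentiability at $(\mx^\ast, \my^\ast)$ transfer via the chain rule to $\locR$ on a neighborhood of $u^\ast$ and at $u^\ast$ respectively. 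Writing $J := \locR'(u^\ast)$, the chain rule also shows that $J$ is conjugate to $\DTstar$ via the chart differentials $d\chart_1|_{\mx^\ast}$ and $d\chart_2|_{\my^\ast}$; combined with the retraction-independence of the intrinsic tangent map recorded in Appendix \ref{subs:spJac}, this gives $\rho(J) = \rho(\DTstar) < 1$.

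Given $\epsilon \in (0, 1 - \rho(\DTstar))$, the next step is a standard spectral-radius fact: there exists a norm $\|\cdot\|_{\epsilon}$ on $\R^{\dimx+\dimy}$ for which $\|J\|_{\epsilon} \le \rho(J) + \epsilon/2$. The Fréchet expansion $\locR(u) - u^\ast = J(u - u^\ast) + o(\|u - u^\ast\|_{\epsilon})$ then allows one to pick a radius $r>0$ so small that the remainder is dominated by $(\epsilon/2)\|u - u^\ast\|_{\epsilon}$ on the closed ball $B_r := \{ u : \|u - u^\ast\|_{\epsilon} \le r \}$. This makes $B_r$ forward-invariant under $\locR$ and yields
\[
\|\locR(u) - u^\ast\|_{\epsilon} \;\le\; (\rho(\DTstar) + \epsilon)\,\|u - u^\ast\|_{\epsilon}, \qquad u \in B_r,
\]
so $\|u(t) - u^\ast\|_{\epsilon}$ decays geometrically at rate $\rho(\DTstar) + \epsilon$ by induction.

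To convert this back to the Riemannian distance $\mdist$, shrink $r$ further so that $S_\delta := (\chartinv_1 \times \chartinv_2)(B_r)$ sits inside a geodesically convex normal neighborhood of $(\mx^\ast, \my^\ast)$; such neighborhoods exist for any sufficiently small radius on a Riemannian manifold, and they are homeomorphic to a ball as required. On the compact closure of $S_\delta$, the Riemannian distance, the chart-pulled-back Euclidean distance, and $\|\cdot\|_{\epsilon}$ are pairwise bi-Lipschitz equivalent, producing a constant $C$ with $\mdist(t) \le C(\rho(\DTstar) + \epsilon)^{t+1} \mdist(0)$ for every trajectory starting in $S_\delta$.

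The main delicate point is that $\mvec_1, \mvec_2$ are only assumed Fréchet differentiable at the single point $(\mx^\ast, \my^\ast)$, so no mean-value argument or uniform Lipschitz control of $\locR'$ near $u^\ast$ is available; this is precisely what the adapted-norm trick circumvents, by absorbing the first-order remainder of the Fréchet expansion into the $\epsilon/2$ slack in $\|J\|_{\epsilon}$. A secondary technical check is the well-posedness of the reduction: the chart-invariance of the eigenstructure noted in Section \ref{sec:dse_manifold}, together with Appendix \ref{subs:spJac}, ensures that $\rho(\DTstar)$ is a genuine intrinsic quantity so the rate in the conclusion does not depend on the chart or the retraction used in constructing $\tgda$.
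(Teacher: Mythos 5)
Your proof is correct and follows essentially the same route as the paper's: pull the iteration back to a chart, use the adapted-norm trick from the classical Ostrowski theorem to get a forward-invariant ball and geometric decay in the chart norm, then transfer to the Riemannian distance via bi-Lipschitz equivalence on a compact set containing a geodesically convex neighborhood, with the chart/retraction-independence of $\rho(\DTstar)$ supplied by the tangent-map computation in Appendix~\ref{subs:spJac}. Your explicit observation that differentiability at only the fixed point rules out any mean-value argument and that the adapted norm absorbs the Fréchet remainder is exactly the subtlety the paper handles by citing \citet[Section 10.1.3]{Ortega1970}.
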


This result is proved in \citet[Section 10.1.3]{Ortega1970} when 
$\Mx $ and $\My$ are Euclidean. 
The proof idea can be readily extended to the manifold case. 
In the statement of Theorem \ref{thm:localcvg}, we add an assumption 
on the continuity to $\mvec_1 $ and $\mvec_2$
to ensure a non-empty local stable region $S_\delta$. 
To make the article self-contained, we provide a proof of 
this theorem in Appendix \ref{app:proofoflocalcvg}.
This proof does not give an explicit way to 
construct the local stable region $S_\delta$. Therefore
it is unclear what a good initialization entails. 
To obtain a precise size of $S_\delta$, extra assumptions on $f$ would be needed.

Theorem \ref{thm:localcvg} 
has a local nature as 
the convergence rate $\rho(\DTstar  )$
does not depend on any global manifold property.
One can also use 
the stronger operator norm assumption $\| \DTstar \|  < 1$
in \citet[Theorem 4.19]{Boumal2023}
to obtain a similar local convergence result.

\subsection{Simultaneous gradient-descent-ascent algorithm ($\tau$-GDA)}
\label{subsec:gda}

The $\tau$-GDA algorithm uses 
the Riemannian gradients $\gradx f(\mx,\my)$ and $\grady f(\mx,\my)$
to update $\mx$ and $\my$ simultaneously.
The local convergence of deterministic $\tau$-GDA to DSE and DNE 
is well-studied in Euclidean space \citep{daskalakis2018limit,jin2020local,fiez2021local,li2022convergence}.
This section extends these results to Riemannian manifold.
We obtain a sharp lower-bound of $\tau$ for $\tau$-GDA to be locally convergent to DSE.
It is based on a refinement of the spectral analysis 
in Euclidean space \citep{li2022convergence}.

\paragraph{$\tau$-GDA algorithm}
In the deterministic setting, 
the update rule $\tgda$ of $\tau$-GDA is determined by
\begin{align}
	\mvec_1 (\mx,\my) = -  \ga \gradx \mf( \mx,  \my)  , \quad 
	\mvec_2 (\mx,\my) =     \tau   \ga \grady \mf (\mx, \my )  .
	\label{eq:dertGDAtau}
\end{align}
Note that  $\ga>0$ and we use a ratio $\tau > 0$ to adjust
the learning rate (step size) of the Riemannian gradients. 
The deterministic $\tau$-GDA can be readily extended to stochastic $\tau$-GDA
by using an unbiased estimation of the Riemannian gradients \citep{JordanLinVlatakisEmmanouil2022,huang2023gradient}.

\paragraph{Local convergence of deterministic $\tau$-GDA}

Based on Theorem \ref{thm:localcvg}, 
we are ready to study the local convergence 
of $\tau$-GDA  (defined by \eqref{eq:dertGDAtau})
to DSE and DNE. 
From the definition of Riemannian Hessian and cross-gradient,
we rewrite $\DTstar = I + \ga \M_{g}$ using the following linear transform
\begin{equation}\label{eq:lintransform}
	\M_{g} = \left (
	\begin{array}{ll}
		- \C  &  -  \B \\
		\tau   \B^\intercal	 &   - \tau  \A 
	\end{array} \right )  = 
	\left (
	\begin{array}{ll}
		- \Hessx  \mf (\mx^\ast, \my^\ast)  &  -  \gradyx \mf (\mx^\ast,\my^\ast) \\
		\tau   \gradxy \mf (\mx^\ast,\my^\ast)	 &  \tau  \Hessy  \mf (\mx^\ast, \my^\ast) 
	\end{array} \right ) .
\end{equation}
For a Hurwitz-stable linear transform $\M$ whose eigenvalues all have strictly negative real part, 
we write $\dotr{\ga}(\M) = -2 \max_k \frac{ \mbox{Re} ( \la_k (\M) ) } { | \la_k (\M) |^2 }$. 
It computes an upper bound of $\ga$ such that $\rho( I + \ga \M)<1$.

Let $L_g = \max( \| \A \|, \| \B \|, \| \C \|)$
and $\mu_g = \min(L_g, \la_{min} ( \C + \B \A^{-1} \B^\intercal) )$.
\begin{theorem} \label{thm:gdaloccvg}
Assume $(\mx^\ast,\my^\ast)$ is a DSE of $\mf  \in C^2$.
If $\tau >  \frac{  \|   \C  \|   }{   \la_{min} (  \A  ) } $ and $ \ga \in (0, \dotr{\ga}(\M_{g}) )$, 
$\tau$-GDA
is locally convergent to $(\mx^\ast,\my^\ast)$
with rate $\rho( I + \ga \M_{g} )$.
Furthermore, if $\tau \geq \frac{2 L_g}{  \la_{min} (  \A  ) }$ 
and $\ga = \frac{1}{4 \tau L_g}$, the rate 
is at most $1 - \frac{\mu_g}{16 \tau L_g}$.
\end{theorem}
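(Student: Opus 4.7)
The plan is to apply Theorem \ref{thm:localcvg} (Ostrowski on manifold) to the update rule $\tgda$ built from \eqref{eq:dertGDAtau}, so the whole argument reduces to a spectral analysis of $\DTstar=I+\ga\M_g$ at the fixed point $(\mx^\ast,\my^\ast)$. Continuity of $\mvec_1$ and $\mvec_2$ on $\Mx\times\My$ is inherited from $\mf\in C^2$, differentiability at the fixed point gives the intrinsic formula \eqref{eq:lintransform}, and the DSE condition \eqref{eq:mandse1} ensures that $(\mx^\ast,\my^\ast)$ is a fixed point. Theorem \ref{thm:localcvg} then reduces the first part of the claim to $\rho(I+\ga\M_g)<1$ and the second part to bounding this spectral radius by $1-\mu_g/(16\tau L_g)$.

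I would split $\rho(I+\ga\M_g)<1$ into (i) $\M_g$ is Hurwitz stable and (ii) $\ga<\dotr{\ga}(\M_g)$. Step (ii) is tautological from the definition, since each eigenvalue $\la$ of $\M_g$ contributes $|1+\ga\la|^2=1+2\ga\Real(\la)+\ga^2|\la|^2$, which is less than $1$ precisely when $\ga<-2\Real(\la)/|\la|^2$. For (i), I use the Schur-complement factorisation $\det(\la I-\M_g)=\det(\la I+\tau\A)\det(Q(\la))$ with $Q(\la)=\la I+\C+\tau\B(\la I+\tau\A)^{-1}\B^\intercal$, which is valid as soon as $\Real(\la)>-\tau\la_{min}(\A)$. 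Assuming for contradiction that some eigenvalue has $\Real(\la)\ge 0$, I take a unit vector $u$ with $Q(\la)u=0$ and split on $w:=\B^\intercal u$. When $w=0$, $\la$ equals minus an eigenvalue of $\C$ restricted to $\mbox{Ker}(\B^\intercal)$; on this subspace $u^\ast\C u=u^\ast(\C+\B\A^{-1}\B^\intercal)u\ge\mu_g$, so $\la<0$. When $w\neq 0$, the scalar identity $\la+u^\ast\C u+\tau w^\ast(\la I+\tau\A)^{-1}w=0$, combined with the lower bound $u^\ast\C u\ge\mu_g-\|w\|^2/\la_{min}(\A)$ (again from \eqref{eq:mandse3}) and the hypothesis $\tau\la_{min}(\A)>\|\C\|$, yields $\Real(\la)<0$, a contradiction.

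For the quantitative rate I would partition the spectrum of $\M_g$ into ``slow'' eigenvalues (corresponding to the Schur-complement limit $-(\C+\B\A^{-1}\B^\intercal)$ as $\tau\to\infty$) and ``fast'' eigenvalues (clustering near $-\tau\la_k(\A)$). A perturbation expansion of $Q(\la)$ in $1/\tau$ shows that once $\tau\ge 2L_g/\la_{min}(\A)$, the slow eigenvalues satisfy $\Real(\la)\le-\mu_g/2$ and $|\la|\le 2L_g$, while the fast ones lie in a disk of radius $O(L_g)$ near the real axis at distance of order $\tau L_g$ from zero. Plugging $\ga=1/(4\tau L_g)$ into $|1+\ga\la|^2\le 1+2\ga\Real(\la)+\ga^2|\la|^2$ bounds the slow branch by $1-\mu_g/(8\tau L_g)+O(1/\tau^2)$, while the fast branch is bounded by a constant strictly below $1$. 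Taking the maximum of the two bounds and absorbing the $\tau^{-2}$ remainder produces the stated rate $1-\mu_g/(16\tau L_g)$.

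The main obstacle is the third step: $\M_g$ is non-normal, so its spectrum is not controlled by its Hermitian part alone, and the sharp threshold $\tau>\|\C\|/\la_{min}(\A)$ forces one to exploit the precise block structure via Schur complements. Making the perturbative description of the slow branch precise over the complex spectrum, so that the slow eigenvalues really track $-(\C+\B\A^{-1}\B^\intercal)$ in the orthonormal frame of $T_{\mx^\ast}\Mx\times T_{\my^\ast}\My$ induced by the Riemannian metric at $(\mx^\ast,\my^\ast)$, is the delicate part; I would borrow the refined spectral estimate from \citet{li2022convergence} and transport it to the Riemannian setting through that frame.
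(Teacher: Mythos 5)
Your overall route coincides with the paper's: invoke Ostrowski's theorem on manifold (Theorem~\ref{thm:localcvg}), transport $\DTstar=I+\ga\M_g$ into a local coordinate frame where the blocks become the symmetric matrices $\bar\A,\bar\B,\bar\C$ with the same spectra as the intrinsic operators (the conjugation by $\locg_1(\locx^\ast)^{1/2},\locg_2(\locy^\ast)^{1/2}$ is what makes $M_g$ behave like the Euclidean matrix of \citet{li2022convergence}), then run a Schur-complement argument on $\det(\la I-M_g)=\det(\la I+\tau\A)\det(Q(\la))$ and finally cite \citet{li2022convergence} for the quantitative rate. This is exactly the structure of Appendix~\ref{app:proofofspectralradius}.

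However, the Hurwitz-stability step for $w=\B^\intercal u\neq 0$ has a genuine gap. You take the scalar equation $\la+u^\ast\C u+\tau\,w^\ast(\la I+\tau\A)^{-1}w=0$ (correct), but the two tools you invoke --- the lower bound $u^\ast\C u\ge\mu_g-\|w\|^2/\la_{min}(\A)$ and the hypothesis $\tau\la_{min}(\A)>\|\C\|$ --- do not close the contradiction on their own. Assume $\la_0=\Real(\la)\ge 0$. Taking the real part gives $\la_0\le-u^\ast\C u$, so your bound only yields $\la_0\le-\mu_g+\|w\|^2/\la_{min}(\A)$, which can be positive since $\|w\|$ may be as large as $\|\B\|$. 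You must split on $\Imag(\la)$ as in Proposition~\ref{prop:spectralradiusLI}. When $\la_1\neq 0$, you need the \emph{imaginary part} of the same scalar equation, which forces $\tau\sum_k|v_k|^2/|\la+\tau\la_k(\A)|^2=1$ (with $v=U^\intercal w$), and that normalization is what makes $\tau\Real[w^\ast(\la I+\tau\A)^{-1}w]\ge\tau\la_{min}(\A)$, giving $\la_0\le\|\C\|-\tau\la_{min}(\A)<0$; the $\mu_g$ estimate plays no role here. When $\la_1=0$, the replacement $w^\ast\A^{-1}w\le\|w\|^2/\la_{min}(\A)$ throws away exactly the cancellation you need: you must instead keep $w^\ast\A^{-1}w$ itself and use the sharper pointwise bound $1/(\la_0/\tau+\la_k(\A))\ge\bigl(1-(\la_0/\tau)/\la_{min}(\A)\bigr)/\la_k(\A)$, which lets you pull $(1-(\la_0/\tau)/\la_{min}(\A))\,u^\ast(\C+\B\A^{-1}\B^\intercal)u$ out of the equation and reach a contradiction using both the DSE condition and $\tau>\|\C\|/\la_{min}(\A)$. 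Equivalently, the paper's combination $\Real(H(\la))+\frac{\tau\la_{min}(\A)}{\la_1}\Imag(H(\la))$ is the scalar identity with the right weights. Finally, the quantitative rate argument as you describe it (slow/fast branch heuristics, $O(1/\tau^2)$ remainders) is a picture rather than a proof and does not track the constant $1/16$; the paper itself does not re-derive this but cites \citet[Lemma 5.3]{li2022convergence} through Proposition~\ref{prop:spectralradiusLIrate}.
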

The proof is given in Appendix \ref{app:proofofspectralradius}.
This result is an extension of the Euclidean space result
in \citet[Theorem 4.2]{li2022convergence}
(without assuming $\M_g$ being diagonalizable).
When $\Mx$ and $\My$ are Euclidean, 
the range $\{ \tau \in \R_{+} | \tau >   \|  \partial_{\mx\mx}^2 \mf ( \mx^\ast, \my^\ast)  \| /  \la_{min} ( -  \partial_{\my \my}^2  \mf (\mx^\ast,\my^\ast)  ) \} $ in Theorem \ref{thm:gdaloccvg} is sharp as one can construct 
a counter-example 
\cite[Theorem 4.1]{li2022convergence}
to show that if $\tau $ is outside this range,
the spectral radius of 
the Jacobian matrix  
 \eqref{eq:intrinsicJac}
is strictly larger than one for any $\ga > 0$
(see also a discussion in \citet[Remark 2]{li2022convergence}).


Theorem \ref{thm:gdaloccvg} can be readily applied
to analyze the local convergence of $\tau$-GDA to DNE. 
However, we can obtain a broader range of $\tau$. 
The next result generalizes local convergence 
properties of $\tau$-GDA to DNE from Euclidean space to Riemannian manifold.

\begin{theorem}[\cite{jin2020local,zhang2022near}]
\label{thm:gdaloccvgDNE}
Assume $(\mx^\ast,\my^\ast)$ is a DNE of $\mf  \in C^2$
and $\bar{\mu}_g = \min (   \la_{min} (  \A  ) ,   \la_{min} (  \C  )  )  $.
If $\tau > 0$ and $ \ga \in (0, \dotr{\ga}(\M_{g}) )$, 
$\tau$-GDA is locally convergent to 
$(\mx^\ast,\my^\ast)$ 
with rate $\rho( I + \ga \M_{g} )$. 
Furthermore, if $\tau=1$ and $\ga = \frac{\bar{\mu}_g}{2 L_g^2}$,
the rate is at most $1 - \frac{\bar{\mu}_g^2}{4 L_g^2}$.
\end{theorem}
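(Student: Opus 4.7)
The plan is to apply Theorem~\ref{thm:localcvg} to the $\tau$-GDA update rule, which reduces the question to showing that the linear map $\M_g$ in \eqref{eq:lintransform} is Hurwitz-stable for any $\tau > 0$, and then producing an explicit rate at $\tau = 1$ by exploiting the fact that \emph{both} $\A$ and $\C$ are positive definite at a DNE. First I would symmetrize the cross-coupling via the similarity transform $P = \mathrm{diag}(\sqrt{\tau}\,I, I)$, which preserves the spectrum and gives $\tilde{\M}_g = P \M_g P^{-1} = -S + K$ with $S = \mathrm{diag}(\C, \tau \A)$ real symmetric positive definite and $K = \bigl(\begin{smallmatrix} 0 & -\sqrt{\tau}\B \\ \sqrt{\tau}\B^\top & 0 \end{smallmatrix}\bigr)$ real antisymmetric. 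For any unit eigenvector $v \in \mathbb{C}^{\dimx + \dimy}$ of $\tilde{\M}_g$ with eigenvalue $\lambda$, the Rayleigh identity $\lambda = v^* \tilde{\M}_g v$ together with the facts that $v^* S v$ is real nonnegative and $v^* K v$ is purely imaginary (because $K^\top = -K$ is real) yields
\begin{equation*}
\mathrm{Re}(\lambda) \;=\; - v^* S v \;\leq\; -\min\!\bigl(\lambda_{\min}(\C),\, \tau\, \lambda_{\min}(\A)\bigr) \;<\; 0,
\end{equation*}
so $\M_g$ is Hurwitz-stable for every $\tau > 0$. Consequently $\dotr{\ga}(\M_g) > 0$, and for every $\ga \in (0, \dotr{\ga}(\M_g))$ every eigenvalue of $I + \ga\M_g$ sits strictly inside the unit disc, so $\rho(\DTstar) < 1$ and Theorem~\ref{thm:localcvg} delivers the stated local linear convergence with rate $\rho(I + \ga \M_g)$.

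To obtain the explicit bound at $\tau = 1$ and $\ga = \bar{\mu}_g / (2 L_g^2)$, I would additionally estimate $|\lambda|$. With $\tau = 1$, $\|K\| = \|\B\| \leq L_g$ and $\|S\| = \max(\|\C\|, \|\A\|) \leq L_g$, so $|\mathrm{Re}(\lambda)| \leq v^* S v \leq L_g$ and $|\mathrm{Im}(\lambda)| \leq \|K\| \leq L_g$, yielding $|\lambda|^2 \leq 2 L_g^2$. Combining this with the lower bound $v^* S v \geq \bar{\mu}_g$ (which uses $\|v\|=1$ together with $\A, \C$ p.d.) produces
\begin{equation*}
|1 + \ga \lambda|^2 \;\leq\; 1 - 2 \ga\, \bar{\mu}_g + 2\, \ga^2 L_g^2 \;=\; 1 - \frac{\bar{\mu}_g^2}{2 L_g^2}
\end{equation*}
at the stated $\ga$, and the elementary inequality $\sqrt{1 - x} \leq 1 - x/2$ on $[0,1]$ then gives $\rho(I + \ga \M_g) \leq 1 - \bar{\mu}_g^2 / (4 L_g^2)$.

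The main obstacle is intrinsic: $\A, \B, \C$ are linear operators on tangent spaces rather than ordinary matrices, so norms and inner products must be read off the Riemannian metrics on $T_{\mx^\ast}\Mx$ and $T_{\my^\ast}\My$. I would handle this by fixing orthonormal bases of the two tangent spaces so that all the algebra above reduces to standard matrix computation on $\mathbb{C}^{\dimx + \dimy}$ without altering the spectrum of $\M_g$. A secondary subtlety is the purely-imaginary claim for $v^* K v$ with complex $v$ and real antisymmetric $K$, which I would record as the one-line identity $\overline{v^* K v} = v^\top K^\top \bar{v} = - v^* K v$.
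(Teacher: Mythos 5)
Your proof is correct and follows the same core strategy as the paper: symmetrize the cross-coupling by a block-diagonal similarity transform so that $\M_g$ becomes the sum of a negative-definite symmetric part and a real antisymmetric part, bound the real part of every eigenvalue by the symmetric part, and then estimate $|1 + \ga\lambda|$ at $\tau = 1$. There are three minor differences worth recording. First, the paper uses a block-swapping similarity giving $\left(\begin{smallmatrix} -\tau\bar{\A} & \sqrt{\tau}\bar{\B}^\intercal \\ -\sqrt{\tau}\bar{\B} & -\bar{\C} \end{smallmatrix}\right)$ and then invokes the Ky Fan inequality; your $P = \mathrm{diag}(\sqrt{\tau}I, I)$ followed by the Rayleigh identity $\lambda = v^*\tilde{\M}_g v$ and the observation that $v^* K v$ is purely imaginary for real antisymmetric $K$ delivers exactly the same inequality $\mathrm{Re}(\lambda) = -v^* S v < 0$ in a self-contained way. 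Second, the paper handles the passage from the intrinsic operators $(\A,\B,\C)$ to ordinary matrices by going through a local coordinate chart and conjugating out the metric matrices $\bar{g}_1(\locx^\ast)^{1/2}$, $\bar{g}_2(\locy^\ast)^{1/2}$; your shortcut of fixing orthonormal bases of the tangent spaces achieves the same reduction more cleanly and is valid because self-adjointness of $\A,\C$ and the adjoint relation between $\B$ and $\B^\intercal$ translate directly to symmetric/transpose matrices in such a basis without changing the spectrum of $\M_g$. Third, for the explicit rate at $\tau = 1$ the paper simply cites \citet[Theorem 4]{zhang2022near}, whereas you rederive the bound from scratch via $|\mathrm{Re}(\lambda)| \leq L_g$, $|\mathrm{Im}(\lambda)| \leq \|K\| \leq L_g$, $v^* S v \geq \bar{\mu}_g$, and the elementary estimate $\sqrt{1-x} \leq 1 - x/2$ — this is exactly the content of the cited result and makes the proof more self-contained. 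The only caveat, which you implicitly handle, is that the chosen $\ga = \bar{\mu}_g/(2L_g^2)$ must actually lie in $(0,\dotr{\ga}(\M_g))$; your chain of inequalities shows $|1 + \ga\lambda| < 1$ for every eigenvalue $\lambda$ directly, which settles this.
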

The proof is given in Appendix \ref{app:proofofspectralradiusDNE}.
The common reason that we can 
obtain such extensions in Theorem \ref{thm:gdaloccvg}
and \ref{thm:gdaloccvgDNE} is that 
the spectral analysis of the matrix $\M_{g}$ is reduced 
to a similar matrix $M_g$ in a local coordinate (see \eqref{eq:locMmetric})
and the spectral properties of each intrinsic term $(\A,\B,\C)$ in $\M_{g}$ 
are the same as those in $M_g$. We can therefore use existing Euclidean results
to derive a sufficient condition to control the spectral radius of $M_g$, and then identify 
an equivalent condition in terms of $\M_{g}$.
 
\subsection{Symplectic gradient-adjustment method ($\tau$-SGA)}
\label{subsec:sga}
In Euclidean space, SGA modifies 
the update rule of GDA
to avoid strong rotational dynamics near 
a fixed point \citep{gemp2018crosscurl}.
We apply this idea to $\tau$-GDA 
and extend it to Riemannian manifold.
We then study its local convergence to DSE 
when $\tau$ is large. 


\paragraph{$\tau$-SGA algorithm}

SGA adjusts
a vector field $\mvec$ using 
an orthogonal vector field which is 
constructed from the anti-symmetric part of  the 
Jacobian matrix of $\mvec$.
In $\tau$-GDA, $\mvec(\mx,\my)  = ( - \mdelta (\mx,\my) ,  \tau \meta (\mx,\my))$
where $\mdelta  (\mx,\my) = \gradx \mf (\mx,\my)$ and
$\meta (\mx,\my) =  \grady \mf (\mx,\my)$. 
Based on this idea, 
the adjustment of $\tau$-SGA
depends on the Riemannian cross-gradients $\tB (\mx,\my) =  \gradyx \mf (\mx,\my) $ 
and $\tB^{\intercal}(\mx,\my)  = \gradxy \mf (\mx,\my) $,
which is summarized in the following update rule (with a hyper-parameter $\mu \in \R$), 
\begin{align} 
	\mvec_1 (\mx,\my)  &=  -  \ga \left ( \mdelta  ( \mx,  \my)  + \mu \frac{ (\tau +1)\tau }{2} \tB ( \mx,  \my)  [ \meta  ( \mx,  \my) ] \right )  \label{eq:dertSGAtau1},  \\
	\mvec_2 (\mx,\my) &= \ga  \left (  \tau \meta (\mx, \my ) - \mu \frac{ \tau + 1 } {2} \tB^\intercal(\mx, \my )  [ \mdelta  (\mx, \my ) ] \right ) \label{eq:dertSGAtau2} . 
\end{align}
In Appendix \ref{app:dertSGA}, we provide the derivation 
of this update rule. The next proposition shows that 
the adjusted direction $(- \tau \tB [ \meta  ], - \tB^\intercal [ \mdelta ])$
is orthogonal to $\mvec = ( -\mdelta ,  \tau   \meta)$.

\begin{proposition}\label{eq:symmRcrossgrad}
	For any $(\mx,\my) \in \Mx \times \My$, we have
	\[
		\lb \tau  \tB (\mx,\my) [ \eta (\mx,\my) ]  ,  \delta (\mx,\my)  \rb_{\mx}
		+ \lb  \tB^\intercal  (\mx,\my) [ \delta  (\mx,\my) ],   -\tau \eta   (\mx,\my) \rb_{\my} = 0  .
	\]
\end{proposition}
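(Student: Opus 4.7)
The plan is to reduce the identity to the adjoint relation between the two Riemannian cross-gradients, namely
\[
\lb \tB(\mx,\my)[\eta], \delta \rb_{\mx} \;=\; \lb \eta, \tB^{\intercal}(\mx,\my)[\delta] \rb_{\my}, \qquad \forall\, \delta \in T_{\mx}\Mx,\ \eta \in T_{\my}\My.
\]
Once this is available, the proposition is a one-line computation: applying the adjoint relation with $\delta = \mdelta(\mx,\my)$ and $\eta = \meta(\mx,\my)$ gives
\[
\lb \tau\, \tB[\meta], \mdelta \rb_{\mx} \;=\; \tau\, \lb \meta, \tB^{\intercal}[\mdelta] \rb_{\my} \;=\; -\lb \tB^{\intercal}[\mdelta], -\tau\, \meta \rb_{\my},
\]
so the sum of the two terms in the statement is zero. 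Thus essentially all of the content lies in establishing the adjoint identity above.

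To prove the adjoint identity I would appeal to the symmetry of the Riemannian Hessian of $\mf$ viewed as a function on the product manifold $\Mx \times \My$, endowed with the product metric and the product Levi-Civita connection. At the point $(\mx,\my)$ the Hessian is a symmetric bilinear form on $T_{\mx}\Mx \oplus T_{\my}\My$, and a direct unpacking of the product connection shows that its off-diagonal blocks are exactly the operators $\gradxy \mf$ and $\gradyx \mf$ (in the sense that, for instance, $\lb \gradyx \mf[\eta], \delta\rb_{\mx}$ equals the Hessian bilinear form evaluated on $(\delta,0)$ and $(0,\eta)$). The symmetry of the Hessian, applied to the pair $((\delta,0),(0,\eta))$, then yields the desired adjointness between $\tB$ and $\tB^{\intercal}$.

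The main (and essentially only) obstacle is making this off-diagonal block identification fully rigorous: one needs to justify that when the connections on $\Mx$ and $\My$ are combined into the product connection, the mixed second covariant derivative of $\mf$ reproduces the intrinsic quantities $\gradxy \mf$ and $\gradyx \mf$ defined in Section 2. I would handle this by either quoting the standard formula for the Hessian on a product manifold (e.g.\ via the references already cited from \citet{Absilalt2008} and \citet{han2023riemannian}), or by a short local-coordinate verification: in a product chart $\chart_1 \times \chart_2$, the Christoffel symbols of the product connection decouple, so the mixed partials $\partial^{2}_{\locx \locy}\locf$ and $\partial^{2}_{\locy \locx}\locf$ have no Christoffel corrections, are equal by Schwarz's theorem, and represent $\tB^{\intercal}$ and $\tB$ respectively after the usual metric-raising. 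This mirrors the style of local-coordinate verification already used in Appendix \ref{app:localdefDSE}, so it fits naturally into the paper's existing framework.
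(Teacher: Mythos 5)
Your proposal is correct and essentially matches the paper's argument. The paper's proof (Appendix, Proof of Proposition~\ref{eq:symmRcrossgrad}) does exactly your ``backup'' option: it writes $\tB[\eta]$ and $\tB^{\intercal}[\delta]$ in a local chart around $(\mx,\my)$, expresses both inner products as $\ldelta^{\intercal}\,\locg_1\,\frac{\partial\ldelta}{\partial\locy}\,\leta$ and $\leta^{\intercal}\,\locg_2\,\frac{\partial\leta}{\partial\locx}\,\ldelta$, and concludes from $\locg_1\,\frac{\partial\ldelta}{\partial\locy} = \bigl(\locg_2\,\frac{\partial\leta}{\partial\locx}\bigr)^{\intercal}$, which is Schwarz's theorem for $\locf$ combined with the metric-raising identities \eqref{eq:gradrelationGB}--\eqref{eq:gradrelationGBt}. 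Your primary framing --- adjointness of the off-diagonal blocks of the product-manifold Hessian, valid because the product Levi-Civita connection decouples so that the mixed second covariant derivative has no Christoffel contribution and therefore coincides with $\gradxy\mf$ / $\gradyx\mf$ --- is just the coordinate-free restatement of that same fact, so there is no substantive difference in the argument.
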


The proof is given in Appendix \ref{subsec:orthoSGA}.
In the original SGA method, the orthogonality is essential to make 
it compatible to potential and Hamiltonian game dynamics. This proposition 
implies that such compatibility still makes sense for $\tau$-SGA. 
In Appendix \ref{sec:sgacompute}, we discuss how to perform deterministic and 
stochastic gradient adjustment 
using auto-differentiation when $\Mx$ and $\My$ are Euclidean embedded sub-manifolds 
\cite[Chapter 3.3]{Absilalt2008}.

\paragraph{Local convergence of deterministic $\tau$-SGA to DSE: asymptotic analysis}
We study the local convergence of $\tau$-SGA with deterministic gradients
in an asymptotic regime where $\tau \to \infty$.
In this regime, to make the term $\tilde{\B} [ \eta ] $
comparable to the term $\delta $ in \eqref{eq:dertSGAtau1},
we re-parameterize $\mu = \theta \frac{2}{\tau (\tau+1)} \sim \frac{1}{\tau^2}$. 
As a consequence,
$\mu \frac{ \tau + 1 } {2}  \sim \frac{1}{\tau}$
and the update rule in \eqref{eq:dertSGAtau2} can be 
approximated by the $\mvec_2$ in \eqref{eq:dertGDAtau}. 


The next theorem analyzes the local convergence of Asymptotic $\tau$-SGA, 
which is an approximation of $\tau$-SGA, 
whose $\mvec_1$ (resp. $\mvec_2$) is defined by \eqref{eq:dertSGAtau1}
(resp. \eqref{eq:dertGDAtau}).
In order to apply Theorem \ref{thm:localcvg},
it is necessary to verify that the corresponding 
$\mvec_1$ is differentiable at $(\mx^\ast,\my^\ast)$.
This is indeed true since $\tB (\mx,\my) $ 
is continuous at $(\mx^\ast,\my^\ast)$
and $\meta  ( \mx^\ast,  \my^\ast) =0$.
The following linear transform plays 
a key role in the asymptotic analysis  since it gives an approximation of 
the $\DTstar$ in $\tau$-SGA by $ I + \ga \M_s$,
\begin{equation}\label{eq:lintransformAsympSGA}
	\M_{s} =	
	\left ( 
	\begin{array}{ll}
		- \C  &  -  \B \\
		\tau   \B^\intercal	 &   - \tau  \A 
	\end{array} \right ) 
	 +
	 \theta \left (
	\begin{array}{ll}
		- \B \B^\intercal  &   \B \A \\
		 0 &   0 
	\end{array} \right )  .
\end{equation}

Let $L_s = \max( \| \A \|, \| \B \|, \| \C + \theta \B \B^\intercal  \|)$
and $\mu_s = \min(L_s, \la_{min} ( \C + \B \A^{-1} \B^\intercal) )$.
\begin{theorem} \label{thm:gdaloccvgSGA}
Assume $(\mx^\ast,\my^\ast)$ is a DSE of $\mf \in C^2$. 
If $\tau > \min \left( \frac{   \|   \C  \|  |} {\la_{min}(\A)} ,  \frac{   \|    \C + \theta \B \B^\intercal  \|   }{   \la_{min}(\A) } \right)$, 
$\mu = \theta \frac{2}{\tau (\tau+1)}$ with $0 \leq  \theta \leq \frac{ 1}{ \la_{max}(\A) }$,
and $ \ga \in (0, \dotr{\ga}(\M_{s}) )$, 
Asymptotic $\tau$-SGA
is locally convergent to $(\mx^\ast,\my^\ast)$
with rate $\rho( I +  \ga \M_s)$.
Furthermore, if $\tau \geq \frac{2 L_s}{  \la_{min} (  \A  ) }$ 
and $\ga = \frac{1}{4 \tau L_s}$, the rate 
is at most $1 - \frac{\mu_s}{16 \tau L_s }$.
\end{theorem}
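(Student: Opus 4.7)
The plan is to apply the Ostrowski theorem on manifold (Theorem \ref{thm:localcvg}) to the update rule $\tgda$ defining Asymptotic $\tau$-SGA, following the template of Theorem \ref{thm:gdaloccvg}. First I check the hypotheses. Continuity of $\mvec_1, \mvec_2$ on $\Mx \times \My$ follows from $\mf \in C^2$ and continuity of the cross-gradient tensor $\tB$. Differentiability of $\mvec_1$ at $(\mx^\ast,\my^\ast)$ is the only subtle point because of the nonlinear term $\tB(\mx,\my)[\meta(\mx,\my)]$ in \eqref{eq:dertSGAtau1}, but it holds precisely because $\meta(\mx^\ast,\my^\ast) = 0$: by Leibniz only the derivative of $\meta$ survives, giving $\tB(\mx^\ast,\my^\ast)\cdot(\connx\meta, \conny\meta)(\mx^\ast,\my^\ast) = (\B\B^\intercal, -\B\A)$. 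Together with the reparametrization $\mu \frac{(\tau+1)\tau}{2} = \theta$ and the standard $\tau$-GDA contribution from the proof of Theorem \ref{thm:gdaloccvg}, this yields $\DTstar = I + \ga \M_s$ with $\M_s$ as in \eqref{eq:lintransformAsympSGA}.

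The local coordinate reduction from the proof of Theorem \ref{thm:gdaloccvg} carries over verbatim: the spectrum of the intrinsic operator $\M_s$ agrees with its matrix representation in a chart, and norms of the blocks $(\A,\B,\C)$ equal their coordinate counterparts. The crucial structural observation is that the Schur complement of $\M_s$ with respect to the lower-right block $-\tau\A$ is
\[
-(\C+\theta\B\B^\intercal) - (-\B + \theta\B\A)(-\tau\A)^{-1}(\tau\B^\intercal) = -(\C + \B\A^{-1}\B^\intercal),
\]
identical to that of $\M_g$: the $\theta$-terms cancel exactly. By the DSE condition \eqref{eq:mandse3} this Schur complement is negative definite, which preserves the positive-definite ingredient that drives the spectral analysis.

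Next I show that $\M_s$ is Hurwitz-stable under the stated $\tau$-condition, by adapting the Euclidean spectral argument of \cite[Theorem 4.2]{li2022convergence} without assuming diagonalizability. The ``$\min$'' in the hypothesis reflects two alternative paths: one views $\M_s$ as $\M_g$ plus a bounded $\theta$-perturbation and invokes the $\tau$-GDA threshold controlled by $\|\C\|/\la_{min}(\A)$; the other rewrites $\M_s$ as a $\tau$-GDA-type block matrix with $\C$ replaced by $\C' = \C + \theta\B\B^\intercal$ and the upper-right block replaced by $-\B(I-\theta\A)$, where the condition $\theta \leq 1/\la_{max}(\A)$ ensures $I-\theta\A \succeq 0$ so the perturbed block remains controlled, and invokes the threshold $\|\C'\|/\la_{min}(\A)$. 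Either way $\M_s$ is Hurwitz-stable, so any $\ga \in (0, \dotr{\ga}(\M_s))$ yields $\rho(I+\ga\M_s) < 1$. For the quantitative rate, specializing $\tau \geq 2L_s/\la_{min}(\A)$ and $\ga = 1/(4\tau L_s)$ reproduces the Li et al. estimate with $(L_g, \mu_g)$ replaced by $(L_s, \mu_s)$.

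The main obstacle is the spectral analysis of $\M_s$ itself: unlike $\M_g$, the upper-right block carries the $\theta\B\A$ correction and the diagonal block is shifted to $\C + \theta\B\B^\intercal$, so the eigenvalue and resolvent estimates from \cite{li2022convergence} must be redone while tracking how $\theta$-terms propagate through each inequality. What makes this tractable is exactly the Schur complement invariance above, which keeps the DSE positive-definite ingredient intact and lets one absorb the extra curvature-adjusting terms into a shifted triple $(\C', L_s, \mu_s)$ playing the role that $(\C, L_g, \mu_g)$ played for $\tau$-GDA.
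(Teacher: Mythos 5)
Your high-level plan is aligned with the paper's: apply the manifold Ostrowski theorem (Theorem~\ref{thm:localcvg}), compute the Jacobian via the Leibniz rule using $\meta(\mx^\ast,\my^\ast)=0$, reduce to a local-coordinate matrix similar to $\M_s$, and conclude by spectral analysis. Your computation of the $\theta$-contribution $(\B\B^\intercal, -\B\A)$ is correct, and the Schur-complement cancellation
\[
-(\C+\theta\B\B^\intercal) + (-\B + \theta\B\A)\A^{-1}\B^\intercal = -(\C + \B\A^{-1}\B^\intercal)
\]
is a genuine structural observation that the paper's proof exploits (it is exactly $H(0)$ in Proposition~\ref{prop:spectralradiusLisga}).

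However, there is a real gap in the spectral step, which is where the actual work is. First, your ``first path'' is not a proof: rewriting $\M_s = \M_g + \theta(\cdot)$ and ``invoking the $\tau$-GDA threshold'' is a perturbation argument, and nothing guarantees that Hurwitz stability of $\M_g$ under $\tau > \|\C\|/\la_{min}(\A)$ survives the $\theta$-perturbation without a fresh eigenvalue analysis (the perturbation is $O(\theta\|\B\|^2)$, not small). Second, the $\min$ in the hypothesis does not come from choosing between two separate proof paths; it comes from a \emph{single} Schur-complement argument on $\la I - \M_s$, in which one repeatedly needs $\la_0 I + \C + \theta\B\B^\intercal$ (and shifted variants) to be positive definite, and this can be concluded from either $\la_0 > \|\C\|$ (then $\la_0 I + \C$ is p.d.\ and adding $\theta\B\B^\intercal \succeq 0$ keeps it so) or $\la_0 > \|\C+\theta\B\B^\intercal\|$. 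Third, the role of $\theta \le 1/\la_{max}(\A)$ is not just ``$I-\theta\A \succeq 0$ keeps the block controlled''; it is what makes the diagonal entries $1-\theta\la_k(\A)$ nonnegative in the expansion $H(\la) = \la I + \C + \theta\B\B^\intercal + \tilde\B\,\mathrm{diag}\bigl(\tfrac{1-\theta\la_k(\A)}{\la/\tau + \la_k(\A)}\bigr)\tilde\B^\intercal$, which drives the sign control of both $\Real(H)$ and the mixed quantity $\Real(H) + \tfrac{\tau\be}{\la_1}\Imag(H)$ used to rule out nonnegative $\la_0$. Your sketch acknowledges that ``the eigenvalue and resolvent estimates must be redone'' but does not do them, and the same applies to the quantitative rate $1 - \mu_s/(16\tau L_s)$, which the paper establishes via a separate eigenvalue-box lemma (Lemma~\ref{lemma:spectralradiusLSGA}) bounding $\la_0$, $\la_1$, and $|\la|$ for the perturbed matrix. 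These pieces are not routine substitutions of $(\C,L_g,\mu_g) \to (\C',L_s,\mu_s)$: the operator-norm bound $\|M_s\| \le 2\tau L_s$ explicitly uses $\|I - \theta\A\| \le 1$, and the case $\la_1=0$ needs a new chain of inequalities (with $d_k = \tfrac{1-\theta\la_k(\A)}{\la_0/\tau+\la_k(\A)}$) that the $\theta=0$ case does not foreshadow. So the proposal identifies the right skeleton but leaves the load-bearing spectral estimates unproven, and one of its two suggested routes would not go through.
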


The proof is given in Appendix \ref{app:thm:gdaloccvgSGAPROOF}. 
Contrary to Theorem \ref{thm:gdaloccvg}, 
the valid range of $\tau$ depends also on the choice of $\theta$.
Indeed, if the spectral radius of $\C $ is larger than that of $ \C + \theta \B \B^\intercal $
with a suitable choice of $\theta$,
a broader range of $\tau$ could be used in $\tau$-SGA compared to $\tau$-GDA.
For a DSE  $(\mx^\ast,\my^\ast)$ which is not DNE (i.e. $\C$ is not p.d.), 
such a choice of $\theta$ can be possible.
Furthermore, 
we obtain a non-trivial improvement on the convergence rate in theory 
as it significantly improves the rate of the extra-gradient method 
in Euclidean space \citep[Theorem 5.4]{li2022convergence}
when $L_s = \mu_s = \| \C + \theta \B \B^\intercal \| < L_g = \mu_g = \| \C \|$.
In this case, we could choose a smaller $\tau_{s}= \frac{2 L_s}{  \la_{min} (  \A  ) } $ 
and a larger $\ga_s = \frac{1}{4 \tau_s L_s} = \frac{\la_{min} (  \A  )}{8 L_s^2} $ in Asymptotic $\tau$-SGA
to achieve a faster rate $1 - \frac{1}{16 \tau_s}$, 
compared to the rate $1 - \frac{1}{16 \tau_g}$ using a larger $\tau_{g} = \frac{2 L_g}{  \la_{min} (  \A  ) }$ and a smaller $\ga_g = \frac{1}{4 \tau_g L_g}$ in $\tau$-GDA.

We next illustrate the convergence results using a numerical example,
to show that $\tau$-SGA can indeed converge much faster than 
$\tau$-GDA to DSE when there are strong rotational forces in its dynamics.
In Figure \ref{fig:numtausga},
we compare 
the local convergence rate of $\tau$-GDA and
$\tau$-SGA in Example 2 of Section \ref{sub:dneandexamples},
where $A= [1;1;1] \in \R^{3\times1}$, $b=[1;1;0.99] \in \R^{3}$ 
and $\kappa=0.1$.
In this case, we find numerically that $\mx^\ast = 0.9975$.
The corresponding optimal $\my^\ast = (A \mx^\ast  - b ) / \| A \mx^\ast  -b \|$.
The initial point of each algorithm
is set to be $\mx = A^{+}b=0.9967$, $\my = (A \mx - b ) / \| A \mx -b \|$,
which is close to the DSE $(\mx^\ast, \my^\ast)$.
In Figure \ref{fig:numtausga}(a),
we study the range of $\tau$ for the convergence of $\tau$-GDA
with $\ga = 0.001/\tau$.
According to Theorem \ref{thm:gdaloccvg}
and Proposition \ref{prop:ex2dse_existence},
a valid range of $\tau$ should be larger than
$ \frac{  \max_k | \la_k (  \C ) |  }{   \la_{min} (  \A  ) } =
 \kappa \| A^\intercal A \|  / \| A \mx^\ast - b \|  \approx   36.18 $.
Figure \ref{fig:numtausga}(a) 
shows that when $\tau=30$,
$\tau$-GDA can slowly diverge. When $\tau=50$, $\tau$-GDA converges 
slowly to the value $ \mf(\mx^\ast ,\my^\ast) = -0.141$.
This is a convergence dilemma of $\tau$-GDA:
to achieve the local convergence, $\tau$ needs to be large, 
but the rate can be very slow. 
In Figure \ref{fig:numtausga}(b) and (c),
we study $\tau$-SGA with $\theta=0.15$,
using the same learning rate $\tau \ga $ for $\my$ as $\tau$-GDA.
Figure \ref{fig:numtausga}(b) shows
that the gap between 
Asymptotic $\tau$-SGA
and $\tau$-SGA is not so big even if $\tau=10$. 
Therefore in this specific example, 
Theorem \ref{thm:gdaloccvgSGA} provides a valid picture on
the behavior of $\tau$-SGA
when $\tau$ is large enough.
We find numerically that 
$\frac{   \max_k | \la_k (  \C + \theta \B \B^\intercal ) |   ) |} {\la_{min}(\A)}\approx 16.7$,
which suggests that $\tau$-SGA can be locally convergent 
with a smaller $\tau$ compared to $\tau$-GDA. 
Figure \ref{fig:numtausga}(c) confirms 
this analysis and it also shows that a faster linear rate can be achieved by $\tau$-SGA. 
Surprisingly, the approximate geodesic distance $|\mx(t) - \mx^\ast| + \arccos (\my(t)^\intercal \my^\ast)$ (an approximation of the Riemmanian distance $d(t)$)
quickly converges towards $0$ even if $\tau=10$.

\begin{figure}
\centering
\begin{minipage}{0.32\textwidth}
	\subcaption{}
	\includegraphics[height=2.75cm]{./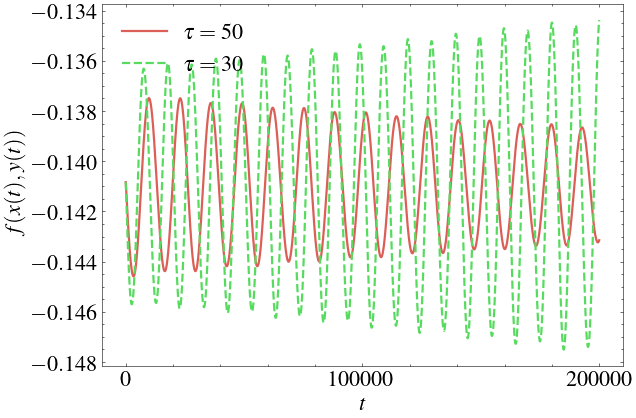}	 
\end{minipage}
\begin{minipage}{0.32\textwidth}
	\subcaption{}
	\includegraphics[height=2.75cm]{./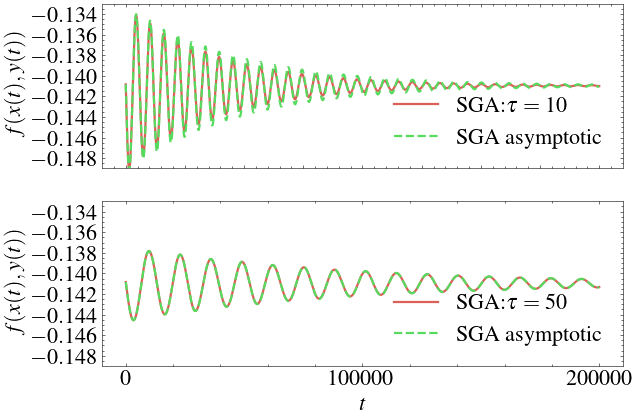}	
\end{minipage}
\begin{minipage}{0.32\textwidth}
	\subcaption{}
	\includegraphics[height=2.75cm]{./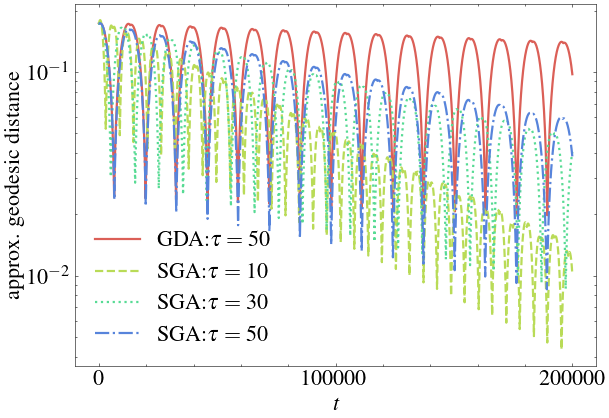}
\end{minipage}
\caption{
Evolution of $f(\mx(t),\my(t))$ and an approximate geodesic distance 
between $(\mx(t),\my(t))$ to $(\mx^\ast,\my^\ast)$ as a function of the iteration $t$ using
deterministic $\tau$-GDA and $\tau$-SGA for Example 2.
(a): $\tau$-GDA with $\tau=30$ vs. $\tau=50$.
(b): $\tau$-SGA vs. Asymptotic $\tau$-SGA with $\tau =10$ and $\tau =50$. 
(c): $\tau$-GDA at $\tau=50$ vs. $\tau$-SGA at $\tau \in \{ 10,30,50 \}$. 
}
\label{fig:numtausga}
\end{figure}

The local convergence of $\tau$-SGA 
to DNE in Euclidean space is analyzed in 
\citet[Theorem 10]{JMLR:sga} when $\tau=1$.
Using the same proof idea as Theorem \ref{thm:gdaloccvgDNE},
one can extend this result to Riemmanian manifold.
However, when $\tau \neq 1$, a DNE is not necessarily
a stable fixed point of the 
vector field $\mvec  = ( -\mdelta , \tau \meta)$,
c.f \citet[Definition 4]{JMLR:sga}.
It is thus unclear 
if there is a non-trivial range of $\mu$
such that for any $\tau>0$, $\tau$-SGA 
is locally convergent to DNE.

\section{Application to Wasserstein GAN}
\label{sec:wgans}


The local convergence analysis 
in Section \ref{sec:algos}
shows that a larger $\tau$ is sometimes 
needed to ensure the local convergence 
of $\tau$-GDA to DSE compared 
to $\tau$-SGA.
However, in practice, it is numerically hard
to validate this theory in GAN.
In this section, we study
the training of orthogonal Wasserstein GAN
using stochastic $\tau$-GDA
and $\tau$-SGA. 
We construct two non-standard examples with the following goals: (1)
Illustrate a similar local convergence dilemma near DSE faced by $\tau$-GDA as $\tau$ varies
on both synthetic and real datasets. 
(2) Show that the correction term in $\tau$-SGA plays a key role to improve
$\tau$-GDA when $\tau$ is small,
leading to a local convergence.
(3) Propose simple and clean benchmarks such that 
theoretical study of smooth Riemannian non-convex non-concave games 
from GAN could be further developed.


 

\subsection{Setup of orthogonal Wasserstein GAN}

In Wasserstein GAN, 
we are interested in the following min-max problem
\[
	\mf ( \mx,\my) = \E ( D_{\my} ( \phi_{data} ) ) - \E  (D_{\my} ( \phi_{\mx} ) )
\]
such that $\mf \in C^2$.
The idea of GAN is to approximate the probability distribution
of $\phi_{data}\in \R^{d} $ using a generator $G_{\mx}$ parameterized by $\mx \in \Mx$.
The parameter $\mx$ is optimized so that 
the distribution of $\phi_{\mx}$ approximates that of $\phi_{data}$
through the feedback of a discriminator $D_{\my}:  \R^{d} \to \R$
parameterized by $\my \in \My$. 
The expectations in $\mf$ are taken with respect to 
the random variables $\phi_{data}$ and $\phi_{\mx}$.
To build orthogonal Wasserstein GAN \citep{muller2019orthogonal}, 
we assume that $\Mx$ is Euclidean and $\My$ is Riemannian. 
The manifold constraint on $\my$ restricts the discriminator family $\{ D_{\my} \}_{\my \in \My} $ 
to be a subset of 1-Lipschitz continuous functions, up to a constant scaling.

We next study two examples of $\phi_{data}$ built upon 
unconditional generator $G_{\mx}$. 
It transforms a random noise $Z \in \R^{p}$ to $\phi_{\mx} \in \R^d$ from a lower dimension $p < d$.
Further details about $\phi_{data}$, $G_{\mx}$ and $D_{\my}$ are given in Appendix \ref{sec:gandiscdetail}.

\paragraph{Gaussian distribution by linear generator} We want to model $\phi_{data} \sim \mathcal{N} (0, \Sigma)$ on dimension $d=5$ using a PCA-like model with dimension $p=4$. The covariance matrix $\Sigma $ is diagonal with a small eigenvalue, i.e. $(1,2^2,3^2,4^2,0.01)$. We choose
the generator $G_{\mx} (Z) = A_{\mx} Z $, where $A_{\mx} \in \R^{d \times p}$ and $Z$ is isotropic normal. 
For the discriminator, 
we use the Stiefel manifold $St(k,d)$ of $k$ orthogonal vectors on $\R^d$ to construct
		$
			D_{\my} (\phi) = \lb v_{\my} , \sigma ( W_{\my}  \phi  ) \rb , 
		$
	where $W_{\my } \in St( k,d) $, $v_{\my} \in St(1,k)$ and 
	 $\sigma$ is a smooth non-linearity. We set $k=d=5$. 
	 As $A_{\mx}$ is the only generator parameter, $\mx := (A_x) \in \Mx = \R^{d \times p}$. The discriminator parameter $\my := (W_{\my},v_{\my}) \in \My  = St( k,d) \times St(1,k)  $. 

\paragraph{Image modeling by DCGAN generator}
     We aim to model images from the MNIST and Fashion-MNIST datasets
	($d=28 \times 28$) 
	using a DCGAN generator ($p=128$). 
	The space $\Mx$ is Euclidean with dimension $\dimx = 1556673$. 	
	To simplify the CNN discriminators in WGAN-GP, 
	we consider a hybrid Scattering CNN discriminator build upon the wavelet scattering transform 
to capture discriminative information in natural images \citep{bruna2013invariant,oyallon2017scaling}. It has only one trainable layer and therefore it is more amenable to theoretical study,
	\[
		D_{\my} (\phi) = \lb v_{\my} , \sigma ( w_{\my} \star P (  \phi ) + b_{\my}  ) \rb .
	\]
	The scattering transform $P: \R^d \to \R^{I \times n \times n}$ is 1-Lipschitz-continuous and 
	it has no trainable parameter. The scattering features $ P (  \phi ) \in \R^{ I \times n \times n}$ are computed from an image $\phi$ of size $\sqrt{d} \times \sqrt{d}$. 
	We then apply an orthogonal convolutional layer to $ P (  \phi )$ \citep{parsevel2017}.
	It has the kernel orthogonality \citep{orthoCNN2022}, by reshaping 
	$w_{\my}$ (size $J \times I \times k \times k$) into a matrix $W_{\my}  \in \R^ {J \times (I k^2)}$ such that $W_{\my} \in St( J, I k^2)$.  
	The bias parameter $b_{\my} \in \R^{J}$ and the output of the convolutional layer 
	is reshaped into a vector in $\R^{J N^2}$. 
 As in the Gaussian case, we further assume $v_{\my} \in St(1,J N^2)$. In summary, $\my := (W_{\my},b_{\my}, v_{\my}) \in \My  = St( J,I k^2) \times  \R^{J} \times St(1,J N^2) $. 

\paragraph{Initialization of algorithms}

To study the local convergence of stochastic $\tau$-GDA and $\tau$-SGA, 
we first obtain a reasonable solution of $(\mx,\my)$ in terms of model quality using (alternating) $\tau$-GDA (see Appendix \ref{sec:init}). We then evaluate $\tau$-GDA and $\tau$-SGA initialized from this solution.

\subsection{Results on Gaussian distribution by linear generator}

We study the local convergence of stochastic $\tau$-GDA 
across various $\tau \in \{ 1,10, 100 \}$.
In Figure \ref{fig:numgaussian}(a), 
we show how the function value $\mf(\mx (t), \my (t) )$ (estimated from $1000$ training samples)
changes over iteration.
We observe that when $\tau =1$, 
$\mf(\mx (t) , \my (t) )$ has a huge oscillation,
but when $\tau=10$ and $100$, it has a much smaller oscillation amplitude.
To investigate the underlying reason,
we compute in Figure \ref{fig:numgaussian}(b), 
the evolution of an ``angle'' quantity every one thousand iterations. 
The angle at $(\mx,\my)$ is defined as 
$	\frac{  \lb v_{\my} ,  \delta(\mx,\my)  \rb } 	{    \| \delta(\mx,\my)   \| }$,
where $\delta(\mx,\my) = \E  ( \sigma ( W_{\my}  \phi_{data}  ) )  
 - \E ( \sigma ( W_{\my}  \phi_{\mx}  ) )$ is estimated from the same batch of samples during the training. 
When the angle is close to $1$, it indicates that $v_{\my}$ is solved to be optimal. 
We observe that when $\tau=1$, the angle oscillates between positive 
and negative values, suggesting that $v_{\my(t)}$ is detached from 
$\delta(\mx(t),\my(t))$. Therefore the minimization of $\mf$ over $\mx$
does not get a good feedback to improve the model. This is
confirmed in Figure \ref{fig:numgaussian}(c), 
where we compute the EMD distance \citep{rubner2000earth} every one thousand iterations,
between the empirical measure of $\phi_{data}$ and $\phi_{\mx(t)}$ 
(using $2000$ validation samples).
We observe that the EMD distance increases over $t$ when $\tau=1$. 
On the contrary, it stays close to one 
when $\tau=10$ and $100$. When $\tau=100$, 
we find that the covariance error
$\| \Sigma - A_{\mx(t)} A_{\mx(t)}^{\intercal} \|$ 
stays around $0.2$ over all iterations. 
This error is between 
the smallest eigenvalue of $\Sigma$
(which is $0.01$, the PCA error)
and the second smallest eigenvalue of $\Sigma$. 
Therefore to use a large enough $\tau$ is crucial 
to reduce model error,
as it can ensure a positive angle to measure a meaningful distance between $\phi_\mx$ and $\phi_{data}$.
This phenomenon is consistent with the local convergence 
of $\tau$-GDA to DSE in Figure \ref{fig:numtausga}(a).

\begin{figure}
\centering
\begin{minipage}{0.32\textwidth}
\subcaption{$\mf(\mx(t),\my(t))$}
	\includegraphics[height=2.75cm]{./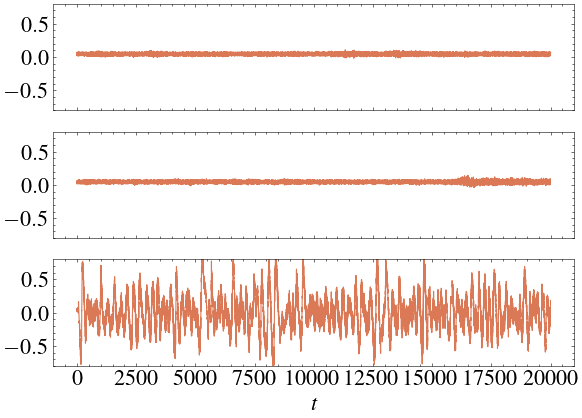} 
\end{minipage}
\begin{minipage}{0.32\textwidth}
	\subcaption{angle$(\mx(t),\my(t))$}
	\includegraphics[height=2.75cm]{./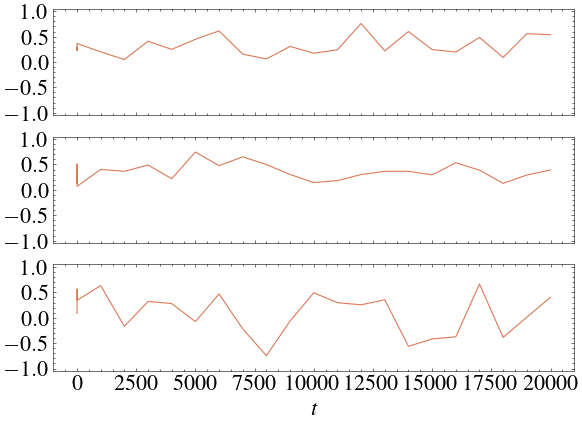}
\end{minipage}
\begin{minipage}{0.32\textwidth}
	\subcaption{EMD$(\mx(t))$}
	\includegraphics[height=2.75cm]{./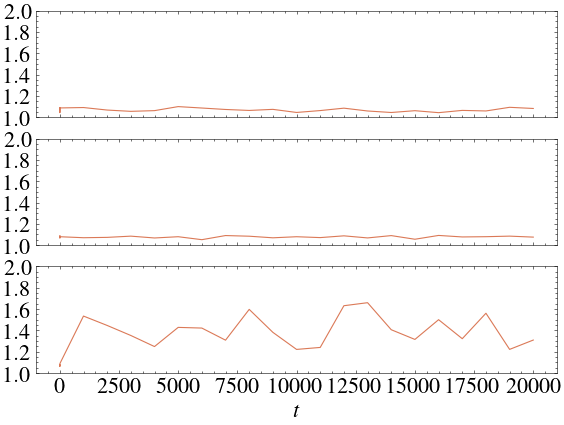}
\end{minipage}
\caption{
Evolution of $\mf(\mx(t),\my(t))$, 
angle$(\mx(t),\my(t))$, and the EMD$(\mx(t))$ 
distance as a function of the iteration $t$
using stochastic $\tau$-GDA
for the Wasserstein GAN of Gaussian distribution. 
Top: $\tau=100, \ga=0.0002$. Middle:
$\tau=10, \ga=0.002$, 
Bottom: $\tau=1, \ga=0.02$. 
}
\label{fig:numgaussian}
\end{figure}

\begin{table}[h!]
 \caption{ \label{tab:mnist10GDASGA}
 Last iteration measures 
of stochastic $\tau$-GDA and $\tau$-SGA on the
 Wasserstein GAN of MNIST.
 We report the $\mf$, angle, and FID scores  
  computed at $(\mx(T),\my(T))$. 
  Only significant digits are reported with respect to the standard deviation (shown in $(\pm)$) (see Appendix \ref{app:angle_fid}). 
  }
  
    \centering
    \small
  \resizebox{14cm}{!}{
    \begin{tabular}{ccccc}
	&  \multicolumn{3}{c}{  $\tau$-GDA}      & \\ 
        \toprule
         $\tau$ & $\mf$ & angle &  FID (train)   &  FID (val)   \\
       \midrule \midrule
	  $5$    & 0.13  & 0.5 &  13.38  \tiny{($\pm 0.02$)}  & 16    \\ 
	  $10$  &   0.013 & 0.47    \tiny{($\pm 0.02$)}  & 7.0  &  9.1\tiny{($\pm 0.1$)}  \\ 
	  $20$  &    0.0136 & 0.70       \tiny{($\pm 0.02$)}  &  7.0      &  9.1 \tiny{($\pm 0.1$)}  \\ 
        \bottomrule
    \end{tabular}
    \begin{tabular}{ccccc}
	&   \multicolumn{3}{c}{  $\tau$-SGA} & \\
        \toprule
       $T (10^5)$ & $\mf$ & angle &  FID (train)   &  FID (val)   \\
       \midrule \midrule
	$ 1$ 	&	0.013     &    0.4 	   &    6.65 \tiny{($\pm 0.04$)}   & 8.7  \tiny{($\pm 0.1$)}  \\  
	 $3$ 	& 	0.012  & 0.3    &  6.2   		&   8.3 \tiny{($\pm 0.09$)}    \\  
	 $5$ &     	0.010  &   0.36    \tiny{($\pm 0.02$)}   &	 5.76   \tiny{($\pm  0.036 $)} & 7.6 \tiny{($\pm   0.08 $)} \\  
        \bottomrule
    \end{tabular}
  }

\end{table}

\subsection{Results on MNIST and Fashion-MNIST by DCGAN generator}

We apply the insights from theoretical analysis 
to improve the convergence of both $\tau$-GDA
and $\tau$-SGA, so as to obtain a good generative model $\phi_{\mx}$
(measured by the FID scores \citep{Seitzer2020FID}).

In Table \ref{tab:mnist10GDASGA}, we first report the performance of $\tau$-GDA 
by varying the choice of $\tau$. 
It is run for $T=2 \times 10^4$ iterations with $\ga = 0.1/\tau$.
We find that at $\tau=5$ even though
the angle is positive, the value of $\mf$ and FID scores are much larger
than their initial values. By investigating the evolution of $\mf$ over the iteration $t$,
we indeed observe unstable dynamics with much stronger oscillation starting from $t = 5000$. This suggests that $\tau$-GDA does not have local convergence.
This instability is improved when $\tau=10$ or $\tau=20$. We find 
that $\tau=10$ can still result in some instability 
(see Figure \ref{fig:compspeedMNIST} in Appendix \ref{subsec:extranumexp}) 
if $T$ is made about ten times larger. 
When $\tau=20$, $\tau$-GDA has a more stable convergence.
We next study the performance of $\tau$-SGA by varying the total training iterations $T$, 
using a fixed $\ga = 0.02$, $\tau=5$, $\theta=0.075$. 
In Table \ref{tab:mnist10GDASGA}, we observe that as $T$ is increased,
the value of $\mf$ and FID scores are decreased. This suggests a converging behavior of
$\tau$-SGA, which is not the case in $\tau$-GDA at $\tau=5$. It shows 
that the correction term in $\tau$-SGA plays a key role to improve
the local convergence of $\tau$-GDA. 
We also perform a similar study on other datasets, 
which shows consistent conclusions (see Appendix \ref{app:extranumexpMNIST0},\ref{app:extranumexpFMNIST} and \ref{app:extranumexpFMNIST0}).

In terms of computational efficiency, 
we find that $\tau$-GDA and $\tau$-SGA can
reach similar FID (test) scores after a similar or smaller amount of training time (see Figure \ref{fig:compspeedMNIST} and \ref{fig:compspeed} in Appendix \ref{subsec:extranumexp}).
But sometimes $\tau$-SGA can be slower (see Figure \ref{fig:compspeedFMNIST} in Appendix \ref{subsec:extranumexp}).
In these cases, we find that the computational time of $\tau$-SGA per iteration $t$ is roughly 3-4 times that of $\tau$-GDA. 
This implies that in terms of the number of iteration $t$, $\tau$-SGA is faster than $\tau$-GDA.


\section{Conclusion}
\label{sec:cond}

In this article, we analyze the local convergence
of $\tau$-GDA and $\tau$-SGA to two differential equilibria
on Riemannian manifold, based on a classical fixed point theorem and spectral analysis. 
This method allows one to reduce the problem into Euclidean space 
using a local coordinate chart. 
We obtain a linear local convergence of $\tau$-GDA to DSE and DNE
where the linear rate is upper bounded by the spectral radius of an intrinsic Jacobian matrix.
To improve the local convergence, $\tau$-SGA is developed and 
extended to Riemannian manifold for the first time. 
Based on the asymptotic analysis where $\tau$ tends to infinity, 
we find that sometimes an improved rate of $\tau$-SGA to DSE can be reached.
The methodology could be served as a basis to analyze 
other simultaneous algorithms such as the Riemannian 
Hamiltonian method \citep{han2023riemannian}.

To show the relevance of our results to GAN, 
we study the behavior of stochastic $\tau$-GDA and $\tau$-SGA
on simple benchmarks of orthogonal Wasserstein GANs.
Even though our current theory does not apply to analyze 
these stochastic algorithms, we observe a consistent behavior similar to 
their deterministic convergence to DSE.
This suggests that DSE could be 
a suitable solution set towards which 
our chosen initialization algorithms (alternating) $\tau$-GDA converge.
This phenomenon is also observed in non-zero sum games such as
NS-GAN and WGAN-GP using Adam-based algorithms \citep{Berard2020A}.

%
%

\subsubsection*{Acknowledgments}
We thank all the anonymous reviewers for their valuable feedback. 
Sixin Zhang acknowledges very helpful discussions with 
Shengyuan Zhao, Yuxin Ge, Jérôme Renault and Yijia Wang. 
This work was partially supported by a grant from AAP EMERGENCE
under the Project HERMES. It was also supported by the computing resources in
the OCCIDATA and CALMIP platforms.

\bibliography{iclr2025_conference}
\bibliographystyle{iclr2025_conference}

\appendix

\section{Notations}
\label{app:notations}

Let $\dimx$ and  $\dimy$ denote the dimension of $\Mx$ and $\My$. 
Let 
$\mathcal{X}(\Mx)$ (resp. $\mathcal{X}(\My)$)
be the set of continuously differentiable vector fields on $\Mx$ (resp. $\My$). 
Since $\mf$ is twice continuously differentiable,
we have for any $\my \in \My$ (resp. $\mx \in \Mx$), 
$ \gradx \mf (\cdot, \my) \in \mathcal{X}(\Mx) $ 
(resp. $ \grady \mf (\mx, \cdot) \in \mathcal{X}(\My) $). 
We write $ \mg_1$ and $ \mg_2$ to denote 
the Riemannian metric on $\Mx$ and $\My$.

Let $\partial$ and $\partial^2$ denote the first-order and 
second-order partial derivatives of a function on Euclidean space.
For a real symmetric matrix $A$, 
$\la_{max}(A)$ denotes the maximal eigenvalue of $A$, 
$\la_{min}(A)$ denotes the minimal eigenvalue of $A$,
and $\la_k (A )$ denotes its $k$-th smallest eigenvalue.
For a linear transform $B$, $\rho(B)$ denotes its spectral radius
and $\| B \|$ denotes its operator norm. 

\section{Definition of DSE}
\label{app:localdefDSE}

The main idea is to use the local coordinate chart 
$(O_1 \times O_2,\chart_1 \times \chart_2)$ to represent the smooth vector fields 
$(\mx,\my) \mapsto \gradx(\mx,\my)$ and $(\mx,\my) \mapsto \grady(\mx,\my)$ around the point $(x^\ast,y^\ast)$, so as to show the equivalence
between \eqref{eq:dseloc1}-\eqref{eq:dseloc3}
and \eqref{eq:mandse1}-\eqref{eq:mandse3}.
%

For each $\mx \in O_1$ (resp. $\my \in O_2$), let 
$\{E_{1,i} (x) \}_{i \leq \dimx}$ (resp. $\{E_{2,j} (y) \}_{j \leq \dimy}$) 
be the canonical basis of the tangent space $T_x \Mx$ (resp. $T_y \My$), 
defined by the tangent map  
$D \chartinv_1 ( \chart_1(\mx) ) [e_{1,i}]$ of the canonical basis $\{ e_{1,i} \}_{i \leq \dimx} $ on $\R^{\dimx}$
(resp. 
     $D \chartinv_2 ( \chart_2(\my) ) [e_{2,j}]$ of $\{ e_{2,j} \}_{j \leq \dimy} $ on $\R^{\dimy}$). 
     
Let the local coordinate of $(\mx,\my) \in O_1 \times O_2$ 
be $(\locx, \locy ) = (\chart_1(\mx),\chart_2(\my)) \in \bar{O}_1 \times \bar{O}_2 =
       \chart_1 (O_1) \times \chart_2 (O_2) \subset \R^{\dimx} \times \R^{\dimy}$. 
We can represent 
\begin{align}
	\mvec_{1} (\mx,\my) = \gradx \mf (\mx,\my) &= \sum_{i \leq \dimx}   \mvec_{1,i} (x,y)  E_{1,i} (\mx),  \label{eq:coorgradx} \\
	\mvec_{2} (\mx,\my) =  \grady \mf (\mx,\my) &= \sum_{j \leq \dimy}   \mvec_{2,j} (x,y)  E_{2,j} (\my)  
	\label{eq:coorgrady}. 
\end{align}
     
 From \eqref{eq:coorgradx} and \eqref{eq:coorgrady}, 
 the local coordinates of the Riemannian gradients on $\bar{O}_1 \times \bar{O}_2$ are
 \begin{align}
	\lvec_1 (\locx,\locy) &= \sum_{i \leq \dimx}   \mvec_{1,i} (\chartinv_1(\locx),\chartinv_2(\locy))  e_{1,i} ,  \label{eq:locgradx} \\
	\lvec_2 (\locx,\locy) &= \sum_{j \leq \dimy}   \mvec_{2,j} (\chartinv_1(\locx),\chartinv_2(\locy))  e_{2,j} .  \label{eq:locgrady}
\end{align}
 
\paragraph{Equivalence between \eqref{eq:dseloc1} and \eqref{eq:mandse1}} 
 Let the local Riemannian metric matrix at $\mx = \chartinv_1 (\locx) \in O_1$
 and $\my = \chartinv_2 (\locy) \in O_2$ be
 \begin{equation}\label{eq:rmetriclocal}
 	\locg_1 (\locx) = (  \mg_1 (  E_{1,i} (\mx),  E_{1,i'} (\mx) ))_{i,i' \leq \dimx} , \quad
 	\locg_2 (\locy) = (  \mg_2 (  E_{2,j} (\my),  E_{2,j'} (\my) ))_{j,j' \leq \dimy} .
\end{equation}
From \cite[chap.~3.6]{Absilalt2008}, we have for $(\locx,\locy) \in \bar{O}_1 \times \bar{O}_2$, 
\begin{align}
	\lvec_1 (\locx,\locy)  = \locg_1 (\locx)^{-1} \cdot \partial_{\locx} \locf ( \locx, \locy), 
	 \label{eq:gradrelation1} \\
	\lvec_2 (\locx,\locy)  = \locg_2 (\locy)^{-1} \cdot \partial_{\locy} \locf ( \locx, \locy) \label{eq:gradrelation2}.	 
\end{align}
The equivalence between the DSE condition \eqref{eq:dseloc1} and \eqref{eq:mandse1} follows from \eqref{eq:gradrelation1} and \eqref{eq:gradrelation2}, as $\lvec_1 (\locx^\ast, \locy^\ast) = 0$ i.f.f. (if and only if) $ \gradx \mf (\mx^\ast,\my^\ast) = 0$ (similarly for the relationship between $\lvec_2$ and $\grady \mf$). 

\paragraph{Equivalence between \eqref{eq:dseloc2} and \eqref{eq:mandse2}}

Let $\meta^\ast \in T_{\my^\ast} \My$, 
with its local coordinate $\leta^\ast  = D \chart_2 (\my^\ast) [ \meta^\ast ] $. 
To show that the positive definiteness of 
$-\Hessy f (\mx^\ast,\my^\ast)  $
is equivalent to 
the positive definiteness 
of $-\partial^2_{\locy \locy }  \locf  (\locx^\ast,\locy^\ast) $, 
it is sufficient to verify that
\begin{equation}\label{eq:desHesspd_equiv}
	\lb \leta^\ast  , \partial^2_{\locy \locy }  \locf  (\locx^\ast,\locy^\ast)  \leta^\ast \rb = 
	\lb  \eta^\ast , \Hessy \mf(\mx^\ast, \my^\ast) [  \eta^\ast ] \rb_{\my^\ast} , 
\end{equation}
where $\lb \cdot, \cdot \rb$ denotes the inner product on Euclidean space,
and $\lb  \cdot, \cdot \rb_{\my^\ast}$ denotes the Riemannian inner product on $T_{y^\ast} M$.

Recall that the Riemannian Hessian on $\My$ is defined 
by the Riemannian connection $\conny: \mathcal{X}(\My) \times \mathcal{X}(\My) \to \mathcal{X}(\My)$ 
with $\Hessy  f (\mx,\my) = \conny \mvec_2 (\mx,\my)$. By the R-$linearity$ and Leibniz law of the connection, we have 
     \begin{align}
         \conny \mvec_2 (\mx^\ast, \my^\ast ) [  \meta^\ast  ] 
         &= \conny ( \sum_j \mvec _{2,j} (\mx^\ast, \my^\ast ) E_{2,j}( \my^\ast ) ) [  \meta^\ast  ]  \nonumber \\
         &= \sum_j \conny (  \mvec _{2,j} (\mx^\ast, \my^\ast) E_{2,j}( \my^\ast ) )  [ \meta^\ast  ]  \nonumber  \\
         &= \sum_j ( D_{y} \mvec_{2,j} ( \mx^\ast, \my^\ast) [ \meta^\ast ] ) 
         				E_{2,j} ( \my^\ast  ) + \mvec_{2,j} (\mx^\ast, \my^\ast) \conny (E_{2,j}(  \my^\ast ))[ \meta^\ast ]  \label{eq:jac1} \\
         &= \sum_j  \left (  \frac{ \partial { \lvec_{2,j} } } { \partial \locy } ( \locx^\ast,    \locy^\ast) \leta^\ast  \right )  E_{2,j} ( \my^\ast  )  . \label{eq:jac2}
     \end{align}
     
We have obtained \eqref{eq:jac2} from \eqref{eq:jac1} because any DSE $(\mx^\ast, \my^\ast)$ is a critical point of $\mf$, therefore $\mvec_{2,j} (\mx^\ast, \my^\ast) =0$ for each $j \leq \dimy$. Moreover, the tangent map
$ D_{y} \mvec_{2,j} ( \mx^\ast, \my^\ast) [ \meta^\ast ] =  \frac{ \partial { \lvec_{2,j} } } { \partial \locy } ( \locx^\ast,    \locy^\ast) \leta^\ast$ by definition \citep[chap.~3.5.4]{Absilalt2008}. 

Therefore the local coordinate of the tangent vector $  \conny \mvec_2 (\mx^\ast, \my^\ast ) [  \meta^\ast  ] \in T_{\my^\ast} \My$ is $\frac{ \partial { \lvec_{2} } } { \partial \locy } ( \locx^\ast,    \locy^\ast)  \leta^\ast \in \R^{\dimy}$.
It follows from \eqref{eq:gradrelation2},\eqref{eq:jac2},
and the Riemannian inner product \citep[chap.~3.6]{Absilalt2008} 
that
\begin{align}
	\lb  \meta^\ast , \Hessy \mf(\mx^\ast, \my^\ast) [  \eta^\ast ] \rb_{\my^\ast}
	 &= \left \lb \leta^\ast  , \locg_2 (\locy^\ast ) \cdot \frac{ \partial { \lvec_{2} } } { \partial \locy } ( \locx^\ast,    \locy^\ast)  \leta^\ast  \right \rb  \label{eq:Hessyloc} \\
	 &= \left \lb \leta^\ast  , \partial^2_{\locy \locy} \locf ( \locx^\ast, \locy^\ast) 
	  \leta^\ast  \right \rb  . \nonumber 
\end{align}
Therefore \eqref{eq:desHesspd_equiv} holds.
This verifies the equivalence between \eqref{eq:dseloc2} and \eqref{eq:mandse2}.

\paragraph{Equivalence between \eqref{eq:dseloc3} and \eqref{eq:mandse3}}
Assume $ ( \mx^\ast, \my^\ast)$ is a DSE of $\mf$, 
from the equivalence between 
\eqref{eq:dseloc1},\eqref{eq:dseloc2} and \eqref{eq:mandse1},\eqref{eq:mandse2},
we deduce that $ ( \mx^\ast, \my^\ast)$ (resp. $(\locx^\ast, \locy^\ast)$) is a critical 
point of $\mf$ (resp. $\locf$).
Moreover, $\Hessy \mf ( \mx^\ast, \my^\ast)$ and $ \partial^2_{\locy \locy} \locf ( \locx^\ast, \locy^\ast)$
are negative definite. 

To simplify the notation, let 
\begin{align}\label{eq:ABC}
	A = - \partial^2_{\locy \locy} \locf ( \locx^\ast, \locy^\ast), \quad
	B =  \partial^2_{\locy \locx} \locf ( \locx^\ast, \locy^\ast),  \quad
	C =  \partial^2_{\locx \locx} \locf ( \locx^\ast, \locy^\ast). 
\end{align}
The condition \eqref{eq:dseloc3} is equivalent to the positive definiteness 
of the matrix $C + B A^{-1} B^\intercal$. 

Let $\mdelta^\ast \in T_{\mx^\ast} \Mx$, 
with its local coordinate $\ldelta^\ast  = D \chart_1 (\mx^\ast) [\mdelta^\ast ] $. 
As in \eqref{eq:desHesspd_equiv}, it suffices to verify that 
\begin{equation*}
	\lb \ldelta^\ast , (C+B A^{-1} B^\intercal)  \ldelta^\ast \rb = 
	\lb  \mdelta^\ast ,
	 [ \Hessx \mf  -  \gradyx \mf  \cdot (\Hessy \mf)^{-1} \cdot   \gradxy \mf ] (\mx^\ast,\my^\ast)  
	 [  \mdelta^\ast ] \rb_{\mx^\ast} . 
\end{equation*}

Based on the proof of \eqref{eq:desHesspd_equiv}, 
we also have $\lb \ldelta^\ast  , C   \ldelta^\ast \rb  = \lb  \mdelta^\ast ,  \Hessx \mf  (\mx^\ast,\my^\ast)    [  \mdelta^\ast ] \rb_{\mx^\ast} $. It remains to verify that 
\begin{equation}\label{eq:desHessCross_equiv}
	\lb \ldelta^\ast , B A^{-1} B^\intercal  \ldelta^\ast \rb = 
	\lb  \mdelta^\ast ,
	 [ -  \gradyx \mf  \cdot (\Hessy \mf)^{-1} \cdot   \gradxy \mf ] (\mx^\ast,\my^\ast)  
	 [  \mdelta^\ast ] \rb_{\mx^\ast} . 
\end{equation}

Let $\meta^\ast = [ \Hessy \mf^{-1}    \cdot  \gradxy \mf ] (\mx^\ast,\my^\ast) 	 [  \mdelta^\ast ]$, we compute the local coordinate $\leta^\ast $ of $\meta^\ast$ by converting 
the following equation
\begin{equation}\label{eq:Hesssolution}
	 \Hessy \mf  (\mx^\ast,\my^\ast) [ \meta^\ast]  =  \gradxy \mf  (\mx^\ast,\my^\ast) 	 [ \mdelta^\ast ] , 
\end{equation}
into local coordinate. From \eqref{eq:jac2},  we have
$	 \Hessy \mf  (\mx^\ast,\my^\ast) [ \meta^\ast]  = \sum_j  \left (  \frac{ \partial { \lvec_{2,j} } } { \partial \locy } ( \locx^\ast,    \locy^\ast) \leta^\ast  \right )  E_{2,j} ( \my^\ast  ) $. 

By \eqref{eq:coorgrady} and \eqref{eq:locgrady}, we obtain
\begin{align}
	 \gradxy \mf  (\mx^\ast,\my^\ast) 	 [ \mdelta^\ast ] 
	 &= 
	 D_x \grady \mf (\mx^\ast,\my^\ast)  	 [ \mdelta^\ast ]
	 =   \sum_j     D_x  \mvec_{2,j}  (\mx^\ast,\my^\ast) [ \mdelta^\ast ]  E_{2,j} ( \my^\ast  )  \nonumber \\
	 &= \sum_j \left ( \frac{ \partial \lvec_{2,j} } { \partial \locx } (\locx^\ast,\locy^\ast) \ldelta^\ast \right ) E_{2,j} ( \my^\ast  ) \label{eq:loccrossgradxy}  .
\end{align}

It follows that \eqref{eq:Hesssolution} is equivalent to 
\begin{align}\label{eq:locHesssolution}
	  \frac{ \partial { \lvec_{2} } } { \partial \locy } ( \locx^\ast,    \locy^\ast) \leta^\ast = 
   \frac{ \partial \lvec_{2} } { \partial \locx } (\locx^\ast,\locy^\ast) \ldelta^\ast .
\end{align}

We can compute the right hand side of \eqref{eq:desHessCross_equiv} in the local coordinate as in \eqref{eq:loccrossgradxy} and \eqref{eq:Hessyloc}, 
\begin{align*}
	 \lb \mdelta^\ast, - \gradyx \mf ( \mx^\ast, \my^\ast)  [ \meta^\ast ]  \rb_{\mx^\ast} &= 
	- \lb \mdelta^\ast, D_{\my} \gradx \mf ( \mx^\ast, \my^\ast)  [ \meta^\ast ]  \rb_{\mx^\ast} \\
	&= - \left \lb \mdelta^\ast, 	\sum_i \left ( \frac{ \partial \lvec_{1,i} } { \partial \locy } (\locx^\ast,\locy^\ast) \leta^\ast \right ) E_{1,i} ( \mx^\ast  )   \right \rb_{\mx^\ast}  \\
	&= -   \left \lb \ldelta^\ast, 
	\locg_1 (\locx^\ast ) \cdot  
	 \frac{ \partial \lvec_1 } { \partial \locy } (\locx^\ast,\locy^\ast) \leta^\ast  \right \rb .
\end{align*}

The left hand side of \eqref{eq:desHessCross_equiv} 
can computed based on \eqref{eq:gradrelation1} and \eqref{eq:gradrelation2}, 
which result in
\begin{align}
	A &= - \locg_2 ( \locy^\ast)  \cdot \frac{ \partial { \lvec_{2} } } { \partial \locy } ( \locx^\ast,    \locy^\ast)  , \label{eq:gradrelationG} \\
	B &= \locg_1 ( \locx^\ast)  \cdot \frac{ \partial { \lvec_{1} } } { \partial \locy } ( \locx^\ast,    \locy^\ast), \label{eq:gradrelationGB} \\
	C &= \locg_1 ( \locx^\ast)  \cdot \frac{ \partial { \lvec_{1} } } { \partial \locx } ( \locx^\ast,    \locx^\ast). \nonumber  
\end{align}
Furthermore, the symmetry of the Hessian matrix of $\locf$ implies that
\begin{equation}\label{eq:gradrelationGBt}
	B^\intercal =  \partial^2_{\locx \locy} \locf ( \locx^\ast, \locy^\ast) = \locg_2 ( \locy^\ast) \cdot \frac{ \partial \lvec_{2} } { \partial \locx } (\locx^\ast,\locy^\ast) .
\end{equation}

As a consequence, \eqref{eq:locHesssolution} implies that $\leta^\ast =- A^{-1} B^{\intercal} \ldelta^\ast$, and
\[	
	\lb \ldelta^\ast , B A^{-1} B^\intercal  \ldelta^\ast \rb = 
	- \left \lb \ldelta^\ast , \locg_1 ( \locx^\ast)  \cdot \frac{ \partial { \lvec_{1} } } { \partial \locy } ( \locx^\ast,    \locy^\ast)  \leta^\ast \right \rb .
\]
Therefore \eqref{eq:desHessCross_equiv} holds.

\section{Implicit function theorem on Riemannian manifold}
\label{app:manifoldiFunThm}

We shall state an implicit function theorem 
which allows one to understand why $(\mx^\ast,\my^\ast)$ is a local minimax point
in the manifold case. 
This theorem implies the existence of a solution $(\mx,h(\mx)) $ sufficiently 
close to $ (\mx^\ast,\my^\ast)$ s.t. $\grady \mf ( \mx, h(\mx) ) = 0$. 
Moreover, due to the continuity of $\Hessy \mf $ on $\Mx \times \My$ and the continuity of $h$ near $\mx^\ast$,
$-\Hessy \mf(\mx,h(\mx))$ is positive definite,
provided that $\mx$ is close enough to $\mx^\ast$.
As a consequence, $h(\mx)$ is the unique strict local maximum of 
$\my \mapsto \mf ( \mx, \my)$ in this neighbor of DSE. 

Recall that the set of continuously differentiable vector fields on $\My$ is denoted by $\mathcal{X}(\My)$. The Riemannian connection on $\My$ is denoted by 
$ \conny$. 

Consider a parameterized vector field $\mvec$ defined at each $x \in \Mx$ with 
$\mvec (x,\cdot) \in  \mathcal{X}(\My)$. 
By definition, for each $(x,y) \in  \Mx \times \My$, $\mvec(x,y) \in T_y \My$, 
and $\conny \xi(x,y) $ is a linear map from $T_{y} \My$  to $  T_{y} \My $.

\begin{theorem} \label{thm:imp}
	Assume $\mvec$ is continuously differentiable on an open set $E  \subset \Mx \times \My$. 
	Let $(x^\ast,y^\ast) \in E  $ be a solution of 
	\[
		\mvec (\mx,\my) = 0, \quad (\mx,\my) \in E . 
	\]
	
	If  $ \conny \mvec (\mx^\ast ,\my^\ast )  $ is invertible on $  T_{\my^\ast} \My$, 
	there exists an open set $U_1 \times U_2 \subset E$ and 
	an open set $W_1 \subset U_1$ such that 
	\[
	\forall \mx \in W_1, \exists ! \, \my \in U_2  
	\quad \mbox{s.t.} \quad   
	\mvec (\mx,\my) = 0 . 
	\]
	
	Let the unique $y = h(x)$, i.e. $h: W_1 \to U_2$,
	such that 
	\[
		\my^\ast = h(\mx^\ast) , \quad 
	\mvec ( \mx , h( \mx)  ) = 0 , \quad \forall \mx \in W_1 , 
	\]
	then  $h $ is continuously differentiable on $W_1$. 
	Let $\delta \in T_{\mx^\ast} \Mx$, 
	and denote the tangent map of $h$ at $\mx^\ast$ by $D_{\mx} h(\mx^\ast)$,
	we have
	\[
	D_{\mx} h (x^\ast) [\mdelta]  = -    \conny \mvec (\mx^\ast ,\my^\ast )    ^{-1}   \cdot 
	 D_{\mx}  \mvec  ( \mx^\ast,\my^\ast)  [ \mdelta] .
	\]
\end{theorem}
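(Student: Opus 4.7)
The plan is to reduce the manifold statement to the classical Euclidean implicit function theorem via a local coordinate chart, following the same strategy used in Appendix \ref{app:localdefDSE}. Choose charts $(O_1,\chart_1)$ around $\mx^\ast$ and $(O_2,\chart_2)$ around $\my^\ast$ with $O_1 \times O_2 \subset E$, and let $\bar{O}_i = \chart_i(O_i)$. Using the canonical bases $\{E_{2,j}(\my)\}$ on $T_\my \My$ induced by $\chart_2$, expand $\mvec(\mx,\my) = \sum_j \mvec_{2,j}(\mx,\my) E_{2,j}(\my)$. The equation $\mvec(\mx,\my)=0$ is then equivalent to the Euclidean system $\lvec(\locx,\locy) := (\mvec_{2,j}(\chartinv_1(\locx), \chartinv_2(\locy)))_j = 0$ on $\bar{O}_1 \times \bar{O}_2 \subset \R^{\dimx} \times \R^{\dimy}$. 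Since $\mvec$ is continuously differentiable, so is $\lvec$.

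The central observation is that the invertibility hypothesis on $\conny \mvec(\mx^\ast,\my^\ast)$ translates into the invertibility of the Jacobian $\partial_{\locy} \lvec(\locx^\ast,\locy^\ast)$, which is exactly the hypothesis needed to apply the classical implicit function theorem. Arguing as in the derivation of equation \eqref{eq:jac2}, for any $\meta^\ast \in T_{\my^\ast}\My$ with local coordinate $\leta^\ast$, the Leibniz rule gives
\[
\conny \mvec(\mx^\ast,\my^\ast)[\meta^\ast] = \sum_j \left( D_\my \mvec_{2,j}(\mx^\ast,\my^\ast)[\meta^\ast] \right) E_{2,j}(\my^\ast) + \sum_j \mvec_{2,j}(\mx^\ast,\my^\ast)\, \conny E_{2,j}(\my^\ast)[\meta^\ast].
\]
The second sum vanishes because $\mvec(\mx^\ast,\my^\ast)=0$ forces $\mvec_{2,j}(\mx^\ast,\my^\ast)=0$ for all $j$. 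Hence the local coordinate representation of $\conny \mvec(\mx^\ast,\my^\ast)$ is exactly the matrix $\partial_{\locy} \lvec(\locx^\ast,\locy^\ast)$, so invertibility transfers.

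Applying the classical Euclidean implicit function theorem to $\lvec$ yields open neighborhoods $\bar{W}_1 \subset \bar{U}_1$, $\bar{U}_2$ and a unique $C^1$ map $\bar{h}:\bar{W}_1 \to \bar{U}_2$ with $\bar{h}(\locx^\ast)=\locy^\ast$ and $\lvec(\locx,\bar{h}(\locx))=0$, together with the derivative formula $D\bar{h}(\locx^\ast) = -\partial_{\locy}\lvec(\locx^\ast,\locy^\ast)^{-1} \partial_{\locx}\lvec(\locx^\ast,\locy^\ast)$. Pulling back through the charts, set $U_1 = \chartinv_1(\bar{U}_1)$, $U_2 = \chartinv_2(\bar{U}_2)$, $W_1 = \chartinv_1(\bar{W}_1)$ and $h = \chartinv_2 \circ \bar{h} \circ \chart_1$; uniqueness of $h$ on $W_1$ follows from that of $\bar{h}$ combined with the bijectivity of the charts, and $h$ inherits $C^1$ regularity.

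The derivative formula is obtained by the chain rule: for $\mdelta \in T_{\mx^\ast}\Mx$ with local coordinate $\ldelta$,
\[
D_{\mx} h(\mx^\ast)[\mdelta] = D\chartinv_2(\locy^\ast)\bigl[\, -\partial_{\locy}\lvec(\locx^\ast,\locy^\ast)^{-1}\, \partial_{\locx}\lvec(\locx^\ast,\locy^\ast)\, \ldelta\, \bigr].
\]
Applying the same local-coordinate identification as above, now to $D_{\mx}\mvec(\mx^\ast,\my^\ast)[\mdelta] = \sum_i (\partial_{\locx}\lvec)_i(\locx^\ast,\locy^\ast) \ldelta \cdot E_{2,i}(\my^\ast)$, rewrites this Euclidean expression intrinsically as $-\conny \mvec(\mx^\ast,\my^\ast)^{-1} \cdot D_{\mx}\mvec(\mx^\ast,\my^\ast)[\mdelta]$, which is the claimed identity. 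The main obstacle is verifying cleanly that the connection term in $\conny \mvec$ reduces to the coordinate Jacobian precisely because we are evaluating at a zero of $\mvec$; without that cancellation, the identification would involve Christoffel-type correction terms that would obscure the equivalence of the two invertibility conditions.
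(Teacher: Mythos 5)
Your proof is correct and follows exactly the chart-reduction strategy that the paper sketches in Appendix C: the paper states only that one should represent the vector field in a local coordinate chart and then adopt the Euclidean implicit function theorem (Rudin, Theorem 9.28), and your argument fills in precisely those details, including the key cancellation of the $\conny E_{2,j}$ terms at a zero of $\mvec$ so that $\conny \mvec(\mx^\ast,\my^\ast)$ is represented by $\partial_{\locy}\lvec(\locx^\ast,\locy^\ast)$.
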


We can prove this result by using a local coordinate chart 
around the point $(x^\ast,y^\ast)$ to represent
a smooth vector field, and then adopt the proof technique of the implicit function theorem in Euclidean space \citep[Theorem 9.28]{rudin1976principles}.

\section{Proof of Proposition \ref{prop:ex1dse_existence}}
\label{prop:ex1dse_existencePROOF}
	We verify the intrinsic definition of DSE for $(\mx^\ast,\my^\ast)$. 
	For the first-order condition in \eqref{eq:mandse1}, we compute based on 
	\citet[Proposition 3.61]{Boumal2023},
	\begin{equation}\label{eq:ex1dseCP}
		\partial_{\mx} \mf (\mx,\my)  =  A^\intercal \my , \quad
		\grady \mf (\mx,\my) = (I-\my \my^\intercal) ( A \mx - b ) .
	\end{equation}
	From the identity involving the pseudo-inverse
	$A^\intercal A A^{+} = A^\intercal$, 
	we deduce that $A^\intercal y^\ast = 0$. 
	As $b \not \in \mbox{Range}(A)$, we have 
	$A \mx^\ast  - b \neq 0$ and therefore 
	$\my^\ast$ is parallel to the (non-zero) vector $A \mx^\ast - b$.
	From above, the first-order condition holds, i.e. 
	$\partial_{\mx} \mf (\mx^\ast,\my^\ast)  = 0, \grady \mf (\mx^\ast,\my^\ast) = 0$.
	
	From \eqref{eq:ex1dseCP}, we first verify the second-order condition \eqref{eq:mandse2}.
	From \citet[Corollary 5.16]{Boumal2023},
	we compute for $\eta \in T_{\my} \My$, 
	\begin{align*}
		\Hessy \mf (\mx,\my)  [\eta]  &= (I-\my \my^\intercal) ( \lb \partial_{\my} \grady \mf (\mx,\my) , \eta \rb ) \\
				&= (I-\my \my^\intercal) ( - ( \my \eta^\intercal + \eta \my^\intercal ) ( A \mx - b ) ) \\ 
				&=	\lb \my, A \mx - b \rb ( - \eta )  . 
	\end{align*}
	We verify that $- \Hessy \mf (\mx^\ast,\my^\ast) $ is d.p, because for non-zero $ \eta^\ast \in T_{\my^\ast} \My$, we have
	 $ \| \eta^\ast \|^2  >  0 $. Moreover, $A \mx^\ast - b \neq 0$, therefore
	\[
		 \lb -  \Hessy \mf (\mx^\ast,\my^\ast) [\eta^\ast]  ,  \eta^\ast  \rb 
		 = \lb \my^\ast, A \mx^\ast - b \rb   \| \eta^\ast \|^2  
		 = \| A \mx^\ast - b \|   \| \eta^\ast \|^2     > 0  .
	\]
	
	We now check the second-order condition \eqref{eq:mandse3}.
	It is clear that $\partial^2_{\mx\mx} \mf (\mx,\my) = 0$. 
	We next show $\delta^\ast \mapsto  \gradxy \mf  (\mx^\ast,\my^\ast) [\delta^\ast]$ is an injection from $T_{\mx^\ast} \Mx $ to $T_{\my^\ast} \My$. We compute from \eqref{eq:ex1dseCP},
	\[
		\gradxy \mf  (\mx^\ast,\my^\ast) [\delta^\ast] = \sum_{i \leq \dimx} \partial_{\mx_i} \grady \mf (\mx^\ast,\my^\ast) \delta^\ast_i =  (I-\my^\ast \my^{\ast \intercal}  ) A  \delta^\ast . 
	\]
	If $ \delta^\ast \neq 0$, then $A  \delta^\ast  \neq 0$ because $\mbox{Ker}(A)  = \{ 0 \}$. But $A  \delta^\ast $ is not parallel 
	to  $\my^\ast$, since $\my^\ast$ is along the direction 
	$A \mx^\ast - b $ which is not in the range of $A$. This proves that $\gradxy \mf  (\mx^\ast,\my^\ast) [\delta^\ast]  \neq 0$ if $ \delta^\ast \neq 0$. 
	
	The above injection property implies that for $\delta^\ast \neq 0$, $\eta^\ast = \gradxy \mf  (\mx^\ast,\my^\ast) [\delta^\ast] \neq 0$. As $- \Hessy \mf (\mx^\ast,\my^\ast) $ is d.p, we use the symmetry of the Riemannian cross-gradients (see Proposition \ref{eq:symmRcrossgrad}) to obtain: If $ \delta^\ast \neq 0$, then
	\[
		 \lb \delta^\ast, [ \Hessx \mf  -  \gradyx \mf  \cdot (\Hessy \mf)^{-1} \cdot   \gradxy \mf ] (\mx^\ast,\my^\ast)  [\delta^\ast ]  \rb  = \lb \eta^\ast, - \Hessy \mf(\mx^\ast,\my^\ast)^{-1}   [\eta^\ast ]\rb > 0 . 
	\]	
Therefore \eqref{eq:mandse3} holds.

\section{Proof of Proposition \ref{prop:ex2dse_existence}} 
\label{prop:ex2dse_existencePROOF}
	As in the proof of Proposition \ref{prop:ex1dse_existence} in Appendix \ref{prop:ex1dse_existencePROOF}, we have
	\begin{equation}\label{eq:ex2dseCP}
		\partial_{\mx} \mf (\mx,\my)  =  A^\intercal \my - \kappa A^\intercal A \mx , \quad
		\grady \mf (\mx,\my) = (I-\my \my^\intercal) ( A \mx - b ) .
	\end{equation}
	To show the existence of DSE, we shall construct a solution 
	of the form $(\mx,\my^\ast_{\mx})$, where 
	$\partial_{\mx} \mf (\mx,\my^\ast_{\mx})  =0$.
	We consider $ \my^\ast_{\mx} = (A \mx - b ) / \| A \mx - b \|$
	so that $\grady \mf (\mx,\my^\ast_{\mx} )  = 0$. 
	We verify that $\mbox{Ker}(A)  = \{ 0 \}$ implies that $A^\intercal A$ is p.d.
	It follows that $ A^{+} = (A^\intercal A)^{-1} A^\intercal$, 
	and that the condition $\partial_{\mx} \mf (\mx,\my^\ast_{\mx} )   = 0$ is equivalent to
	\[
		\frac{ A^\intercal ( A \mx - b ) } { \| A \mx - b \| } = \kappa A^\intercal A \mx \quad
		\Leftrightarrow \quad		( 1 - \kappa  \| A \mx - b \|) x = A^{+} b  .
	\]
	For $(x^\ast,y^\ast) = (x^\ast,\my^\ast_{\mx^\ast})$ to be a DSE, 
	we assume $x^\ast = c A^{+} b$
	and we want to find a $c \in \R$ such that 
	\[
		F ( \kappa,c) = c ( 1 - \kappa \| c A A^{+} b - b \| ) - 1 = 0  .
	\]
	From Proposition \ref{prop:ex1dse_existence}, we have $F(0,1) = 0$.
	We next apply the implicit function theorem (c.f. Theorem \ref{thm:imp})
	to show the existence of $c$ close to $1$ if $\kappa $ is close to $0$.
	It suffices to verify that $ \frac{\partial F}{ \partial c} (0,1)   =  1$ based on
	\[
		\frac{\partial F}{ \partial c} (\kappa,c)  = 
		\left (1 - \kappa \| c A A^{+}  b - b \| \right) + 
		c \left ( -\kappa \frac{  \lb  A A^{+} b , c A A^{+} b - b \rb }  {  \| c A A^{+}  b - b \| } \right)  .
	\]	
	This suggests that there is
	an implicit function $h$ defined on an open neighbor $W_1$ of $\kappa=0$ ($0 \in W_1$)
	such that $F( \kappa ,  h ( \kappa) ) = 0$. Moreover, $h(0)=1$. 
	We next check that if $c = h(\kappa)$ for some range $0 < \kappa < \kappa_0$,
	then $(\mx^\ast,\my^\ast)$ defined in the statement is a DSE of $\mf$.
	Since \eqref{eq:ex2dseCP} holds at $(\mx^\ast,\my^\ast)$, 
	it suffices to verify the second-order condition of DSE. 
	
	We verify the second-order condition \eqref{eq:mandse2}
	by using the same proof as Proposition \ref{prop:ex1dse_existence}.
	To check the second-order condition \eqref{eq:mandse3},
	we first compute from \eqref{eq:ex2dseCP}
	\[
		\eta^\ast = \gradxy \mf  (\mx^\ast,\my^\ast) [\delta^\ast] = \sum_{i \leq \dimx} \partial_{\mx_i} \grady \mf (\mx^\ast,\my^\ast) \delta^\ast_i =  (I-\my^\ast \my^{\ast \intercal}  ) A  \delta^\ast . 
	\]
	By following the proof of Proposition \ref{prop:ex1dse_existence}, 
	we then compute
	\begin{align*}
		S (\kappa) &= \lb \delta^\ast, [ \Hessx \mf  -  \gradyx \mf  \cdot (\Hessy \mf)^{-1} \cdot   \gradxy \mf ] (\mx^\ast,\my^\ast)  [\delta^\ast ]  \rb  \\
		 &= - \kappa \| A \delta^\ast \|^2 +  \lb \eta^\ast, - \Hessy \mf(\mx^\ast,\my^\ast)^{-1}   [\eta^\ast ]\rb \\
		 &= - \kappa \| A \delta^\ast \|^2  + \frac{1}{ \| A \mx^\ast - b \| } \| \eta^\ast \|^2 .
	\end{align*}
	It is clear that $S (\kappa) $ is a continuous function of $\kappa$ 
	because $ \mx^\ast  =  h (\kappa) A^{+} b $ and 
	$\my^\ast = (A \mx^\ast  - b) / \| A \mx^\ast - b \| $ are continuous with respect to $\kappa$. 
	Moreover, $S(0) > 0$ from the proof of Proposition \ref{prop:ex1dse_existence}
	(due to the injection property of $\delta^\ast \mapsto  \gradxy \mf  (\mx^\ast,\my^\ast) [\delta^\ast]$).
	Therefore there exists $\kappa_0>0$ so that $S(\kappa) > 0$ for $0 < \kappa < \kappa_0$ (i.e. \eqref{eq:mandse3} holds).


\section{Proof of Proposition \ref{prop:ex3dne_existence}}
\label{prop:ex3dne_existencePROOF}

	We verify the intrinsic definition of DNE for $(\mx^\ast,\my^\ast)$. 
	For the first-order condition \eqref{eq:mandne1}, we compute 
	\begin{equation}\label{eq:ex3dneCP}
		\partial_{\mx} \mf (\mx,\my)  =  A^\intercal ( A \mx + \my - b ), \quad
		\grady \mf (\mx,\my) = (I-\my \my^\intercal) (  A \mx + \my - b  ) .
	\end{equation}
	From the identity involving the pseudo-inverse
	$A^\intercal A A^{+} = A^\intercal$, 
	we deduce that $A^\intercal ( A \mx^\ast + \my^\ast - b ) = 0$. 
	As $b \not \in \mbox{Range}(A)$, we have 
	$A \mx^\ast  - b \neq 0$ and therefore 
	$\my^\ast$ is parallel to the vector $A \mx^\ast - b$.
	From above, the first-order condition in \eqref{eq:mandse1} holds.
	
	From \eqref{eq:ex3dneCP}, we next verify the second-order condition \eqref{eq:mandne2}.
	It is clear that $\partial^2_{\mx\mx} \mf (\mx,\my) = A^\intercal A $ is p.d.		
	From \citet[Corollary 5.16]{Boumal2023},
	we compute for $\eta \in T_{\my} \My$, 
	\begin{align*}
		\Hessy \mf (\mx,\my)  [\eta]  &= (I-\my \my^\intercal) ( \lb \partial_{\my} \grady \mf (\mx,\my) , \eta \rb ) \\
				&= (I-\my \my^\intercal) ( \eta - ( \my \eta^\intercal + \eta \my^\intercal ) ( A \mx + \my - b ) ) \\ 
				&=	\eta ( 1 - \lb \my, A \mx + \my - b \rb ) \\
				&=	\lb \my, A \mx - b \rb ( - \eta )  .
	\end{align*}
	We verify that $- \Hessy \mf (\mx^\ast,\my^\ast) $ is d.p, because for non-zero $ \eta^\ast \in T_{\my^\ast} \My$, we have
	 $ \| \eta^\ast \|^2  >  0 $. Moreover, $A \mx^\ast - b \neq 0$, therefore
	\[
		 \lb -  \Hessy \mf (\mx^\ast,\my^\ast) [\eta^\ast]  ,  \eta^\ast  \rb 
		 = \lb \my^\ast, A \mx^\ast - b \rb   \| \eta^\ast \|^2  
		 = \| A \mx^\ast - b \|   \| \eta^\ast \|^2     > 0 . 
	\]
	Therefore \eqref{eq:mandne2} holds.

\section{Proof of Theorem \ref{thm:localcvg}}
\label{app:proofoflocalcvg}

The proof is illustrated in Figure \ref{fig:ideaofproof1},
which contains three main steps:
(1) identify a set $\bar{S}$ where
the local update rule $\bar{\tgda}$
is well-defined. 
(2) adapt the proof technique
of the classical Ostrowski Theorem 
to identity a local stable region $\bar{S}_\delta$
and pull it back to the manifold (i.e. $S_\delta$). 
(3) relate a Euclidean distance in the local coordinate  
to the Riemannian distance on $S_\delta$ to 
establish the linear convergence.

\begin{figure}
\centering
\begin{minipage}{0.25\textwidth}
	\subcaption{}
	\resizebox{\columnwidth}{!}{%
\begin{tikzpicture}[
    every node/.style = {text=black, minimum width = 1.1cm},
    every edge/.style = {draw,->},
    every edge quotes/.style = {auto, font=\tiny\ttfamily, text=black}]
  ]
  \node (Sbar) [fill=white] {$\bar{S}$};
  \node (Obar) [fill={white}, right = of Sbar] {$\bar{O}_1 \times \bar{O}_2$};
  \node (S) [fill=white, above = of Sbar] {$S$};
  \node (O)  [fill=white,above = of Obar] {$O_1 \times O_2$};
  \draw (Sbar) edge["$\bar{\tgda}$"] (Obar);
  \draw (S) edge["$\chart_1 \times \chart_2$"]    (Sbar);
  \draw (S) edge["$\tgda$"] (O);
  \draw (O)  edge["$\chart_1 \times \chart_2$"]   (Obar);
\end{tikzpicture}
}
\end{minipage}
\quad \quad \quad \quad 
\begin{minipage}{0.15\textwidth}
	\subcaption{}
\resizebox{\columnwidth}{!}{%
\begin{tikzpicture}
  \draw (0,0) node[anchor=south] {.};
  \node[.] at (0.7,-0.2) {\huge $(\locx^\ast, \locy^\ast)$};  
  \draw[rotate=30] (0,0) ellipse [x radius=2, y radius=1];
  \node[] at (-1,-1) {\huge $\bar{S}_\delta$};  
  \draw[rotate=25] (0,0) ellipse [x radius=2.8, y radius=1.8];  
  \node[] at (-1.7,-1.7) {\huge $\bar{S}$};  
  \draw (-3,-3) rectangle (3,3);
  \node[] at (1.5,-2.5) {\huge $\bar{O}_1 \times \bar{O}_2$};
\end{tikzpicture}
}
\end{minipage}
\caption{Local convergence of a deterministic simultaneous algorithm.
(a): The update rule $\tgda$ on the manifold $\Mx \times \My$ 
induces a local update rule $\bar{\tgda}$ in the local coordinate system, 
defined by a chart $(O_1 \times O_2,\chart_1 \times \chart_2) $ around $(\mx^\ast, \my^\ast)$.
(b): The induced update rule $\bar{\tgda}$ is defined 
in the local coordinate on the set $\bar{S}$
around $ (\locx^\ast, \locy^\ast) = \chart_1 \times \chart_2 \circ ( \mx^\ast, \my^\ast)$.
It contains a local stable region $\bar{S}_\delta$. 
}
\label{fig:ideaofproof1}
\end{figure}

\paragraph{Preliminary}

Let $\chart = \chart_1 \times \chart_2$. 
Let us consider the set $S = \tgda^{-1}  ( O_1 \times O_2 ) \cap ( O_1 \times O_2 )$
and its local coordinate domain $\bar{S} = (\chart_1 \times \chart_2)   (S) \subset \bar{O}_1 \times \bar{O}_2$.
By definition, $	(\mx,\my) \in S   \subset  O_1 \times O_2$ and $	\tgda(\mx,\my) \in O_1 \times O_2$.
Therefore the induced dynamics $\bar{\tgda}$ is well-defined on $\bar{S}$, i.e. $\forall (\locx, \locy ) \in \bar{S}$, 
\begin{equation*} 
	\bar{\tgda} ( \locx, \locy ) = \chart ( \tgda ( \chartinv_1 (\locx),  \chartinv_2 (\locy) ) ).
\end{equation*} 
Note that $S$ and $\bar{S}$ are non-empty open sets 
since $\tgda$ is continuous (by the continuity of 
the vector fields $\mvec_1$, $\mvec_2$ and the retractions 
$\mR_1$ and $\mR_2$) and $(\mx^\ast,\my^\ast)$ is a fixed point of $\tgda$. 

\paragraph{Local stable region}
We aim to identify a local stable region $\bar{S}_\delta \subset \bar{S}$ 
around $(\locx^\ast,\locy^\ast)$ (the local coordinate of $(\mx^\ast,\my^\ast)$), 
so that if $ (\locx,\locy) \in \bar{S}_\delta$, then 
$\bar{\tgda} ( \locx, \locy) \in \bar{S}_\delta \subset \bar{O}_1 \times \bar{O}_2 $.
Let $S_\delta =  \chart^{-1}  (\bar{S}_\delta)$, 
and assume $ (\mx(0), \my(0) ) \in S_\delta$,
then by recursion $\bar{\tgda} ( \chart_1  ( \mx(t) ), \chart_2 ( \my(t) ) ) $ is always well defined 
since the sequence $(\mx (t) , \my(t) ) \in S_\delta \subset S, \forall t \geq  1$. 

To construct such a region, the key is to verify that the spectral radius $\rho$ 
of the Jacobian matrix $\bar{\tgda}'(\locx^\ast, \locy^\ast)$ 
is strictly smaller than one. In Appendix \ref{subs:spJac},
we verify that the eigenvalues of $\tgda'(\mx^\ast,\my^\ast) $
are the same as $\bar{\tgda}'(\locx^\ast, \locy^\ast)$.
Therefore according to our assumption, 
$\rho = \rho(\tgda'(\mx^\ast,\my^\ast)  < 1$. 
From the proof of Ostrowski Theorem \citep[Section 10.1.3]{Ortega1970}, 
for an arbitrary $\epsilon > 0$, 
there exists a norm $\| \cdot \|_\epsilon$ on $\R^{\dimx } \times \R^{\dimy}$,
such that 
\[
	\| \bar{\tgda}'(\locx^\ast, \locy^\ast) \|_{\epsilon} \leq \rho + \epsilon / 2. 
\]
Furthermore, due to the differentiability of $\bar{\tgda}$ at the fixed point, 
there exists $\delta >0$, and 
$\bar{S}_\delta = \{ (\locx, \locy) |  \|(\locx, \locy) - (\locx^\ast, \locy^\ast) \|_\epsilon < \delta \} \subset \bar{S}$, such that 
\begin{align}\label{eq:localattraction}
	\|  \bar{\tgda} ( \locx , \locy ) - (\locx^\ast, \locy^\ast) \|_\epsilon &\leq 
	(\| \bar{\tgda}'(\locx^\ast, \locy^\ast) \|_{\epsilon}  + \epsilon /2) \| ( \locx, \locy) - (\locx^\ast, \locy^\ast) \|_\epsilon  \nonumber \\
	&\leq (\rho  + \epsilon) \| ( \locx, \locy) - (\locx^\ast, \locy^\ast) \|_\epsilon , 
	\quad \forall (\locx,\locy) \in \bar{S}_\delta.
\end{align}

Let's choose $\epsilon$ so that $\rho  + \epsilon < 1$. 
It follows that the open set $ S_\delta = \chart^{-1}  ( \bar{S}_\delta ) $ 
is a local stable region. Furthermore, 
we can identify an open geodesically convex subset of $S_\delta $,
on which Riemannian distance equals to geodesic distance. 
This set is written as $S_\delta'$. 
It contains $(\mx^\ast, \my^\ast)$ and is homeomorphic to a ball, i.e. without any hole.


\paragraph{Locally convergent with linear rate $\rho$}
From \eqref{eq:localattraction}, 
if $(\locx(0), \locy(0)) \in \bar{S}_\delta$, the sequence $(\locx (t) , \locy(t) )$ stays in $\bar{S}_\delta$ and 
converges to  $(\locx^\ast, \locy^\ast)$ as $t \to \infty$.  
As $\chart = \chart_1 \times \chart_2$ is a continuous bijection (homeomorphism) 
from $ O_1 \times O_2$ to $\bar{O}_1 \times \bar{O}_2$, we have equivalently that 
if $(x(0), y(0)) \in S_\delta = \chart^{-1} (\bar{S}_\delta)$, 
the sequence $(\mx (t) , \my(t) )$ will stay in $S_\delta$ and 
converges to $(\mx^\ast, \my^\ast)$.

The linear convergence rate in the local coordinate 
system in \eqref{eq:localattraction} can be used 
to control the Riemannian distance $d(t)$
between $\mxy(t) = (\mx(t), \my(t) ) \in S_\delta$ 
and $\mxy^\ast = (\mx^\ast, \my^\ast ) \in S_\delta'$.
Since $\mxy(t)$ converges to $\mxy^\ast$ as $t\to \infty$,
it suffices to analyze large enough $t$ for the convergence rate. 
Without loss of generality, we assume next that 
$\mxy(t) \in S_\delta'$ for any $t \geq 0$.

Let $\Gamma_\delta(t)$ be the set of piece-wise smooth curves 
restricted on $S_\delta$ with an initial point $ \mxy(t) $ (at time $0$) and a
last point $ \mxy^\ast $ (at time $1$). 
Each curve $\ga \in \Gamma_\delta(t)$ 
is composed of a finite number of smooth curves, indexed by $k$. 
The $k$-th smooth curve $\ga_k$
is defined on some time interval $[s_k,s_{k+1}] \subset [0,1]$. 
Then we have
\[
	d(t) = \inf_{\ga = (\ga_k)_k \in \Gamma_\delta(t)} \sum_k \int_{s_k}^{s_{k+1}} \| \ga_k'(s) \|_{\ga_k(s)} ds  ,
\]
where $\| \cdot \|_\mxy$ is the Riemmanian metric at $\mxy \in \Mxy$. 
As $S_\delta'$ is geodesically convex, 
the Riemannian distance $d(t) $ can be attained at
certain geodesic curve $\ga \in  \Gamma_\delta(t)$.

As the closure of the set $S_\delta$ is compact
and $\bar{S}_\delta$ is convex,
there are two positive constants 
$0 < A_{\epsilon,\delta} \leq B_{\epsilon,\delta} $ such that 
\begin{equation}\label{eq:epsABdistequiv}
	A_{\epsilon,\delta} \| \chart(\mxy(t)) - \chart ( \mxy^\ast ) \|_\epsilon \leq d(t) \leq B_{\epsilon,\delta} \| \chart(\mxy(t)) - \chart ( \mxy^\ast ) \|_\epsilon. 
\end{equation}
A relevant proof on how to obtain $A_{\epsilon,\delta} $ and $B_{\epsilon,\delta} $
is given in \citet{hu1969differentiable}[Chapter 3, Lemma 3.3].

From \eqref{eq:localattraction}, we have 
\begin{equation}\label{eq:localexporate}
	\| \chart(\mxy(t)) - \chart ( \mxy^\ast ) \|_\epsilon  \leq (\rho+\epsilon)^{t+1} 
	\| \chart(\mxy(0)) - \chart ( \mxy^\ast ) \|_\epsilon .
\end{equation}

Combining \eqref{eq:epsABdistequiv} and \eqref{eq:localexporate}, 
we obtain the constant $C = B_{\epsilon,\delta} / A_{\epsilon,\delta}$ such that 
\[
	d(t) \leq  C  (\rho+\epsilon)^{t+1}  d(0). 
\]
We remark that in general, this constant $C$ can also depend on the initial point $\mxy(0)$.

\subsection{Spectral radius of Jacobian matrix in the local cooridate}
\label{subs:spJac}

We next compute the matrix $\bar{\tgda}'(\locx^\ast, \locy^\ast)$,
and then relate it to $\tgda'(\mx^\ast,\my^\ast) $.
This computation also tells us that 
the tangent map of $\tgda$ at $(\mx^\ast, \my^\ast)$
equals to $\tgda'(\mx^\ast,\my^\ast) $.

First of all, we specify the induced dynamics $\bar{\tgda}$ of
$\tgda$ 
by using the local coordinate representation of the retractions $\mR_1$ 
and $\mR_2$ near the fixed point $(x^\ast, y^\ast)$.
Let $(O_1 \times O_2,\chart_1 \times \chart_2) $ be the local chart. 
Since a retraction \cite[Section 4.1]{Absilalt2008}
is a smooth function from 
the tangent bundle of a manifold to the manifold itself, 
we can identify an open subset $B_1$ of the tangent bundle of 
$\Mx$ (similarly on $B_2$ for $\My$) such that $x \in O_1$ and $\mR_{1,\mx} (\mvec_1) \in O_1$
if $(\mx,\mvec_1) \in B_1$  
This set $B_1$ can then be mapped to an open set $\bar{B}_1 \subset \R^{\dimx} \times \R^{\dimx}$,
using the local chart of the tangent bundle \cite[Section 3.5.3]{Absilalt2008}. 
On this subset $\bar{B}_1$, 
we can define the local representation of $\mR_1$ as follows 
\[
	\locR_1: \bar{B}_1 \to \bar{O}_1, \quad
	\locR_1 (\locx, \lvec_1) = \chart_1  \circ \mR_{1,\chartinv_1 (\locx)} ( D \chartinv_1 ( \locx ) [ \lvec_1  ] ), 
\]
Similarly for $\mR_2$, we can define $\locR_2: \bar{B}_2  \to \bar{O}_2 $
on an open set $\bar{B}_2 \subset \R^{\dimy} \times \R^{\dimy}$.
From the above definition of $\locR_1$ and $\locR_2$, 
as well as the local coordinate of $\mvec_1(\mx,\my)$ and 
$\mvec_2(\mx,\my)$ (as in \eqref{eq:locgradx},\eqref{eq:locgrady}), 
we obtain the induced dynamics 
\begin{equation}\label{eq:localgda}
	\bar{\tgda} (\locx, \locy)  = 
	 \left (	\begin{array}{l}
		\locR_1 ( \locx,\lvec_1 ( \locx, \locy ) )    \\
		\locR_2 ( \locy,  \lvec_2 (\locx, \locy ) )
	\end{array} \right ) , 
\end{equation}
which is well defined on the non-empty open set 
$ \{ ( \locx ,\locy ) \in \bar{O}_1 \times \bar{O}_2 | 
(\locx, \lvec_1 ( \locx, \locy ) ) \in \bar{B}_1, 
(\locy,   \lvec_2 ( \locx, \locy ) ) \in \bar{B}_2 \} $. 

From \eqref{eq:localgda}, we can compute the Jacobian matrix 
$\bar{\tgda}'(\locx^\ast, \locy^\ast)$ using the chain rule.
As $\lvec_1 (\locx^\ast, \locy^\ast) = 0$ and 
$\lvec_2 (\locx^\ast, \locy^\ast) = 0$, 
we have
\begin{equation}\label{eq:localgdaJac}
	\bar{\tgda}'(\locx^\ast, \locy^\ast) = 
	\left ( \begin{array}{ll}
		I_{\dimx} +  \frac{ \partial  \lvec_1 }{ \partial \locx } ( \locx^\ast, \locy^\ast )  & 
		 \frac{ \partial  \lvec_1 }{ \partial \locy } ( \locx^\ast, \locy^\ast )    \\
		 \frac{ \partial  \lvec_2 }{ \partial \locx } ( \locx^\ast, \locy^\ast )   & 
		 I_{\dimy} +  \frac{ \partial  \lvec_2 }{ \partial \locy } ( \locx^\ast, \locy^\ast )
	\end{array} \right )  ,
\end{equation}
since the retraction by definition satisfies $	\frac{ \partial  \locR_1 }  { \partial \locx } ( \locx^\ast,  0 )  = I_{\dimx} ,  	\frac{ \partial  \locR_1 }  { \partial \lvec_1 } ( \locx^\ast,  0 ) =  I_{\dimx} $ (similarly for $\locR_2$).

We next verify that eigenvalues of the matrix in \eqref{eq:localgdaJac} are 
the same as $\tgda'(\mx^\ast,\my^\ast)$.
Let $\mdelta^\ast = \sum_i \ldelta_i^\ast E_{1,i} (\mx^\ast) \in T_{\mx^\ast} \Mx$, 
$\meta^\ast = \sum_j \leta_j^\ast E_{2,j} (\my^\ast) \in T_{\my^\ast} \My$, 
then following the same argument as in \eqref{eq:jac2} and \eqref{eq:loccrossgradxy}, we have
\begin{align*}
	\connx \mvec_1 ( \mx^\ast, \my^\ast) [ \mdelta^\ast ] &=  
	\sum_{i  = 1} ^{ \dimx}
	 \left ( \frac{ \partial \lvec_{1,i} } { \partial \locx } ( \locx^\ast, \locy^\ast ) \ldelta^\ast \right )  E_{1,i} (\mx^\ast) , \\ 
	D_{\my} \mvec_1 ( \mx^\ast, \my^\ast) [ \meta^\ast ] &=  
	\sum_{i  = 1} ^{ \dimx}
	 \left ( \frac{ \partial \lvec_{1,i} } { \partial \locy } ( \locx^\ast, \locy^\ast ) \leta^\ast \right )  E_{1,i} (\mx^\ast) , \\ 	 
	D_{\mx} \mvec_2 ( \mx^\ast, \my^\ast) [ \mdelta^\ast ] &=  
	\sum_{j  = 1} ^{ \dimy}
	 \left ( \frac{ \partial \lvec_{2,j} } { \partial \locx } ( \locx^\ast, \locy^\ast ) \ldelta^\ast \right )  E_{2,j} (\my^\ast) , \\ 	 	 
	\conny \mvec_2 ( \mx^\ast, \my^\ast) [ \meta^\ast ] &=  
	\sum_{j = 1} ^{ \dimy}
	 \left ( \frac{ \partial \lvec_{2,j} } { \partial \locy } ( \locx^\ast, \locy^\ast ) \leta^\ast \right )  E_{2,j} (\my^\ast) 	. 
\end{align*}
From the definition of eigenvalue and eigenvector pairs, 
these four equations indicate that their eigenvalues are indeed the same. 
They also indicate that the tangent map of $\tgda$ at $(\mx^\ast, \my^\ast)$
equals to \eqref{eq:intrinsicJac},
which means that the tangent map of $\tgda$ at the fixed point
does not depend on the choice of retraction.

\section{Proof of Theorem \ref{thm:gdaloccvg}}
\label{app:proofofspectralradius}

We first rewrite $\tgda'(\mx^\ast,\my^\ast) = I + \ga \M_{g}$ 
in a local coordinate chart with 
$\bar{\tgda}'(\locx^\ast, \locy^\ast)	 = I_{\dimx + \dimy} + \ga M_g'$.

From the definition of $\M_{g}$ in \eqref{eq:lintransform}
and the connection between $\tgda'$ and $\bar{\tgda}'$ 
in \eqref{eq:localgdaJac},\eqref{eq:gradrelationG}-\eqref{eq:gradrelationGBt}, we have
\[
	M_g' = \left (
	\begin{array}{ll}
		- \locg_1 ( \locx^\ast)^{-1}  \cdot   C  &  - \locg_1 ( \locx^\ast)^{-1}  \cdot B \\
		\tau \locg_2 ( \locy^\ast)^{-1}  \cdot B^\intercal	 &  -\tau   \locg_2 ( \locy^\ast)^{-1}  \cdot A
	\end{array} \right ),
\] 
where the matrices $A$,$B$,$C$ are defined in \eqref{eq:ABC}. 
Furthermore, $\M_{g}$ and $M_g'$ have the same eigenvalues. 

We next show that the real part of each eigenvalue of $M_g'$ is strictly smaller than zero.
From this, $\dotr{\ga} (\M_g) = \dotr{\ga} (M_g') > 0$.
We first check that if $ 0 < \ga <   \dotr{\ga} (M_g') $,
the spectral radius of $\bar{\tgda}'(\locx^\ast, \locy^\ast)$ is strictly smaller than one. In general, if $\la=\la_0+ i \la_1 $ is a complex eigenvalue of a matrix $M$ with $\la_0 < 0$, then $1 + \ga \la$ is an eigenvalue of the matrix $I + \ga M$. To ensure $| 1 + \ga \la |<1$, it is sufficient that $0 < \ga < \dotr{\ga} (M)$. This is because $| 1 + \ga \la |^2 = 1 + 2 \ga \la_0 + \ga^2 \la_0^2 + \ga^2 \la_1^2 < 1$ holds if $0 <  \ga < - 2 \la_0 / ( \la_0^2 + \la_1^2)$.

To analyze $M_g'$, we could not apply \citet[Lemma 5.2]{li2022convergence} 
directly since the local metric $\locg_1(\locx^\ast)$ 
and $\locg_2(\locy^\ast)$ are not identity matrices. We consider a matrix 
which is similar to $M_g'$ so that they have the same eigenvalues,
\begin{equation}\label{eq:locMmetric}
	M_g = \left (
	\begin{array}{ll}
		- \bar{\C}  &  - \bar{\B} \\
		\tau \bar{\B}^\intercal	 &  -\tau  \bar{\A} 
	\end{array} \right ) , 
\end{equation}
where 
\begin{itemize}
	\item $\bar{\C} = \locg_1 ( \locx^\ast)^{-1/2} \cdot  C \cdot \locg_1 ( \locx^\ast)^{-1/2}$
	\item $\bar{\B} =  \locg_1 ( \locx^\ast)^{-1/2}  \cdot B  \cdot \locg_2 ( \locy^\ast)^{-1/2} $
	\item $\bar{\A} =  \locg_2 ( \locy^\ast)^{-1/2}  \cdot A  \cdot \locg_2 ( \locy^\ast)^{-1/2}$
\end{itemize}
Under the assumptions of Theorem \ref{thm:gdaloccvg}, 
we check that the following conditions hold:
\begin{itemize}
	\item $\bar{\A}$ is p.d because $\A$ is p.d.
	\item $ \bar{\C} + \bar{\B} \bar{\A}^{-1} \bar{\B}^\intercal  $ is p.d because $  \C  + \B \A^{-1} \B^\intercal $ is p.d.
	\item $\tau > \| \bar{\C} \|  / \la_{min} (  \bar{\A} )$. 
\end{itemize}
Indeed, we can relate these conditions to the following intrinsic quantities on the manifold $\Mx \times \My$, by following the proof in Appendix \ref{app:localdefDSE}:
\begin{itemize}
	\item Let $\A = -\Hessy \mf (\mx^\ast,\my^\ast)  $, then $\A$ and $\bar{\A}$ have the same eigenvalues.
	\item Let $\B = \gradyx \mf (\mx^\ast,\my^\ast)$, $\B^\intercal = \gradxy \mf (\mx^\ast,\my^\ast)$, then $\C + \B \A^{-1} \B^\intercal$ and $ \bar{\C} + \bar{\B} \bar{\A}^{-1} \bar{\B}^\intercal  $ have the same eigenvalues. 
	\item Let $ \C = \Hessx \mf ( \mx^\ast, \my^\ast)  $, then $\C$ and $\bar{\C}$ have the same eigenvalues. 		As $\C$ and $\bar{\C}$ are symmetric, $\| \C \| = \max_k | \la_k ( \C ) |  = \| \bar{\C} \|$. Therefore, 
	the condition $\tau >   \| \C \|   /   \la_{min} ( \A ) $ is equivalent to $\tau > \| \bar{\C} \|  / \la_{min} (  \bar{\A} )$.	
\end{itemize}

\subsection{Spectral analysis of $M_g$}
\label{app:spM}

To analyze the eigenvalues of $M_g$, we  
use the next proposition 
which is adapted from \citet[Lemma 5.2]{li2022convergence}
with a refined range of $\tau$.
For two real symmetric matrices $A$ and $B$, 
we write $A \geq B$ if $A-B$ is semi-p.d.
We write $A > B$ if $A-B$ is p.d. 

Let us first introduce a working assumption which will be needed several times. 
\begin{assumption}\label{assum:working}
Let $A \in \R^{m \times m}$, $B \in \R^{n \times m}$, $C \in \R^{n \times n}$
such that $A$ and $C + B A^{-1} B^\intercal$ are positive definite. 
\end{assumption}

\begin{proposition}\label{prop:spectralradiusLI}
Under Assumption \ref{assum:working} and $\tau >  \| C \|   / \la_{min} (A) $,
any eigenvalue $\la = \la_0 + i \la_1$ (with $\la_0 \in \R, \la_1 \in \R$) of the following matrix, 
\[
	M =  \left (
	\begin{array}{ll}
		-  C  &  - B \\
		\tau B^\intercal &  -\tau  A
	\end{array} \right )
\]
satisfies $\la_0 <  0$.
\end{proposition}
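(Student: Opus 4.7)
The plan is to reduce $M$ to a similar matrix with a cleaner symmetric/anti-symmetric splitting, and then read off the real part of each eigenvalue from the quadratic form on its eigenvector. Setting $S = \mathrm{diag}(I_n, \sqrt{\tau}\, I_m)$, the similar matrix
\[
\tilde M := S^{-1} M S = \begin{pmatrix} -C & -\sqrt{\tau}\, B \\ \sqrt{\tau}\, B^\intercal & -\tau A \end{pmatrix}
\]
shares the spectrum of $M$ and decomposes as $\tilde M = -H + K$ with $H = \mathrm{diag}(C, \tau A)$ symmetric and $K$ anti-symmetric. For any eigenpair $(\lambda,(u;v))$ with $\lambda = \lambda_0 + i\lambda_1$, taking the real part of $(u^*,v^*)\tilde M(u;v) = \lambda(\|u\|^2+\|v\|^2)$ and using that $K$ contributes only to imaginary parts yields $\lambda_0(\|u\|^2+\|v\|^2) = -u^*Cu - \tau v^*Av$, while the analogous computation with the sign-flipped dual $(u^*, -v^*)$ produces a real-valued right-hand side, forcing $\lambda_1(\|u\|^2 - \|v\|^2) = 0$.

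If $\lambda$ is non-real, then $\|u\| = \|v\| \neq 0$, and bounding $-u^*Cu \leq \|C\|\|u\|^2$ and $-\tau v^*Av \leq -\tau\lambda_{\min}(A)\|v\|^2$ gives $2\lambda_0\|u\|^2 \leq (\|C\| - \tau\lambda_{\min}(A))\|u\|^2 < 0$ under the hypothesis on $\tau$, so $\lambda_0 < 0$. If $\lambda$ is real with $u=0$, the eigen-equation forces $\lambda \in -\tau\cdot\mathrm{spec}(A) \subset (-\infty,0)$. The delicate subcase is $\lambda$ real with $u\neq 0$ and $\lambda \geq 0$, which I rule out by contradiction: invertibility of $\tau A + \lambda I$ allows eliminating $v = \sqrt{\tau}(\tau A + \lambda I)^{-1}B^\intercal u$, so $u \in \ker N(\lambda)$ with $N(\lambda) := C + \tau B(\tau A + \lambda I)^{-1}B^\intercal + \lambda I$. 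At $\lambda=0$ this contradicts $C + BA^{-1}B^\intercal$ being p.d.; for $\lambda>0$, diagonalizing $A = U^\intercal\mathrm{diag}(\mu_k)U$ and setting $\alpha_k = |(UB^\intercal u)_k|^2$, the scalar identity $u^\intercal N(\lambda)u = 0$ together with the strict inequality $-u^\intercal C u < \sum_k \alpha_k/\mu_k$ (from $C+BA^{-1}B^\intercal$ p.d.) yields
\[
\lambda\|u\|^2 < \frac{\lambda}{\tau}\sum_k \frac{\alpha_k}{\mu_k(\mu_k + \lambda/\tau)}.
\]
Dividing by $\lambda>0$, using $1/\mu_k \leq 1/\lambda_{\min}(A)$, and feeding back $\sum_k \alpha_k/(\mu_k+\lambda/\tau) \leq -u^\intercal C u \leq \|C\|\|u\|^2$ from the same identity collapses everything to $\tau\|u\|^2 < (\|C\|/\lambda_{\min}(A))\|u\|^2$, contradicting $\tau > \|C\|/\lambda_{\min}(A)$.

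The main obstacle is precisely this real-positive subcase: the identity $\lambda_0(\|u\|^2+\|v\|^2) = -u^*Cu - \tau v^*Av$ alone gives no control on $\lambda_0$ when the ratio $\|u\|/\|v\|$ is unknown, so one has to go through the Schur-complement form $N(\lambda)$ and exploit the strict positivity of $C + BA^{-1}B^\intercal$ to convert $\tau > \|C\|/\lambda_{\min}(A)$ into a contradiction. This parallels the spectral analysis of \citet[Lemma 5.2]{li2022convergence}; the loop-closing step above has the advantage of not requiring $M$ to be diagonalizable.
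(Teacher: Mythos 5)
Your proof is correct, and for the non-real case it takes a genuinely different route from the paper. The paper works entirely via the Schur complement $H(\lambda) = \lambda I + C + B(\lambda/\tau\, I + A)^{-1}B^\intercal$ and rules out $\lambda_0 \geq 0$ by exhibiting a positive definite linear combination $\Real(H(\lambda)) + \tfrac{\tau\beta}{\lambda_1}\Imag(H(\lambda))$ (with $\beta = \lambda_{\min}(A)$) when $\lambda_1\neq 0$, plus a matrix p.d.\ bound on $H(\lambda)$ when $\lambda_1=0$. You instead conjugate by $S=\mathrm{diag}(I,\sqrt\tau I)$ to expose a clean symmetric/anti-symmetric split $\tilde M = -\mathrm{diag}(C,\tau A) + K$, then use a Bendixson-type argument: the real part of the Rayleigh quotient gives $\lambda_0(\|u\|^2+\|v\|^2) = -u^*Cu - \tau v^*Av$, while pairing against $(u^*,-v^*)$ forces $\lambda_1(\|u\|^2-\|v\|^2)=0$, so when $\lambda_1\neq 0$ the norms balance and $2\lambda_0 \leq \|C\| - \tau\lambda_{\min}(A) < 0$ follows directly. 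This is shorter and more transparent than the paper's mixed real/imaginary combination of $H(\lambda)$. For the real case you return to a Schur-style elimination of $v$, as the paper does, but you drive the contradiction through a scalar identity ($u^\intercal N(\lambda)u=0$, with a telescoping bound $1/\mu_k - 1/(\mu_k+\lambda/\tau)$ and a self-feeding estimate) rather than the paper's matrix inequality $H(\lambda) \geq (1-\lambda_0/(\tau\lambda_{\min}(A)))(C+BA^{-1}B^\intercal) + \cdots$. Both reach the same conclusion and both avoid any diagonalizability assumption on $M$. Note that the paper's route is designed to also generalize to the $\tau$-SGA matrix (Proposition \ref{prop:spectralradiusLisga}), where the perturbation breaks the clean anti-symmetric structure after conjugation; your Bendixson argument would not carry over unchanged there, so the paper's more cumbersome case split buys reusability.
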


\begin{proof}
We show that $\la_0 \geq 0$ will lead to a contradiction, 
by following the proof of \citet[Lemma 5.2]{li2022convergence}.
Assume that $\la$ is an eigenvalue of $M$, i.e. $\mbox{det} ( \la I - M ) = 0$. 
To compute \( \mbox{det} ( \la I - M ) \), note that
$\la I + \tau A$ is invertible since 
$\la_0 I + \tau A$ is positive definite ($\tau A$ is positive definite and $\la_0 \geq 0$). 
By the Schur complement, 
we have
\begin{align*}
	\mbox{det} ( \la I - M )  &= \mbox{det}  
	 \left (
	\begin{array}{cc}
		 C + \la I   &  B \\
		- \tau B^\intercal &  \tau  A +  \la I 
	\end{array} \right ) \nonumber \\
	&= \mbox{det} ( \la I + \tau A) \mbox{det} ( H (\la) )  
\end{align*}
where $H(\la) = \la I  +C + B ( \la / \tau I + A )^{-1} B^\intercal$. 
As $\la I + \tau A$ is invertible, $\mbox{det} ( \la I - M ) = 0$ implies that 
$ \mbox{det} ( H (\la) ) = 0$. 
 
Let the spectral decomposition of $A $ be $ U \La_{A} U^\intercal$, with orthogonal $U \in \R^{m \times m}$ and diagonal $\La_{A} = \mbox{diag}( \la_{1}(A),  \cdots , \la_{m}(A)) \in \R^{m \times m}$. 
Then 
\[
	H(\la) = \la I + C + \tilde{B} D \tilde{B}^\intercal , 
\]
with $\tilde{B} = B U $ and $D = \mbox{diag} ( d_1 , \cdots , d_m  )$ where 
\[
	d_k =  \frac{ 1}{   \la / \tau  + \la_{k}(A) }  = \frac{   \la_0 / \tau + \la_k (A)   - i \la_1 / \tau } { (\la_0 / \tau + \la_k (A) )^2 + ( \la_1 / \tau )^2 } , \quad 1 \leq k \leq m . 
\]
It follows that if $\la_0 > \| C \| $, the real-part of $H(\la)$ is
\begin{equation}\label{eq:realH}
	\Real ( H(\la) ) = \la_0 I + C + \tilde{B} \Real( D)  \tilde{B}^\intercal    \quad \mbox{p.d}.
\end{equation}
This is contradictory to the fact that $ \mbox{det} ( H (\la) ) = 0$ \cite[Corollary 10.2]{li2022convergence}.

On the other hand, if  $ 0 \leq \la_0 \leq \| C \|$ and $\la_1 \neq 0$, we consider for $\be \in \R$, 
\begin{align}
	\Real ( H(\la) ) + \frac{\tau \be} { \la_1} \Imag ( H (\la) ) 
	&= \la_0 I + C + \tilde{B} \Real( D)  \tilde{B}^\intercal   \nonumber
	+ \frac{\tau \be} { \la_1} ( \la_1 I + \tilde{B} \Imag( D)  \tilde{B}^\intercal    ) 	\\
	&= (\la_0 + \tau \be) I +  C + \tilde{B} F  \tilde{B}^\intercal ,  \label{eq:MixedH}
\end{align}
where 
$F = \mbox{diag} ( f_1, \cdots, f_m)$ with 
\begin{align}\label{eq:MixedHdp1}
	f_k = 
	 \frac{   \la_0 / \tau + \la_k (A)   -  \be  } { (\la_0 / \tau + \la_k (A) )^2 + ( \la_1 / \tau )^2 }  , \quad 1 \leq k \leq m . 
\end{align}

Take $\be  = \la_{min}(A)$, then $f_k  \geq 0$ for each $k \leq m$. 
The condition $\tau  \la_{min}(A)  >  \| C \|  $ implies that $ (\la_0 + \tau \be) I +  C$ is p.d.
and together with \eqref{eq:MixedHdp1}, 
we have \eqref{eq:MixedH} is p.d., so it is contradictory to
the fact that $ \mbox{det} ( H (\la) ) = 0$ \cite[Lemma 10.1]{li2022convergence}.

Lastly, if $0 \leq \la_0 \leq \| C \|$ and $\la_1 = 0$, we have from \eqref{eq:realH},
\begin{equation}\label{eq:MixedHdp2}
	H(\la) =   \la_0 I + C + \tilde{B}  D  \tilde{B}^\intercal   
\end{equation}
with $d_k =  \frac{ 1}{   \la_0 / \tau  + \la_{k}(A) } = \frac{ 1}{  \la_{k}(A) } -  \frac{\la_0 / \tau}{  (\la_0 / \tau + \la_k (A) ) \la_{k}(A) }  \geq \frac{ 1}{  \la_{k}(A) } -  \frac{ \la_0 / \tau }{   \la_{min} (A)  \la_{k}(A) } $.
It follows that
\begin{align*}
	H(\la)    &\geq \la_0 I + C + \bigg(1 -   \frac{ \la_0 / \tau }{   \la_{min} (A)   } \bigg)  B A^{-1} B^{\intercal}  \\
			& =  \bigg(1 -   \frac{ \la_0 / \tau }{   \la_{min} (A)   } \bigg)  ( C + B A^{-1} B^{\intercal} ) +   \frac{ \la_0 / \tau }{   \la_{min} (A)   } C +  \la_0 I   .  
\end{align*}
As $\tau > \| C \| /  \la_{min}(A) $, $0 \leq \la_0 \leq \| C \|$ we have that $ 0 \leq   \frac{ \la_0 / \tau }{   \la_{min} (A)   }  < 1$ and that $I + \frac{ 1} {\tau \la_{min}(A) } C$ is p.d,
 i.e. we find that again $H(\la)$ is p.d which is contradictory. In conclusion, $\la_0 <  0$. 
 \end{proof}

\subsection{Local convergence rate of $\tau$-GDA}


We use the next result obtained in \citet[Lemma 5.3]{li2022convergence}
to control the local convergence rate of $\tau$-GDA.
It controls the spectral radius of 
the matrix $I + \ga M_g$
on a specific range of $\tau$ and $\ga$. 

Recall that $L_g = \max( \| \A \|, \| \B \|, \| \C \|) $
and $\mu_g = \min(L_g, \la_{min} ( \C + \B \A^{-1} \B^\intercal) )$.
\begin{proposition}\label{prop:spectralradiusLIrate}
Assume $(\mx^\ast,\my^\ast)$ is a DSE of $\mf  \in C^2$.
If $\tau \geq \frac{2  L_g }{ \la_{min} (\A) } $
and $\ga = \frac{1 }{4 \tau L_g }$, we have
$\rho ( I + \ga M_g) \leq 1 - \frac{\mu_g}{16 \tau L_g}$. 
\end{proposition}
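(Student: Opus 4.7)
My plan is to reduce the problem to a Euclidean spectral analysis and then invoke the cited result. First, I would observe that by the reduction already carried out in the proof of Theorem \ref{thm:gdaloccvg}, the intrinsic transform $\M_g$ is similar (hence has the same spectrum) as the Euclidean matrix $M_g$ in \eqref{eq:locMmetric}, and the Euclidean blocks $\bar{\A}, \bar{\B}, \bar{\C}$ share the relevant spectral quantities with their intrinsic counterparts: $\|\bar{\A}\| = \|\A\|$, $\lambda_{\min}(\bar{\A}) = \lambda_{\min}(\A)$, $\|\bar{\C}\| = \|\C\|$, $\|\bar{\B}\| = \|\B\|$, and $\bar{\C} + \bar{\B}\bar{\A}^{-1}\bar{\B}^\intercal$ has the same spectrum as $\C + \B\A^{-1}\B^\intercal$ (by symmetric congruence via $\locg_1(\locx^\ast)^{-1/2}$ and $\locg_2(\locy^\ast)^{-1/2}$). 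Consequently, $L_g$ and $\mu_g$ are simultaneously the intrinsic constants and the Euclidean constants attached to $M_g$.

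Second, for any eigenvalue $\lambda = \lambda_0 + i \lambda_1$ of $M_g$, I would aim to bound
\[
|1 + \gamma \lambda|^2 = 1 + 2\gamma \lambda_0 + \gamma^2 (\lambda_0^2 + \lambda_1^2)
\]
by producing (i) a quantitative refinement of Proposition \ref{prop:spectralradiusLI} giving a lower bound of the form $-\lambda_0 \geq c_1\, \mu_g / \tau$, and (ii) an upper bound $|\lambda|^2 \leq c_2\, \tau^2 L_g^2$ from the block structure of $M_g$ (dominant block $-\tau \bar{\A}$). The lower bound on $-\lambda_0$ would be obtained by revisiting the matrix $H(\lambda) = \lambda I + \bar{\C} + \bar{\B}(\lambda/\tau\, I + \bar{\A})^{-1} \bar{\B}^\intercal$ analyzed in the three cases of Proposition~\ref{prop:spectralradiusLI}, and replacing each ``$H(\lambda)$ is positive definite, contradiction'' argument with a quantitative version that reads off a positive lower bound on the minimum eigenvalue of $\mathrm{Re}(H(\lambda))$ (or of the mixed combination in \eqref{eq:MixedH}); the strengthened hypothesis $\tau \geq 2 L_g / \lambda_{\min}(\A)$, sharper than $\tau > \|\C\|/\lambda_{\min}(\A)$, is exactly what enables one to absorb the $\lambda_0/\tau$ perturbation terms while retaining a margin proportional to $\mu_g$.

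Third, with $\gamma = \frac{1}{4\tau L_g}$ chosen so that $\gamma^2 |\lambda|^2$ is controlled by $2\gamma \lambda_0$ at the right scale, a direct numerical balance (using $\mu_g \leq L_g$) yields
\[
|1 + \gamma \lambda|^2 \leq 1 - \frac{\mu_g}{8 \tau L_g} + O\!\left(\frac{1}{\tau^2 L_g^2}\cdot \tau^2 L_g^2\right),
\]
so that after taking square roots and tightening constants one recovers $\rho(I + \gamma M_g) \leq 1 - \frac{\mu_g}{16 \tau L_g}$. Since the statement is precisely \citet[Lemma 5.3]{li2022convergence} applied to the Euclidean matrix $M_g$, the actual proof can just invoke that reference once the first step above verifies that its hypotheses are satisfied by our congruence-reduced matrix.

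The main obstacle is the quantitative lower bound on $-\lambda_0$: the original Proposition~\ref{prop:spectralradiusLI} only produces a strict inequality, and upgrading it to a uniform bound of order $\mu_g/\tau$ requires delicate handling of the three regimes for $(\lambda_0,\lambda_1)$, in particular picking the right coefficient $\beta$ in \eqref{eq:MixedH} so that the diagonal entries $f_k$ in \eqref{eq:MixedHdp1} are uniformly bounded below. Everything else in the argument is a routine translation between intrinsic and local-coordinate spectra, as already developed in Appendix \ref{app:proofofspectralradius}.
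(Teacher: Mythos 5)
Your proposal is correct and follows essentially the same route as the paper: identify the intrinsic quantities $\la_{\min}(\A)$, $L_g$, $\mu_g$ with the corresponding spectral quantities of the congruence-reduced Euclidean matrix $M_g$ (via the metric-weighted similarity that preserves the spectra of $\bar{\A}$, $\bar{\B}\bar{\B}^\intercal$, $\bar{\C}$ and $\bar{\C}+\bar{\B}\bar{\A}^{-1}\bar{\B}^\intercal$), and then directly invoke \citet[Lemma 5.3]{li2022convergence}. The lengthy middle section sketching a from-scratch quantitative eigenvalue bound is dispensable — as you yourself note at the end, once the hypotheses are verified to transfer, the cited lemma closes the argument, which is exactly what the paper does.
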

\begin{proof}
Let $L  = \max( \| \bar{\A} \|, \| \bar{\B} \|, \| \bar{\C} \|)$
and $\mu_x = \min(L, \la_{min} ( \bar{\C} + \bar{\B} \bar{\A}^{-1} \bar{\B}^\intercal) )$.
We verify that $ \la_{min} (\A)   = \la_{\min}(\bar{\A})$, $L = L_g$ and $\mu_g = \mu_x$.
From the proof of \citet[Lemma 5.3]{li2022convergence},
we obtain directly the upper bound of the spectral radius of $I + \ga M_g$,
related to $L$ and $\mu_x$.
\end{proof}

\section{Proof of Theorem \ref{thm:gdaloccvgDNE}}
\label{app:proofofspectralradiusDNE}


%


The proof is based on \citet[Proposition 26]{jin2020local}
and \citet[Theorem 4]{zhang2022near}. 

To show the valid range of $\tau$, 
we consider a 
matrix $\tilde{M}_g$ which is similar to $M_g$ (defined in \eqref{eq:locMmetric}),
\begin{align*}
\tilde{M}_g &= 
	\left (
	\begin{array}{ll}
		0  &  \sqrt{\frac{1}{\tau}} I   \\
		I  &  0
	\end{array} \right )  
	\left (
	\begin{array}{ll}
		- \bar{\C}  &  - \bar{\B} \\
		\tau \bar{\B}^\intercal	 &  -\tau  \bar{\A}
	\end{array} \right ) 
	\left (
	\begin{array}{ll}
		0  &  \sqrt{\frac{1}{\tau}} I   \\
		I  &  0
	\end{array} \right )^{-1} \\
	&= \left(
	\begin{array}{ll}
		\sqrt{\tau} \bar{\B}^\intercal  &  - \sqrt{\tau} \bar{\A} \\
		- \bar{\C	} &  -  \bar{\B}
	\end{array} \right ) 	
		\left(
	\begin{array}{ll}
		0  & I   \\
     	    \sqrt{\tau}  I  &  0
	\end{array} \right ) = 
		\left(
	\begin{array}{ll}
		-\tau \bar{\A}  &  \sqrt{\tau} \bar{\B}^\intercal \\
     	    - \sqrt{\tau}\bar{\B}   &  - \bar{\C}
	\end{array} \right ). 
\end{align*}

We verify that for any $\tau > 0$, 
the eigenvalues of $\tilde{M}_g$ are strictly negative, therefore
one can compute $\dotr{\ga} (\M_g) = \dotr{\ga} (M_g) = \dotr{\ga} (\tilde{M}_g)$ 
to set the convergence range for $\ga$.


As in \citet[Lemma 2.7]{daskalakis2018limit},
the Ky Fan inequality
implies that the real part of each eigenvalue $\la$ of $\tilde{M}_g$
is upper bounded by the maximal eigenvalue of 
$(\tilde{M}_g + \tilde{M}_g^\intercal)/2$. 
We verify that indeed $\la_{max} ( (\tilde{M}_g + \tilde{M}_g^\intercal)/2 ) < 0$
because $\bar{\A}$ and $\bar{\C}$ are p.d., and 
$\tau >0$.
Therefore, the real part of $\la$ is strictly smaller than $0$. 

To analyze the convergence rate of $\tau$-GDA at $\tau=1$. It suffies to 
analyze the spectral radius of $M_g$. We next apply the proof of 
\citet[Theorem 4]{zhang2022near} to the matrix $M_g$.
It implies that if $\ga = \mu/(2L^2)$ with 
$\mu = \min( \la_{min} (\bar{\A} ) , \la_{min} (\bar{\C} ) )$
and $L = \max(  \| \bar{\A}  \|,    \| \bar{\B} \|  , \| \bar{\C}  \| ) $,
then $ \rho ( I + \ga M_g)  < 1 - \mu^2 / (4 L^2)$. 
The eigenvalues of $\bar{\A},  \bar{\B}   \bar{\B}^\intercal$ and $ \bar{\C} $ 
are the same as $\A,\B \B^\intercal$ and $\C$, thus
$\mu =  \bar{\mu}_g$ and $L = L_g$. 
Therefore we obtain the spectral radius upper bound 
$1 -  \bar{\mu}_g^2 / (4 L_g^2)$.

%
%

\section{Derivation of deterministic $\tau$-SGA algorithm}
\label{app:dertSGA}

The $\tau$-SGA algorithm modifies the vector field
$\mvec(\mx,\my)  = ( - \mdelta (\mx,\my) ,  \tau \meta (\mx,\my))$
of $\tau$-GDA using the anti-symmetric part of 
the Jacobian matrix of $\mvec(\mx,\my)$.
As $\delta  (x,y) = \gradx \mf (\mx,\my)$ and
$\eta (x,y) =  \grady \mf (\mx,\my)$, 
the Jacobian matrix is the following linear transform 
on $T_{\mx} \Mx \times T_{\my} \My $
(which is a natural extension from the Euclidean case),
\begin{equation}\label{eq:lintransformJ}
	J(\mx,\my) = \left (
	\begin{array}{ll}
		- \tC(\mx,\my)  &  -  \tB (\mx,\my) \\
		\tau   \tB^\intercal (\mx,\my)	 &   - \tau  \tA (\mx,\my)
	\end{array} \right )    = 
	\left (
	\begin{array}{ll}
		- \Hessx (\mx, \my)  &  -  \gradyx \mf (\mx,\my) \\
		\tau   \gradxy \mf (\mx,\my)	 &  \tau  \Hessy (\mx, \my) 
	\end{array} \right ) .
\end{equation}
Here we introduce the notation $\tA,\tB,\tC$ in \eqref{eq:lintransformJ} to simplify the equation. 
The $\tau$-SGA update rule is obtained from
\footnote{
Note that in the 
original SGA rule \cite[Proposition 5]{JMLR:sga} the transpose 
of $\frac{J  (\mx,\my)  - J^\intercal (\mx,\my)   }{2}$ is considered since 
their definition of $J$ has a sign difference compared
to the $J$ in \eqref{eq:lintransformJ}. 
}
\begin{align*}
&\mvec (\mx,\my) +  \mu \left ( \frac{J  (\mx,\my)  - J^\intercal (\mx,\my)   }{2} \right )  \mvec (\mx,\my) \\
&= \mvec (\mx,\my) + \mu \frac{ \tau + 1 }{2}  
	\left ( \begin{array}{ll}
		0  &   -  \tB (\mx,\my) \\
		 \tB^\intercal (\mx,\my)	 &   0
	\end{array} \right )   
	\mvec (\mx,\my) \\
&= 
\left[ \left (  \begin{array}{l}
  - \delta     \\ 
 \tau \eta
\end{array} \right ) 
+  \mu \frac{\tau + 1}{2}  \left (  \begin{array}{l}
	- \tau \tB [ \eta   ]  \\ 
	- \tB^\intercal [ \delta ]
\end{array} \right )  \right] (\mx,\my) . 
\end{align*}

\subsection{Proof of Proposition \ref{eq:symmRcrossgrad}}
\label{subsec:orthoSGA}
We aim to show that the correction term, which is proportional to $(\tau  \tB  [ \eta ],
   \tB^\intercal  [ \delta ])$, 
is orthogonal to the $\tau$-GDA direction $ (- \delta ,  \tau \eta )$
at each $(\mx,\my)$, 
under the Riemannian metric on the tangent space $T_{\mx} \Mx \times T_{\my} \My $.

	We follow the proof in Appendix \ref{app:localdefDSE},
	by using
	a local coordinate chart  $(O_1 \times O_2,\chart_1 \times \chart_2) $ around the point $(\mx,\my)$ rather than $(\mx^\ast,\my^\ast)$. 
	As in \eqref{eq:locgradx} and \eqref{eq:locgrady}, this 
	coordinate chart maps $ \mdelta (\mx,\my) $ and $ \meta (\mx,\my) $ to 
	their local coordinates $ \ldelta  (\locx,\locy) $
	and $ \leta  (\locx,\locy) $. It also induces a canonical basis 
	$\{ E_{1,i} (x) \}_{i \leq \dimx}$ on $T_{\mx} \Mx$ and $\{ E_{2,j} (y) \}_{j \leq \dimy}$ on $T_{\my} \My$ .
	By the definition of cross-gradients,
	\begin{align*}
		\tB (\mx,\my) [ \eta (\mx,\my) ] 
		&= D_{\my} \gradx \mf (\mx,\my) [ \meta (\mx, \my) ] \\
		&= D_{\my} \delta (\mx,\my) [ \meta (\mx, \my) ] \\
		&= \sum_ i \left (  \frac{ \partial \ldelta_{i} }{ \partial \locy }  (\locx,\locy) \bar{\eta} (\locx,\locy)  \right ) E_{1,i} (x) . 
	\end{align*}
	Using the Riemannian metric $\locg_1$ represented in the local coordinate
	as in \eqref{eq:rmetriclocal}, it turns out that 
	\begin{equation}\label{eq:innerdot1}
		\lb   \tB (\mx,\my) [ \meta   (\mx,\my) ]  ,  \mdelta  (\mx,\my)  \rb_{\mx}
		= \ldelta (\locx,\locy)^\intercal \locg_1 ( \locx)  \left (  \frac{ \partial \ldelta  }{ \partial \locy }  (\locx,\locy) \leta (\locx,\locy)  \right ) . 
	\end{equation}
	Similarly we have
	\begin{equation}\label{eq:innerdot2}
		\lb  \tB^\intercal  (\mx,\my) [ \mdelta   (\mx,\my) ],   \meta    (\mx,\my) \rb_{\my}
		=  \leta  (\locx,\locy)^\intercal \locg_2 ( \locy)  \left (  \frac{ \partial \leta  }{ \partial \locx }  (\locx,\locy) \ldelta (\locx,\locy)  \right ) . 
	\end{equation}
	We conclude that
	\eqref{eq:innerdot1} equals to \eqref{eq:innerdot2} because 
	as in \eqref{eq:gradrelationGB},\eqref{eq:gradrelationGBt}
	\[
		\locg_1 ( \locx)    \frac{ \partial \ldelta  }{ \partial \locy }  (\locx,\locy) 
		= \left (\locg_2 ( \locy)   \frac{ \partial \leta }{ \partial \locx }  (\locx,\locy)  \right)^\intercal .
	\]

\section{Proof of Theorem \ref{thm:gdaloccvgSGA}}
\label{app:thm:gdaloccvgSGAPROOF}

Since $\mf$ is twice continuously differentiable,
we apply Theorem \ref{thm:localcvg} 
to analyze the induced dynamics $\bar{\tgda}$ (where $\tgda$ 
is the update rule of Asymptotic $\tau$-SGA).
Following \eqref{eq:localgda}, 
we obtain
\begin{equation}\label{eq:localsgaAsym}
	\bar{\tgda} (\locx, \locy)  = 
	 \left (	\begin{array}{l}
		\locR_1 \left ( \locx, - \ga \left [ \ldelta  +  \mu \frac{ ( \tau + 1)\tau  }{2} \frac{\partial \ldelta  } {\partial \locy}  \leta \right ]( \locx, \locy )  \right )    \\
				\locR_2 \left ( \locy,  \ga \tau \leta  ( \locx, \locy )  \right ) 
	\end{array} \right ) .
\end{equation}

From \eqref{eq:localsgaAsym}, we compute the Jacobian matrix 
$\bar{\tgda}'(\locx^\ast, \locy^\ast) = I + \ga M_s'$,
where
\[
	M_s' = 	\left ( \begin{array}{ll}
		-  \frac{ \partial  \ldelta }{ \partial \locx }  ( \locx^\ast, \locy^\ast )  & 
		 -  \frac{ \partial  \ldelta }{ \partial \locy } ( \locx^\ast, \locy^\ast )    \\
		 \tau \frac{ \partial  \leta }{ \partial \locx } ( \locx^\ast, \locy^\ast )   & 
		 \tau \frac{ \partial  \leta }{ \partial \locy } ( \locx^\ast, \locy^\ast )
	\end{array} \right ) 
	- \mu \frac{(\tau + 1)\tau } {2}   \left ( \begin{array}{ll}
		 [  \frac{ \partial  \ldelta }{ \partial \locy } 
		\frac{ \partial  \leta }{ \partial \locx } ]  ( \locx^\ast, \locy^\ast )  & 
		    [\frac{ \partial  \ldelta }{ \partial \locy }  
		\frac{ \partial  \leta }{ \partial \locy }  ] ( \locx^\ast, \locy^\ast ) \\
		 0  & 		 0
	\end{array} \right )  .
\]

This can be reduced to the analysis of the eigenvalues of a similar matrix $M_s$
as in \eqref{eq:locMmetric}, 
\begin{equation*}
	M_s = \left (
	\begin{array}{ll}
		- \bar{\C}  &  - \bar{\B} \\
		\tau \bar{\B}^\intercal	 &  -\tau  \bar{\A}
	\end{array} \right ) 
	+ \mu \frac{(\tau + 1)\tau } {2} 
	 \left ( \begin{array}{ll}
		 -  \bar{\B}  \bar{\B}^{\intercal}  & 
		    \bar{\B}  \bar{\A}  \\
		 0  & 		 0
	\end{array} \right )  .
\end{equation*}
%

As in the proof of Theorem \ref{thm:gdaloccvg},
we verify that $M_s$ and the $\M_s$ in \eqref{eq:lintransformAsympSGA}
have the same eigenvalues. 
From the assumption of Theorem \ref{thm:gdaloccvgSGA}, we verify that 
$\tau >   \min(  \| \bar{\C} \| , \| \bar{\C} + \theta \bar{\B} \bar{\B}^\intercal \|  )   /  \la_{min}(\bar{\A}) $, 
and $0 \leq  \theta \leq 1  / \la_{max}(\bar{\A})$. 
We can therefore apply 
Proposition \ref{prop:spectralradiusLisga} (see next)
to conclude that the real-part of each eigenvalue of $\M_s$ is strictly negative.
Therefore for $0 < \ga < \dotr{\ga} ( \M_s)  $, 
Asymptotic $\tau$-SGA is locally convergent to DSE
with rate $\rho ( I + \ga \M_s)$.

Before we proceed to analyze $M_s$, we remark that 
the non-asymptotic analysis of $\tau$-SGA remains an interesting open question.
Indeed, if we want to analyze $\tau$-SGA beyond
the asymptotic regime (e.g. $\tau$ is small), 
one needs to consider this induced dynamics 
\(	\locR_2 \left ( \locy,  \ga \left [ \tau \leta  -  \mu \frac{  \tau + 1  }{2} \frac{\partial \leta } {\partial \locx}  \ldelta \right ]( \locx, \locy )  \right ) 
\) for the variable $\locy$.
The analysis of the spectral radius of the Jacobian matrix 
$\bar{\tgda}'(\locx^\ast, \locy^\ast) $ is harder as one 
could not easily adapt the proof of Proposition \ref{prop:spectralradiusLisga} to this case.

\subsection{Spectral analysis of $M_s$}
\label{app:spMsga}

The following proposition is adapted from Proposition \ref{prop:spectralradiusLI}
to analyze the eigenvalues of $M_s$.

\begin{proposition}\label{prop:spectralradiusLisga}
Under Assumption \ref{assum:working}, 
$\tau > \frac{  \min(  \| C \| , \| C + \theta B B^\intercal \| ) }{   \la_{min}(A)  }$, 
and $\mu = \theta \frac{2}{\tau (\tau+1)}$ with $0 \leq  \theta \leq 1  / \la_{max}(A)$,
any eigenvalue $\la = \la_0 + i \la_1$ (where $\la_0 \in \R, \la_1 \in \R$) of the following matrix
\[
	M =  \left (
	\begin{array}{ll}
		-  C  &  - B \\
		\tau B^\intercal &  -\tau  A
	\end{array} \right ) + 
	\mu \frac{(\tau + 1)\tau } {2} 
	\left (
	\begin{array}{ll}
		- B B^\intercal  &  B A^\intercal \\
		0 &  0 
	\end{array} \right ) 
\]
satisfies $\la_0 < 0$.
\end{proposition}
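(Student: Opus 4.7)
I would follow the template of the proof of Proposition \ref{prop:spectralradiusLI}: assume for contradiction that $M$ admits an eigenvalue $\la = \la_0 + i\la_1$ with $\la_0 \geq 0$, and show via Schur complement that $\det(\la I - M) = 0$ cannot hold. Since $\la I + \tau A$ is invertible whenever $\la_0 \geq 0$ (as $\tau A$ is positive definite), I would factor
\[
  \det(\la I - M) = \det(\la I + \tau A) \cdot \det H(\la)
\]
and focus on proving that $H(\la)$ is nonsingular. Using $\mu(\tau+1)\tau/2 = \theta$, a direct computation gives $H(\la) = \la I + \tilde{C} + B(I - \theta A)(\la/\tau\, I + A)^{-1} B^\intercal$ with $\tilde{C} := C + \theta B B^\intercal$.

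The key algebraic ingredient would be the partial-fraction identity $\tfrac{1-\theta x}{\la/\tau + x} = -\theta + \tfrac{1 + \theta \la/\tau}{\la/\tau + x}$ applied spectrally via $A = U\La_A U^\intercal$, which yields two equivalent representations of $H(\la)$:
\begin{align*}
  H(\la) &= \la I + C + \bigl(1 + \tfrac{\theta \la}{\tau}\bigr) B (\la/\tau\, I + A)^{-1} B^\intercal, \\
  H(\la) &= \la I + \tilde{C} + B(I - \theta A)(\la/\tau\, I + A)^{-1} B^\intercal.
\end{align*}
The condition $\theta \leq 1/\la_{max}(A)$ makes $I - \theta A$ positive semi-definite, and together with $\la_0 \geq 0$ each spectral weight $(1 + \theta\la/\tau)/(\la/\tau + \la_k)$ and $(1-\theta\la_k)/(\la/\tau + \la_k)$ has non-negative real part, which is what drives the real-part and mixed-part arguments.

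I would then split into two cases according to which term attains the minimum in the hypothesis on $\tau$. If $\tau > \|C\|/\la_{min}(A)$, use the first form and write $H(\la) = H_g(\la) + (\theta\la/\tau) B(\la/\tau\,I + A)^{-1} B^\intercal$, where $H_g$ is the Schur complement from the GDA proof. A direct spectral computation shows that both the real part and the mixed combination $\Real + (\tau\be/\la_1)\Imag$ (with $\be = \la_{min}(A)$) of the correction are positive semi-definite when $\la_0 \geq 0$, so each of the three sub-cases ($\la_0 > \|C\|$; $0 \leq \la_0 \leq \|C\|$ with $\la_1 \neq 0$; $\la_1 = 0$) inherits nonsingularity from Proposition \ref{prop:spectralradiusLI}. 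If $\tau > \|\tilde{C}\|/\la_{min}(A)$, use the second form, which is structurally identical to $H_g$ but with $C$ replaced by $\tilde{C}$ and the weights $1/(\la/\tau + \la_k)$ replaced by $(1-\theta\la_k)/(\la/\tau + \la_k)$. The template carries over verbatim for the sub-cases $\la_0 > \|\tilde{C}\|$ and $0 \leq \la_0 \leq \|\tilde{C}\|$ with $\la_1 \neq 0$.

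The main obstacle is the $\la_1 = 0$ sub-case in the second branch, since $\tilde{C}$ is not assumed positive definite whereas Assumption \ref{assum:working} gives p.d.\ only for $C + B A^{-1} B^\intercal$. My plan is to lower-bound $(1 - \theta \la_k)/(\la_0/\tau + \la_k)$ by $\alpha (1 - \theta \la_k)/\la_k$ with $\alpha = 1 - \la_0/(\tau \la_{min}(A)) \in (0,1]$, leading to
\[
  H(\la_0) \succeq \la_0 I + \tilde{C} + \alpha(B A^{-1} B^\intercal - \theta B B^\intercal) = \la_0 I + \alpha(C + B A^{-1} B^\intercal) + (1-\alpha) \tilde{C},
\]
where the identity follows from $\tilde{C} - \alpha\theta B B^\intercal = \alpha C + (1-\alpha)\tilde{C}$. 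The first summand is strictly p.d.\ by Assumption \ref{assum:working}, while $\la_0 I + (1-\alpha)\tilde{C}$ is positive semi-definite because the case hypothesis $\tau \la_{min}(A) > \|\tilde{C}\|$ gives $\la_0 - (1-\alpha)\|\tilde{C}\| = \la_0 \bigl(1 - \|\tilde{C}\|/(\tau \la_{min}(A))\bigr) \geq 0$. This contradiction concludes the proof.
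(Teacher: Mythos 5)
Your proposal is correct, and it takes a genuinely different organizational route from the paper's own proof. The paper works throughout with the single representation $H(\la) = \la I + (C + \theta B B^\intercal) + \tilde{B}\,\mathrm{diag}\bigl((1-\theta\la_k(A))/(\la/\tau + \la_k(A))\bigr)\tilde{B}^\intercal$ and makes no case split on which term attains $\min(\|C\|,\|C+\theta B B^\intercal\|)$: e.g.\ in the sub-case $\la_0 > \min(\|C\|,\|C+\theta BB^\intercal\|)$ the paper simply notes that $\la_0 I + C + \theta BB^\intercal$ is p.d.\ because $\la_{\min}(C + \theta B B^\intercal) \geq \la_{\min}(C) \geq -\|C\|$, which dispatches both possible minimizers at once. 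You instead make the case distinction structural via the partial-fraction identity, which gives a second form $H(\la) = \la I + C + (1+\theta\la/\tau)B(\la/\tau I + A)^{-1}B^\intercal = H_g(\la) + (\theta\la/\tau)B(\la/\tau I + A)^{-1}B^\intercal$; when $\tau > \|C\|/\la_{\min}(A)$ you verify the correction has nonnegative real part, mixed combination, and real-axis value (all of which I confirm are true for $\la_0 \geq 0$), so Proposition~\ref{prop:spectralradiusLI} can be reused as a black box, while in the other branch you mirror the template with $C$ replaced by $C + \theta B B^\intercal$. For your $\la_1 = 0$ sub-case, the lower bound $d_k \geq \alpha(1-\theta\la_k)/\la_k$ with $\alpha = 1 - \la_0/(\tau\la_{\min}(A))$ and the identity $\tilde{C} - \alpha\theta BB^\intercal = \alpha C + (1-\alpha)\tilde{C}$ yield the same conclusion as the paper's regrouping $H(\la_0) \geq (1-\la_0 c_0)(C + BA^{-1}B^\intercal) + \la_0(c_0 C + c_0\theta BB^\intercal + I)$; the two are algebraically equivalent but your version makes the strict-p.d.\ and p.s.d.\ contributions more transparent. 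Your organization buys modularity (half the statement becomes a corollary of the GDA result, making the role of the $\theta$-correction in enlarging the valid $\tau$ range clearer) at the cost of two parallel branches; the paper's buys a single unified argument. Both are sound.
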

\begin{proof}
	We show that $\la_0 \geq 0$ will lead to a contradiction.
Assume that $\la$ is an eigenvalue of $M$, i.e. $\mbox{det} ( \la I - M ) = 0$. 
By following the proof of Proposition \ref{prop:spectralradiusLI},
we have
\begin{align}
	\mbox{det} ( \la I - M )  &= \mbox{det}  
	 \left (
	\begin{array}{cc}
		 C + \theta B B^\intercal + \la I   &  B - \theta B A \\
		- \tau B^\intercal &  \tau  A +  \la I 
	\end{array} \right ) \nonumber \\
	&= \mbox{det} ( \la I + \tau A) \mbox{det} ( H (\la) )  \label{eq:schurMs}
\end{align}
where $H(\la) = \la I  +C + \theta B B^\intercal  +  (B-\theta BA) ( \la / \tau I + A )^{-1} B^\intercal$. 
As $\la I + \tau A$ is invertible, $\mbox{det} ( \la I - M ) = 0$ implies that 
$ \mbox{det} ( H (\la) ) = 0$. 
 
Let the spectral decomposition of $A $ be $ U \La_{A} U^\intercal$, with orthogonal $U \in \R^{m \times m}$ and diagonal $\La_{A} = \mbox{diag}( \la_{1}(A),  \cdots , \la_{m}(A)) \in \R^{m \times m}$. 
Then 
\begin{equation}\label{eq:Hlatheta}
	H(\la) = \la I + C +  \theta B B^\intercal + \tilde{B} D \tilde{B}^\intercal , 
\end{equation}
with $\tilde{B} = B U $ and $D = \mbox{diag} ( d_1 , \cdots , d_m  )$ where 
\[
	d_k =  \frac{ 1 - \theta \la_k (A) }{   \la / \tau  + \la_{k}(A) }  =  (1 - \theta \la_k (A)) \frac{   \la_0 / \tau + \la_k (A)   - i \la_1 / \tau } { (\la_0 / \tau + \la_k (A) )^2 + ( \la_1 / \tau )^2 } , \quad 1 \leq k \leq m . 
\]
It follows that if $\la_0 > \min( \| C \| , \| C +  \theta B B^\intercal  \| )$, 
the real-part of $H(\la)$ is
\begin{equation}\label{eq:realHsga}
	\Real ( H(\la) ) = \la_0 I + C +  \theta B B^\intercal   
	+ \tilde{B} \Real( D)  \tilde{B}^\intercal   \geq  \la_0 I + C +  \theta B B^\intercal     \quad \mbox{p.d}
\end{equation}
This is contradictory to the fact that $ \mbox{det} ( H (\la) ) = 0$ \cite[Corollary 10.2]{li2022convergence}.

On the other hand, if  $ 0 \leq \la_0 \leq  \min( \| C \| , \| C +  \theta B B^\intercal  \| )$ and $\la_1 \neq 0$, 
we consider for $\be \in \R$, 
\begin{align}
	\Real ( H(\la) ) + \frac{\tau \be} { \la_1} \Imag ( H (\la) ) 
	&= \la_0 I + C + \theta B B^\intercal    + \tilde{B} \Real( D)  \tilde{B}^\intercal   \nonumber
	+ \frac{\tau \be} { \la_1} ( \la_1 I + \tilde{B} \Imag( D)  \tilde{B}^\intercal    ) 	\\
	&= (\la_0 + \tau \be) I +  C  + \theta B B^\intercal + \tilde{B} F  \tilde{B}^\intercal ,  \label{eq:MixedHsga}
\end{align}
where 
$F = \mbox{diag} ( f_1, \cdots, f_m)$ with 
\begin{align}\label{eq:MixedHdp1sga}
	f_k = (1 - \theta \la_k(A)) 
	 \frac{   \la_0 / \tau + \la_k (A)   -  \be  } { (\la_0 / \tau + \la_k (A) )^2 + ( \la_1 / \tau )^2 }  , \quad 1 \leq k \leq m . 
\end{align}

Take $\be  = \la_{min}(A)$, then $f_k  \geq 0$ for each $k \leq m$. 
The condition $\tau  \la_{min}(A)  >    \min(  \| C \| , \| C + \theta B B^\intercal \| )  $ implies 
that $ (\la_0 + \tau \be) I +  C  + \theta B B^\intercal $ is p.d.
and together with \eqref{eq:MixedHdp1sga}, 
we get that \eqref{eq:MixedHsga} is p.d., so it is contradictory to
the fact that $ \mbox{det} ( H (\la) ) = 0$ \cite[Lemma 10.1]{li2022convergence}.

Lastly, if $0 \leq \la_0 \leq  \min( \| C \| , \| C +  \theta B B^\intercal  \| )$ 
and $\la_1 = 0$, we have from \eqref{eq:realHsga}
\begin{equation*}
	H(\la) =   \la_0 I + C + \theta B B^\intercal + \tilde{B}  D  \tilde{B}^\intercal   
\end{equation*}
with $d_k =  \frac{ 1 - \theta \la_k(A) }{   \la_0 / \tau  + \la_{k}(A) } =  \frac{1 - \theta \la_k(A) }{  \la_{k}(A) } -  \frac{ (1 - \theta \la_k(A)) \la_0 / \tau }{  (\la_0 / \tau + \la_k (A) ) \la_{k}(A) }  \geq (1 - \theta \la_k(A)) (  \frac{ 1}{  \la_{k}(A) } -  \frac{ \la_0 / \tau }{   \la_{min} (A)  \la_{k}(A) }  ) $.
As $\tau >  \min(  \| C \| , \| C + \theta B B^\intercal \| )  /  \la_{min}(A) $
and $0 \leq \la_0 \leq   \min( \| C \| , \| C +  \theta B B^\intercal  \| )$ we have $ 0 \leq   \frac{ \la_0 / \tau }{   \la_{min} (A)   }  < 1$ and it follows 
\begin{align}
	H(\la)    &\geq \la_0 I + C + \theta B B^\intercal  + 
	\left (1 -   \frac{ \la_0 / \tau }{   \la_{min} (A)   } \right )  B A^{-1} B^{\intercal}  - \theta ( 1 -  \frac{ \la_0 / \tau }{   \la_{min} (A)   } )  \tilde{B}   \tilde{B}^\intercal    \nonumber \\
			& =  \left(1 - \la_0  c_0 \right)  ( C + B A^{-1} B^{\intercal} )   +  \la_0   ( c_0 C +    c_0 \theta B B^\intercal  +   I )  \nonumber \\
			& \geq \left(1 - \la_0  c_0 \right)  ( C + B A^{-1} B^{\intercal} )    \label{eq:MixedHineqsga}
\end{align}
where $c_0 =  \frac{ 1 }{ \tau   \la_{min} (A) } $. The last inequity \eqref{eq:MixedHineqsga} is 
due to $ \la_0 \geq 0$ and the condition $ 1  >  c_0 \min(  \| C \| , \| C + \theta B B^\intercal \| ) $.
They imply that $ \la_0 (c_0 C +    c_0 \theta B B^\intercal  +   I ) \geq 0$. 
As $1 - \la_0 c_0 \in ( 0, 1]$, \eqref{eq:MixedHineqsga} implies 
that $H(\la)$ is p.d which is contradictory. In conclusion, $\la_0 <  0$. 
\end{proof}

\subsection{Local convergence rate of Asymptotic $\tau$-SGA}

The local convergence analysis 
in Proposition \ref{prop:spectralradiusLIrate}
provides an upper bound of the rate $\rho ( I + \ga M_g)$
of $\tau$-GDA.
This section extends this result 
to Asymptotic $\tau$-SGA.
We aim to obtain an upper bound of the rate $\rho ( I + \ga M_s)$
which is smaller than that of $\rho ( I + \ga M_g)$.

The following lemma analyzes the eigenvalues of $M_s$ for Asymptotic $\tau$-SGA
when $\mu = \theta \frac{2}{\tau (\tau+1)}$.
It refines the eigenvalue bounds 
of \citet[Lemma 5.2]{li2022convergence}
on $M_g$ of $\tau$-GDA (when $\mu = 0$). 

\begin{lemma}\label{lemma:spectralradiusLSGA}
Under Assumption \ref{assum:working}, 
$\tau > 0$, and $0 \leq  \theta \leq 1  / \la_{max}(A)$, 
any eigenvalue $\la = \la_0 + i \la_1$ (where $\la_0 \in \R, \la_1 \in \R$) of the following matrix
\[
	M =  \left (
	\begin{array}{ll}
		-  C  &  - B \\
		\tau B^\intercal &  -\tau  A
	\end{array} \right ) + 
	\theta
	\left (
	\begin{array}{ll}
		- B B^\intercal  &  B A^\intercal \\
		0 &  0 
	\end{array} \right ) 
\]
satisfies
\begin{itemize}
	\item[a.] $| \la_1 | \leq   \sqrt{ \tau } \sqrt{  1 -\theta \la_{min} (A) } \| B \|$.
	\item[b.] If $\la_1 \neq 0$, we have $\la_0 \leq - \la_{+} $ with $\la_{+}  = \frac{1}{2} ( \la_{min} (A) \tau - \min ( \| C \|  , \| C + \theta B B^\intercal \| )  ) $.  
	\item[c.] Let $\tau = \frac{  \min(  \| C \| , \| C + \theta B B^\intercal \| ) + \alpha }{   \la_{min}(A)  }$ with $\alpha > 0$. If $\la_1 = 0$, we have $\la_0 \leq - \la'_{+} $ with $\la'_{+} = \min( \la_{min} ( C + B A^{-1} B^{\intercal} ) , \min( \| C \|, \| C + \theta B B^\intercal \|  )+ \al )$.
	\item[d.] Let $L = \max( \| A \| ,  \| B \| , \| C + \theta B B^\intercal  \|)$. If $\tau \geq 1$, we have $\la_0^2 + \la_1^2 \leq \| M \|^2 \leq 4 \tau^2 L^2$.
\end{itemize}
\end{lemma}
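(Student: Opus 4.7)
The plan is to adapt the Schur-complement strategy of Propositions \ref{prop:spectralradiusLI} and \ref{prop:spectralradiusLisga} in order to extract quantitative bounds on the real and imaginary parts of any eigenvalue, and to handle (d) separately by a direct operator-norm estimate. The unifying tool for (a)--(c) is the factorization from \eqref{eq:schurMs}--\eqref{eq:Hlatheta}, $\det(\la I - M) = \det(\la I + \tau A) \det(H(\la))$ with $H(\la) = \la I + C + \theta B B^\intercal + \tilde B D \tilde B^\intercal$, $\tilde B = B U$, $A = U \La_A U^\intercal$, and $d_k = (1 - \theta \la_k(A)) / (\la/\tau + \la_k(A))$. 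Any eigenvalue $\la$ with $\la \ne -\tau \la_k(A)$ forces $H(\la)$ singular, so one picks $v \ne 0$ in its complex kernel and exploits $v^\ast \Real(H(\la)) v = v^\ast \Imag(H(\la)) v = 0$.

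For (d), I would rewrite $M = \begin{pmatrix} -(C + \theta B B^\intercal) & -B(I - \theta A) \\ \tau B^\intercal & -\tau A \end{pmatrix}$. The assumption $0 \le \theta \le 1 / \la_{max}(A)$ makes $I - \theta A$ positive semi-definite with operator norm at most $1$, so each block of the top row has operator norm at most $L$ and each block of the bottom row at most $\tau L$. For $v = (v_1, v_2)$, applying $(a+b)^2 \le 2(a^2 + b^2)$ row by row gives $\|M v\|^2 \le 2 L^2 (\|v_1\|^2 + \|v_2\|^2) + 2 \tau^2 L^2 (\|v_1\|^2 + \|v_2\|^2) \le 4 \tau^2 L^2 \|v\|^2$ whenever $\tau \ge 1$; the inequality $\la_0^2 + \la_1^2 \le \|M\|^2$ is just $\rho(M) \le \|M\|$.

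For (a), $\Imag(H(\la)) = \la_1 (I - (1/\tau) B g(A) B^\intercal)$ with $g(\la_k(A)) = (1 - \theta \la_k(A)) / ((\la_0/\tau + \la_k(A))^2 + (\la_1/\tau)^2) \ge 0$. When $\la_1 \ne 0$, vanishing of $v^\ast \Imag(H(\la)) v$ forces $\tau \le \|B g(A) B^\intercal\| \le \|B\|^2 (1 - \theta \la_{min}(A)) \tau^2 / \la_1^2$, which rearranges to the claim (the case $\la_1 = 0$ is trivial). For (b), I would take $\be = \la_{min}(A) + \la_0 / \tau$ so that $\Real(H(\la)) + (\tau \be / \la_1) \Imag(H(\la)) = (2 \la_0 + \tau \la_{min}(A)) I + C + \theta B B^\intercal + \tilde B F \tilde B^\intercal$ with $F_k = (1 - \theta \la_k(A))(\la_k(A) - \la_{min}(A))/(\cdots) \ge 0$. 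Since $\theta B B^\intercal \succeq 0$ yields $\la_{min}(C + \theta B B^\intercal) \ge -\min(\|C\|, \|C + \theta B B^\intercal\|)$, the combined matrix is bounded below by $(2 \la_0 + \tau \la_{min}(A) - \min(\|C\|, \|C + \theta B B^\intercal\|)) I$, strictly positive definite exactly when $\la_0 > -\la_+$; by \citet[Lemma 10.1]{li2022convergence} this contradicts singularity of $H(\la)$ and yields (b).

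The main obstacle is (c), which requires showing that $H(\la_0)$ is positive definite for every $\la_0 > -\la'_+$. When $\la_0 \ge 0$, the chain of inequalities from the proof of Proposition \ref{prop:spectralradiusLisga} applies with $c_0 = 1/(\tau \la_{min}(A)) = 1/(\min(\|C\|, \|C + \theta B B^\intercal\|) + \al)$, giving $H(\la_0) \ge (1 - \la_0 c_0)(C + B A^{-1} B^\intercal) + \la_0 (c_0 C + c_0 \theta B B^\intercal + I)$, strictly positive definite on $0 \le \la_0 < 1/c_0$; for $\la_0 \ge 1/c_0$ positive definiteness follows from $\la_0 I + C + \theta B B^\intercal \succeq \al I$ together with $\tilde B D \tilde B^\intercal \succeq 0$. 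The subtle case is $-\tau \la_{min}(A) < \la_0 < 0$: here the $\la_0 \ge 0$ chain fails, but the reversed monotonicity $\la_0/\tau + \la_k(A) \le \la_k(A)$ yields the alternative bound $d_k \ge (1 - \theta \la_k(A)) / \la_k(A)$, hence $\tilde B D \tilde B^\intercal \ge B A^{-1} B^\intercal - \theta B B^\intercal$ and $H(\la_0) \ge \la_0 I + C + B A^{-1} B^\intercal$, strictly positive definite whenever $\la_0 > -\la_{min}(C + B A^{-1} B^\intercal)$. Combining the two subcases shows $H(\la_0)$ is invertible throughout $\la_0 > -\la'_+$, which contradicts $\det(H(\la_0)) = 0$ and establishes (c).
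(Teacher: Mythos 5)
Your proposal is correct and follows essentially the same Schur-complement strategy as the paper's proof (factor $\det(\la I - M)$ via $H(\la)$, then use the fact that $\Real(H)+c\,\Imag(H)$ positive definite contradicts singularity, and bound $d_k$ term by term). The only differences are local refinements that do not change the route: in (b) you choose the $\la_0$-dependent shift $\be = \la_{\min}(A) + \la_0/\tau$, which collapses the paper's final case split on $\min(\|C\|,\|C+\theta BB^\intercal\|)$ into a single inequality via $\la_{\min}(C+\theta BB^\intercal)\ge -\min(\|C\|,\|C+\theta BB^\intercal\|)$; and in (c) you reprove the $\la_0\ge 0$ subcase directly (splitting further at $\la_0 = 1/c_0$) rather than citing Proposition \ref{prop:spectralradiusLisga}, while for $-\tau\la_{\min}(A)<\la_0<0$ your bound $d_k \ge (1-\theta\la_k(A))/\la_k(A)$ reaches the same intermediate inequality $H(\la_0)\succeq \la_0 I + C + BA^{-1}B^\intercal$ as the paper's algebraic manipulation of $d_k$.
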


\begin{proof}
Assume that $\la$ is an eigenvalue of $M$, 
i.e. $\mbox{det} ( \la I - M ) = 0$. 
If $\la I + \tau A$ is invertible,
we can compute $ \mbox{det} ( \la I - M ) $
using the Schur complement \eqref{eq:schurMs} to obtain
\begin{align*}
	\mbox{det} ( \la I - M ) 	= \mbox{det} ( \la I + \tau A) \mbox{det} ( H (\la) ) .
\end{align*}
Let the spectral decomposition of $A $ be $ U \La_{A} U^\intercal$, with orthogonal $U \in \R^{m \times m}$ and diagonal $\La_{A}$. Then $H(\la)$ can be rewritten into \eqref{eq:Hlatheta}.

\underline{Part a}: 
If $| \la_1 |  >  \sqrt{ \tau } \sqrt{  1 -\theta \la_{min} (A) } \| B \|$, 
then $\la I + \tau A$ is invertible since $\la_1 I $ is. 
Therefore $ \mbox{det} ( H (\la) ) = 0$. Consider 
\begin{align*}
	 \frac{1} { \la_1} \Imag ( H (\la) )  
	 &= I + \frac{1 }{ \la_1 }  \tilde{B}   \Imag (D)  \tilde{B}^\intercal  \\
	 &=I  - \frac{1}{\tau} \tilde{B}   \mbox{diag} \left (  \frac{1 - \theta \la_k (A) } { (\la_0 / \tau + \la_k(A) )^2 + ( \la_1 / \tau )^2 }  \right )_{k \leq m} \tilde{B}^\intercal \\
	 &\geq I -  \frac{1}{\tau}  \frac{\tau^2}{\la_1^2} ( 1 - \theta \la_{min}(A) ) \tilde{B}  \tilde{B}^\intercal = I - \frac{\tau}{\la_1^2}  ( 1 - \theta \la_{min}(A) )  B B^\intercal 
\end{align*}
As $| \la_1 |  >  \sqrt{ \tau } \sqrt{  1 -\theta \la_{min} (A) } \| B \|$, 
we have that $  I - \frac{\tau}{\la_1^2}  ( 1 - \theta \la_{min}(A) )  B B^\intercal $ is p.d. and therefore 
$\frac{1}{\la_1} \Imag ( H (\la) ) $ is p.d., which leads to a contradiction. 

\underline{Part b}: Assume $ \la_0 > - \la_{+} $. As $\la_1 \neq 0$, $\la I + \tau A$ is invertible 
and $ \mbox{det} ( H (\la) ) = 0$. 
Following \eqref{eq:MixedHsga}, we consider 
\[
	\Real ( H(\la) ) + \frac{\tau \be} { \la_1} \Imag ( H (\la) )  = 
	(\la_0 + \tau \be) I +  C + \theta B B^\intercal +  \tilde{B} F  \tilde{B}^\intercal .
\]
Take $\be = \frac{1}{ 2 \tau } ( \la_{min}(A) \tau + \min(  \| C\| , \| C + \theta B B^\intercal \| ) ) 
 > 0$, 
then $\tilde{B} F  \tilde{B}^\intercal $ is semi-p.d (positive semidefinite) because 
each $f_k$ in \eqref{eq:MixedHdp1sga} is larger than zero ($f_k  \geq 0$). 
Indeed, $\forall 1 \leq k \leq m$, 
\begin{align*}
	\la_0 / \tau + \la_k (A) - \be &> 
	 -\la_{+} / \tau + \la_k(A) - \be \\
	 &= - \frac{ \la_{min} (A) \tau - \min ( \| C \|  , \| C + \theta B B^\intercal \| )  } { 2 \tau  } - \be + \la_k (A)  	 
	 \\ 
	  &=  \la_k (A)   -\la_{min}(A)	   \geq 0 .
\end{align*}
Furthermore, either $(\la_0 + \tau \be) I +  C$ or 
$(\la_0 + \tau \be) I +  C + \theta B B^\intercal $ is p.d. due 
to the following:
	\begin{itemize}
		\item If $\| C \| < \| C + \theta B B^\intercal \|$, 
		we have $\la_0 + \tau \be > \| C \|$ because
		$-\la_{+} +  \tau \be =  \| C \|$. 
		\item If $ \| C \| \geq \| C + \theta B B^\intercal \|$,
		we have $\la_0 + \tau \be > \| C + \theta B  B^\intercal \|$  because
		$-\la_{+} +  \tau \be =  \| C + \theta B  B^\intercal \|$. 
	\end{itemize}
As a consequence, $\Real ( H(\la) ) + \frac{\tau \be} { \la_1} \Imag ( H (\la) ) $ is p.d. which is a contradiction. Therefore $\la_0  \leq  - \la_{+}$. 

\underline{Part c}: 
From Proposition \ref{prop:spectralradiusLisga},
we know that $\la_0 < 0$.
If $\la_1 = 0$ and $0 > \la_0 >  - \la'_{+} $, $\la_0 I + \tau A$ remains p.d. since
\[
	\la_0 I + \tau A \geq  (  \tau \la_{min}(A) + \la_0    ) I  > (   \min(  \| C \| , \| C + \theta B B^\intercal \| ) + \al -  \la'_{+}  )  I    \quad \mbox{semi-p.d.}
\]
Therefore $\la I + \tau A$ is invertible 
and $ \mbox{det} ( H (\la) ) = 0$. 
Following \eqref{eq:Hlatheta}, we have
\begin{align}
	H(\la) &=   \la_0 I + C + \theta B B^\intercal + \tilde{B}  \mbox{diag} \left(   \frac{ 1 - \theta \la_k (A) }{   \la_0 / \tau  + \la_{k}(A) }  \right )_{k \leq m}  \tilde{B}^\intercal     \nonumber   \\
			&= \la_0 I + C + \theta B B^\intercal + B A^{-1} B^{\intercal}  
 -  \tilde{B}   \mbox{diag} \left(  \frac{\la_0 / \tau + \theta \la_k^2(A) }{  (\la_0 / \tau + \la_k (A) ) \la_{k}(A) }   \right )_{k \leq m}   \tilde{B}^\intercal   \\
 	&= \la_0 I + C + B A^{-1} B^{\intercal}  
 -  \tilde{B}   \mbox{diag} \left(  \frac{( 1 - \theta \la_k(A) )  \la_0 / \tau  }{  (\la_0 / \tau + \la_k (A) ) \la_{k}(A) }   \right )_{k \leq m}   \tilde{B}^\intercal \\
 	&> C + B A^{-1} B^{\intercal}  - \la'_{+}  I    \nonumber  \\
 	&\geq C + B A^{-1} B^{\intercal}  - \la_{min} ( C + B A^{-1} B^{\intercal}  ) I  \quad \mbox{semi-p.d.} \nonumber 
\end{align}
As a consequence, $H(\la) $ is p.d. which is a contradiction. Therefore $\la_0  \leq  - \la'_{+}$. 

\underline{Part d}:  Note that $\| I - \theta A \| \leq 1$. 
For any $(x,y) \in \R^n \times \R^m$, we compute
\begin{align*}
	\left \|  M 
	\left (\begin{array}{l}
		x \\
		y \end{array} \right ) 
	\right \|
	&= \left \| 
		\begin{array}{l}
		-C x -B y - \theta B B^\intercal x + \theta B A^\intercal y \\
		\tau B^\intercal x - \tau A y 
		\end{array}
		\right \| 	\\
	&= \sqrt{  \| -Cx -By - \theta B B^\intercal x + \theta B A^\intercal y \|^2+ \| \tau B^\intercal x - \tau A y \|^2 }\\
	&\leq \sqrt{ 2  ( \| (C + \theta B B^\intercal )x \|^2 + \| (B - \theta B A^\intercal ) y \|^2 ) + 2 \tau^2 ( \| B^\intercal x \|^2 + \| A y \|^2 ) } \\
	&\leq \sqrt{ 2 \| C + \theta B B^\intercal  \|^2 \| x \|^2 + 2 \| B \|^2 \| I - \theta A \|^2 \| y \|^2 + 
				2 \tau^2 \| B^\intercal \|^2 \| x \|^2 + 2 \tau^2 \|A \|^2 \| y \|^2 } \\
	&\leq \sqrt { 2 (1 + \tau^2) L^2 ( \| x \|^2 + \| y \|^2 ) } \leq 2 \tau L 
	\left \|  
	\left (\begin{array}{l}
		x \\
		y \end{array} \right ) 
	\right \|.				
\end{align*}
Therefore $\| M \| \leq 2 \tau L $ and $\la_0^2 + \la_1^2 \leq \| M \|^2 \leq 4 \tau^2 L^2$.
\end{proof}

From Lemma \ref{lemma:spectralradiusLSGA},
we are ready to obtain an upper bound 
of the local convergence rate of Asymptotic $\tau$-SGA.
Recall that $L_s = \max( \| \A \|, \| \B \|, \| \C  + \theta \B \B^\intercal \|) $
and $\mu_s = \min(L_s, \la_{min} ( \C + \B \A^{-1} \B^\intercal) )$.
\begin{proposition}\label{prop:spectralradiusSGArate}
Assume $(\mx^\ast,\my^\ast)$ is a DSE of $\mf  \in C^2$.
If $\tau \geq \frac{2  L_s }{ \la_{min} (\A) } $
and $\ga = \frac{1 }{4 \tau L_s }$, we have
$\rho ( I + \ga M_s) \leq 1 - \frac{\mu_s}{16 \tau L_s}$. 
\end{proposition}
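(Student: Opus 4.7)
}

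The plan is to mirror the strategy used for Proposition \ref{prop:spectralradiusLIrate} (i.e. \citet[Lemma 5.3]{li2022convergence}), but now invoke the sharper eigenvalue bounds provided by Lemma \ref{lemma:spectralradiusLSGA}. First I reduce $\M_s$ to the similar matrix $M_s$ from \eqref{eq:locMmetric}-style similarity transform already used in Appendix \ref{app:thm:gdaloccvgSGAPROOF}, so that one works with $(\bar{\A},\bar{\B},\bar{\C})$. The quantities $L_s$, $\mu_s$ and $\la_{\min}(\A)$ are preserved under the similarity, and $\rho(I+\ga \M_s) = \rho(I+\ga M_s)$. Since $\tau\la_{\min}(\A) \geq 2L_s$ and $0 \leq \theta \leq 1/\la_{\max}(\A)$, the hypotheses of Lemma \ref{lemma:spectralradiusLSGA} apply.

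Next, for any eigenvalue $\la = \la_0 + i\la_1$ of $M_s$, I split the analysis of $|1+\ga\la|^2 = (1+\ga\la_0)^2 + \ga^2\la_1^2$ into two cases. In the real case $\la_1 = 0$, I apply Part c of Lemma \ref{lemma:spectralradiusLSGA} with $\al = \tau \la_{\min}(\A) - \min(\|\C\|,\|\C+\theta\B\B^\intercal\|) \geq L_s > 0$ to conclude $\la'_+ \geq \min(\la_{\min}(\C+\B\A^{-1}\B^\intercal), \tau\la_{\min}(\A)) \geq \mu_s$, hence $\la_0 \leq -\mu_s$. Combined with the bound $|\la_0| \leq \|M_s\| \leq 2\tau L_s$ from Part d, one gets $\ga|\la_0| \in [\mu_s/(4\tau L_s),1/2]$ and therefore $|1+\ga\la| \leq 1 - \mu_s/(4\tau L_s) \leq 1-\mu_s/(16\tau L_s)$.

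In the complex case $\la_1 \neq 0$, I use Part b together with the choice of $\tau$ to get $|\la_0| \geq \la_+ \geq \tau\la_{\min}(\A)/4$ (since $\tau\la_{\min}(\A) \geq 2L_s \geq 2 \min(\|\C\|,\|\C+\theta\B\B^\intercal\|)$), and Part a to get $\la_1^2 \leq \tau L_s^2$. Substituting $\ga = 1/(4\tau L_s)$ gives
\[
|1+\ga\la|^2 \leq \left(1 - \tfrac{\la_{\min}(\A)}{16L_s}\right)^2 + \tfrac{1}{16\tau}.
\]
Setting $a = \la_{\min}(\A)/(16L_s)$ and $b = \mu_s/(16\tau L_s)$, I then verify the elementary inequality $(1-a)^2 + 1/(16\tau) \leq (1-b)^2$, which reduces to $(a-b)(2-a-b) \geq 1/(16\tau)$. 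Using $\tau\la_{\min}(\A) - \mu_s \geq 2L_s - L_s = L_s$ (from $\mu_s \leq L_s$ and $\tau\la_{\min}(\A)\geq 2L_s$), one obtains $a-b \geq 1/(16\tau)$, while $a+b \leq 1/8$ gives $2-a-b \geq 1$. This closes the bound.

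The main obstacle is handling the complex case sharply: the naive estimate $\ga^2|\la|^2 \leq \ga^2 \cdot 4\tau^2 L_s^2 = 1/4$ from Part d alone is too loose and prevents obtaining any contraction. The resolution is to use Part a to isolate $\la_1^2 \leq \tau L_s^2$ (yielding $\ga^2 \la_1^2 \leq 1/(16\tau)$) separately from the real part, and then exploit the gap $\tau \la_{\min}(\A) - \mu_s \geq L_s$ that the assumption $\tau \geq 2L_s/\la_{\min}(\A)$ guarantees. The rest of the argument is essentially the same as the $\tau$-GDA case, with $L_s,\mu_s$ playing the role of $L_g,\mu_g$.
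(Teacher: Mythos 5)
Your proposal is correct and follows the same overall route as the paper: reduce $\M_s$ to the similar matrix $M_s$ so that $(\bar{\A},\bar{\B},\bar{\C})$ carry the spectral data, invoke Lemma \ref{lemma:spectralradiusLSGA}, and split on whether the eigenvalue is real or complex. The real case matches the paper exactly. The only difference is in the final algebra of the complex case: the paper first bounds $|1+\ga\la|^2$ by $1-\la_{\min}(\bar{\A})/(16L)$, takes a square root via $\sqrt{1-x}\le 1-x/2$, and then chains $\la_{\min}(\bar{\A})/(32L)\ge 1/(16\tau)\ge \mu_s/(16\tau L)$; you instead aim directly for the target form $(1-b)^2$ with $b=\mu_s/(16\tau L_s)$ and verify $(a-b)(2-a-b)\ge 1/(16\tau)$ from $a-b\ge 1/(16\tau)$ and $a+b\le 1/8$. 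Both manipulations are valid and yield the same constant; yours is slightly more direct as it avoids the intermediate square-root step.
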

\begin{proof}
Let $L  = \max( \| \bar{\A} \|, \| \bar{\B} \|, \| \bar{\C} + \theta \bar{\B} \bar{\B}^\intercal \|)$
and $\mu_x = \min(L, \la_{min} ( \bar{\C} + \bar{\B} \bar{\A}^{-1} \bar{\B}^\intercal) )$.
We verify that $ \la_{min} (\A)   = \la_{\min}(\bar{\A})$, $L = L_s$ and $ \mu_x = \mu_s$.
Let $\la = \la_0 + i \la_1$ be an eigenvalue of $M_s$. 
It follows that we only need to show that $ | 1 + \ga \la | \leq 1 - \frac{\mu_x}{16 \tau L}$.

To apply Lemma \ref{lemma:spectralradiusLSGA}, we denote $A = \bar{\A}$, $B = \bar{\B}$ and $C = \bar{\C}$. We rewrite that 
$\tau  \la_{min} (A) = \min(  \| C \| , \| C + \theta B B^\intercal \| ) + \alpha $
with $ \alpha  \geq 2 L - \min(  \| C \| , \| C + \theta B B^\intercal \| )  > 0$. 
There are two cases to verify:
\begin{itemize}
	\item Case  $\la _1 = 0 $: 
	we have $\la_0 \leq - \la_{+}'$. Since $\tau \la_{min} (A)  > L$, we have 
	$\la_{+}' = \min ( \tau \la_{min} (A) , \la_{min}( C + B A^{-1} B^\intercal ) ) 
	\geq \min (L ,  \la_{min}( C + B A^{-1} B^\intercal ) ) = \mu_x$. 
	Thus $\la_0 \leq - \mu_x$ and $1 + \ga \la_0 \leq 1 - \frac{1}{4 \tau L }{\mu_x}$. 
	On the other hand, $\tau \geq 2$ and $\la_0  \geq - 2 \tau L$, thus $1 + \ga \la_0 \geq 1  - \frac{1}{4 \tau L }{2 \tau L} = 1/2$. Therefore $|1 + \ga \la_0|  \leq 1 - \frac{1}{4 \tau L }{\mu_x}  \leq 1 - \frac{\mu_x}{16 \tau L}$.
	
	\item Case $\la_1 \neq 0$: we know that $ -2 \tau L \leq \la_0 \leq - \la_{+} = -\frac{\alpha}{ 2}$ and $|\la_1| \leq \sqrt{\tau} L $. Note that $\alpha = \tau \la_{min}(A) - \min(  \| C \| , \| C + \theta B B^\intercal \| ) \geq \tau  \la_{min}(A ) - L $ and by assumption $L \leq \frac{  \tau \la_{min}(A )}{2} $. Thus $\al \geq \frac{\tau  \la_{min}(A) }{2}$. It follows that 
	\begin{align*}
		|1 + \ga \la|^2  &= (1 + \ga \la_0)^2 + \ga^2 \la_1^2  \\
							&\leq \left (1 - \ga \frac{\alpha}{2} \right )^2 + \ga^2 \tau L^2 \\
							&= \left (1 - \frac{\alpha}{8 \tau L} \right )^2 + \frac{1}{16 \tau } \\							
							&\leq \left ( 1 - \frac{   \la_{min}(A) }{16   L} \right )^2 + \frac{\la_{min}(A)}{32 L } \leq 1 -  \frac{   \la_{min}(A) }{16   L}.
	\end{align*}
	Therefore $|1 + \ga \la| \leq \sqrt{ 1 -  \frac{   \la_{min}(A) }{16  L} } \leq 1 - \frac{   \la_{min}(A) }{32  L} \leq  1 - \frac{   L }{16 \tau L} \leq 1 - \frac{  \mu_x }{16 \tau L}   $.
\end{itemize}

\end{proof}

\section{Computational efficiency of $\tau$-SGA}
\label{sec:sgacompute}
We first discuss the computation of $\tB [ \meta ]$ and $\tB^\intercal [\mdelta ]$
when $\Mx$ (resp. $\My$) is an embedded 
sub-manifold of $\R^{\dimxb}$ (resp. $\R^{\dimyb}$).
We then propose a linear-time computational procedure using auto-differentiation when $\Mx$ is Euclidean,
which is applicable to orthogonal Wasserstein GANs. 
When $\My$ is also Euclidean, this  
procedure is equivalent to the one proposed in \citet{pmlr-v80-balduzzi18a}. 

To compute $\tB [  \meta]$ at $ (\mx,\my)$, 
we first fix $\meta = \meta (\mx,\my) \in T_{\my} \My$. 
From the property of embedded 
sub-manifolds, we use $\leftrightarrow $ to identify a tangent vector 
$ \mdelta \in T_{\mx} \Mx $
with a vector $\mdeltab  = (\mdeltab_{i}   )_{i \leq \dimxb} \in \R^{\dimxb}$
(resp. $\meta  \in T_{\my} \My$ with $\metab  = (\metab_{i}   )_{i \leq \dimyb} \in \R^{\dimyb}$). 

This allows one to compute the cross-gradients in the embedded space, by
\[
	\tB [   \meta ] (\mx,\my)  =  D_{\my} \gradx \mf ( \mx, \my) [ \meta ] 
	\quad \leftrightarrow \quad
	D_{\my} \mdeltab   ( \mx, \my) [ \metab ] = 
	(\lb (\grady \mdeltab_{i}  ( \mx, \my))', \metab \rb_{y})_{i \leq \dimxb}  \in \R^{\dimxb} .
\]
Note that $\lb ( \grady \mdelta'_{i}  ( \mx, \my))', \metab \rb_{y}$ 
 is computed on $\R^{\dimyb}$
with a metric induced from $T_{\my} \My$.
 Assume that its computational time 
is $O(\dimyb)$ for each $i$.
Then it takes $O(\dimyb \dimxb)$ to compute $\tB [ \meta ] (\mx,\my)$ in the embedded space.
We can obtain a similar cost for $\tB^\intercal [\mdelta ] (\mx,\my)$. 

\paragraph{Euclidean $\Mx$ case} When $\Mx  = \R^{\dimx}$, the computational complexity of $\tB [ \meta ] (\mx,\my) $
can be significantly reduced: $\forall i \leq \dimx$, 
\begin{equation}\label{eq:sgaBeta}
	D_{\my} \partial_{x_i} \mf ( \mx, \my) [ \meta ]  = 
	\partial_{x_i} \lb \grady f(\mx,\my) , \meta \rb_{\my} 
	\quad \leftrightarrow \quad
	\partial_{x_i} \lb \metab ( \mx,\my) , \metab \rb_{\my}  .
\end{equation}
Importantly, one does not need to recompute $\lb \metab ( \mx,\my) , \metab \rb_{\my} $ for each $i$.
Therefore, the whole cost of $\tB [ \meta ] (\mx,\my) $ is $O(\dimx + \dimyb)$. 
Note that in \eqref{eq:sgaBeta}, the term $ \metab$ is ``detached''. 

For $\tB^\intercal [\mdelta ] (\mx,\my)$, we ``detach'' $\mdelta = \mdelta (\mx,\my) \in T_{\mx} \Mx$
and compute 
\[
	\tB^\intercal [\mdelta ] (\mx,\my) = \sum_{i \leq \dimx} \partial_{x_i} \meta (\mx,\my) \mdelta_i 
	\quad \leftrightarrow \quad
	 \partial_{\metab'}  \left ( \sum_{i  \leq \dimx} \partial_{x_i} \lb  \metab (\mx,\my) , \metab' \rb \mdelta_i  \right )  _{|_{\metab' = 0}} .
\]
This implies that we can first compute $ \partial_{x_i} \lb  \metab (\mx,\my) , \metab' \rb$ for each $i$ as 
in \eqref{eq:sgaBeta}. We then compute its sum with $\delta_i$ which takes $O(\dimx)$. 
Finally an extra auto-differentiation is taken with respect to $\metab'$ which costs $O(\dimyb)$. 
The whole cost of $\tB^\intercal [\mdelta ] (\mx,\my)$ is therefore $O(\dimx + \dimyb)$. 
In this case, we use the Euclidean metric on $\R^{\dimyb}$ to evaluate $\lb  \metab (\mx,\my) , \metab' \rb $ rather than the induced Riemannian metric \eqref{eq:sgaBeta}.

 \subsection{Extension to stochastic $\tau$-SGA}
 
We construct stochastic $\tau$-SGA
through an unbiased estimation of the terms in the update rule
of deterministic $\tau$-SGA.
To achieve this,
we compute 
$\partial_{x_i} \lb \metab ( \mx,\my) , \metab \rb_{\my}  $ 
in \eqref{eq:sgaBeta}
using two mini-batches independently sampled from a training set,
one to estimate $\metab ( \mx,\my)$, the other 
to estimate $\metab$.
 Similarly,
 we use these mini-batches
 to estimate 
 $\metab ( \mx,\my)$
 and $\mdelta$ in 
 $\tB^\intercal [\mdelta ] (\mx,\my)$.

\section{Details of numerical experiments}
\label{sec:gandiscdetail}

In the stochastic-gradient setting, the expectation of $D_{\my} ( \phi_{data} )$ (resp. $D_{\my} ( \phi_{\mx} )$) is estimated at each iteration of an algorithm, using a batch of samples of data in the training set (resp.  a batch of samples of $Z$). In this setting, there are an infinite number of training samples from the GAN generator $\phi_{\mx}$.

\subsection{Choice of image datasets}

\paragraph{MNIST dataset}
We consider all the 10 classes. There are 50000 training samples, 
$10000$ validation samples and 
$10000$ test samples.

\paragraph{MNIST (digit 0) dataset} Among all the digit 0 images in the 
training (and validation) set of MNIST, 
we take $4932$ as training samples, $991$ as validation samples.
There are 980 test samples. 

\paragraph{Fashion-MNIST dataset} 
We consider all the 10 classes. There are 50000 training samples, 
$10000$ validation samples and 
$10000$ test samples.

\paragraph{Fashion-MNIST (T-shirt) dataset} 
Among all the T-shirt images in the training set of Fashion-MNIST, 
we take $4977$ as training samples, $1023$ as validation samples.
There are 1000 test samples. 

\subsection{Choice of smooth non-linearity}
\label{app:nonlin}

We aim to build GAN models
whose value function $\mf$ is 
twice continuously differentiable,
based on smooth non-linearities studied in \citet{smu9878772}.
For the discriminator of Gaussian distribution, 
$\rho$ is a smooth approximation 
of the absolute value non-linearity of the form 
\[
	\sigma( a) = \sqrt{ a^2 + \epsilon^2} ,  
	\quad \epsilon = 10^{-6} . 
\]
This function is twice continuously differentiable
since $\sigma''(a) = \frac{ \epsilon^2 } { 2 ( a^2 + \epsilon^2 )^{3/2} } $.
Similarly, for the discriminator in the image modeling, 
we use a smooth ReLu non-linearity
\[
	\sigma( a) = ( a + \sqrt{ a^2 + \epsilon^2}  ) /2 .
\]

\subsection{DCGAN generator}
\label{app:DCGANgen}

We use a smooth DCGAN generator to model the images 
from the MNIST and Fashion-MNIST datasets, 
adapted from WGAN-GP \citep{wgangp2017}.\footnote{\url{https://github.com/caogang/wgan-gp}}
We consider this generator because there is no batch normalization module 
needed (this module is typically used for other datasets such as CIFAR-10).
We make a slight modification of the default DCGAN
so that the function $\mx \mapsto  G_{\mx} (Z) $ 
is twice continuously differentiable at any $\mx \in \Mx$
for a fixed $Z$. 
For this, each ReLu non-linearity in $ Z \mapsto G_{\mx} (Z)$ is replaced 
by the smooth ReLu non-linearity
in Appendix \ref{app:nonlin}.

\subsection{Scattering CNN discriminator for image modeling}
\label{app:DCGANdisc}

We construct a smooth Lipschitz-continuous discriminator
with one trainable layer
	\[
		D_{\my} (\phi) = \lb v_{\my} , \sigma ( w_{\my} \star P (  \phi ) + b_{\my}  ) \rb ,
	\]	
where $\sigma$ is the smooth ReLu defined in Appendix \ref{app:nonlin}.
For a fixed $\phi$, this makes the function $\my \mapsto  D_{\my} (\phi)$ 
twice continuously differentiable at any $\my \in \My$.
However, the function $\phi \mapsto  D_{\my} (\phi)$
is not everywhere twice continuously differentiable 
due to the modulus non-linearity in the scattering transform.
We therefore replace this modulus non-linearity $z   \mapsto  |z | =  | z_{re} + i  z_{im} |$ 
by $z \mapsto  \sqrt{ z_{re}^2 + z_{im}^2 + \epsilon^2}$
for each complex number input $z = z_{re} + i z_{im}$.
This makes the function $\phi \mapsto  D_{\my} (\phi)$ twice continuously differentiable.
As a consequence, the function $ (\mx,\my) \mapsto \mf(\mx,\my)$ 
is also twice continuously differentiable,
which is induced from the smoothness 
of $ (\mx,\my) \mapsto  D_{\my} ( G_{\mx}(Z) )$
and $ \my \mapsto  D_{\my} ( \phi_{data} )$.

\paragraph{Scattering transform}
The input $\phi$ with dimension $d = 784$ is represented as an image 
of size $28 \times 28$.
It is pre-processed by the wavelet scattering transform $ P (  \phi )$
to extract stable edge-like information
using Morlet wavelet at different orientations and scales.
We use the second-order scattering transform
with four wavelet orientations (between $[0,\pi)$)
and two wavelet scales.
It first computes 
the convolution of $\phi$ with 
each wavelet filter, 
then a smooth modulus non-linearity is applied
to each feature map.
This computation is repeated one more time
on each obtained feature map
and then a low-pass filter is applied to 
each of the channels.
The obtained scattering features $ P (  \phi )$
is an image of size $9 \times 9$ with $25$ channels ($I=25$,$n=9$).

\paragraph{Orthogonal CNN layer}
The orthogonal CNN layer is parameterized by 
the kernel $ w_{\my}$ and bias $ b_{\my} $.
The kernel $ w_{\my}$ has $5 \times 5$ spatial size ($k=5$).
With a suitable padding and stride (two by two), we obtain 
an output image of size $5 \times 5$ ($N=5$) with $J=256$ channels.
Therefore the embedding space dimension 
of $v_{\my}$ is $J N^2 = 6400$.

\subsection{Stiefel manifold geometry}
For the discriminators of the Gaussian distribution and the image modeling, 
part of the parameters in $\my$ belong to Stiefel manifolds.
To choose a Riemannian metric on a Stiefel manifold (which is non-Euclidean),
we use the one in \citet[equation 20]{Manton2002}.
We also use the SVD projection in \citet[Proposition 12]{Manton2002}
as the retraction $\mR_{2}$ on each Stiefel manifold.

\subsection{Initialization for local convergence}
\label{sec:init}

\subsubsection{Simultaneous $\tau$-GDA initialization for Gaussian distribution}

Starting from a random initialization of $(\mx,\my)$, 
we apply the stochastic $\tau$-GDA method to build a pre-trained model. 
It is pre-trained with batch size $1000$, learning rate $\ga = 0.0002$ and $\tau =100$ 
for $T = 50000$ iterations.

\subsubsection{Alternating $\tau$-GDA initialization for MNIST and Fashion-MNIST}

The stochastic alternating $\tau$-GDA \citep{zhang2022near}
is often used in the training of WGAN-GP \citep{wgangp2017},
and it can be extended to Riemannian manifold as the
simultaneous $\tau$-GDA. 
Starting from a random initialization of $(\mx,\my)$, 
we apply alternating $\tau$-GDA to build a pre-trained model
since we observe that the simultaneous $\tau$-GDA method 
is unstable when $\tau$ is small; while its convergence can be slow when $\tau$ is big.

During the alternating $\tau$-GDA pre-training, 
each iteration amounts to 
perform $\tau=5$ gradient updates of $\my$ 
(with learning rate $0.1$) and one gradient 
update of $\mx$ (with learning rate $0.1$).

\paragraph{MNIST}

It is pre-trained with batch size $128$
for a total number of $2 \times 10^5$ iterations.
We obtain a pre-trained model 
with FID (train) $=7.3$ and FID (val) $=9.4$. 

\paragraph{MNIST (digit 0)}
It is pre-trained with batch size $128$
for a total number of $10^4$ iterations.
We obtain a pre-trained model 
with FID (train) $=12$ and FID (val) $=18$. 

\paragraph{Fashion-MNIST}
 
 It is pre-trained with batch size $128$
for a total number of $2 \times 10^5$ iterations. 
We obtain a pre-trained model 
with FID (train) $=17.7$ and FID (val) $=20$.

\paragraph{Fashion-MNIST (T-shirt)}
It is pre-trained with batch size $128$ for $5 \times 10^4$ iterations.
We obtain a pre-trained model with FID (train) $=26$ and FID (val) $=40$. 

\subsection{Statistical estimation of the evaluation quantities}
\label{app:angle_fid}

\paragraph{Estimate $\mf$ and angle}
After training, we estimate the $\mf$ and angle values
by re-sampling $\phi_{\mx}$ ten times 
(similar to the estimation of FID). 
To compute the angle for the scattering CNN discriminator, 
we use instead $\delta(\mx,\my)  =  \E  (   \sigma ( w_{\my} \star P (  \phi_{data} ) + b_{\my}   ) ) -\E  (   \sigma ( w_{\my} \star P (  \phi_{\mx} ) + b_{\my}   ) ) $. 

\paragraph{Estimate FID scores}
To compute FID (train) (resp. FID (val)),
we generate the same number of fake samples 
as the training samples (resp. the validation samples),
and report an average value by re-sampling 
the fake samples $10$ times (with its standard deviation if needed).
Similarly, we compute  FID (test) using the same amount of fake samples as test samples, but
without re-sampling.

\section{Extra numerical experiments}
\label{subsec:extranumexp}

We perform extra numerical simulations on the MNIST and Fashion-MNIST datasets
by using the same Wasserstein GAN architecture detailed in Section \ref{app:DCGANgen}
and \ref{app:DCGANdisc}.  The training on MNIST and Fashion-MNIST datasets are performed with a relatively large batch size $512$ (as we consider the full dataset, this results in a smaller stochastic gradient variance), while the training on MNIST (digit 0) and Fashion-MNIST (T-shirt) datasets are performed with batch size $128$.

\subsection{Extra results on MNIST}
We compare the computational time 
between $\tau$-SGA and $\tau$-SGA in Figure \ref{fig:compspeedMNIST}.
We see that at $\tau=10$, $\tau$-GDA converges faster than $\tau$-SGA at the initial stage,
but then in a sunden the dynamics becomes unstable, resulting in a much larger FID score.
At $\tau=20$, $\tau$-GDA is convergent and its computational speed is similar to $\tau$-SGA (at $\tau=5$).

\begin{figure}[h!]
\centering
\begin{subfigure}[t]{4cm} 
	\caption{}
	\includegraphics[height=2.67cm]{./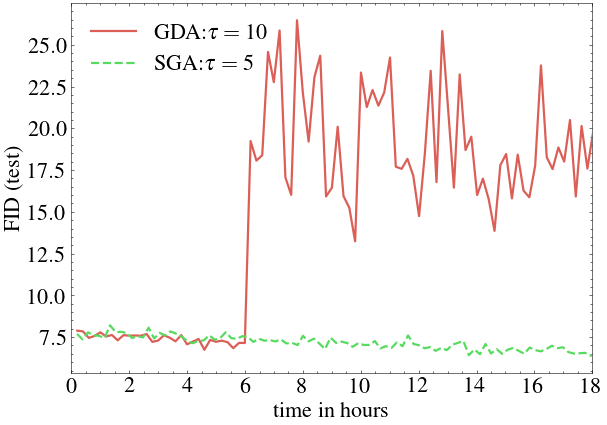}	 
\end{subfigure}
\begin{subfigure}[t]{4cm}
	\caption{}
	\includegraphics[height=2.67cm]{./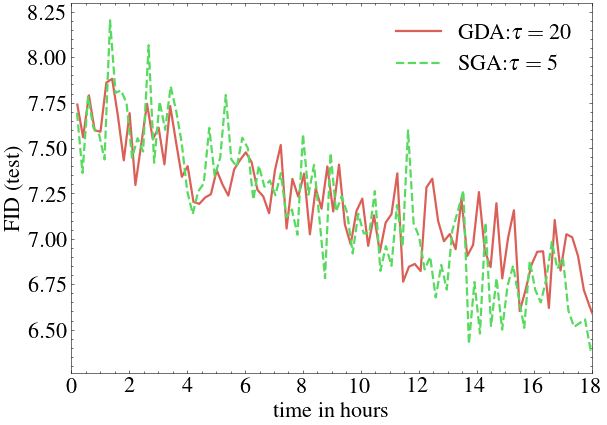}	
\end{subfigure}
\caption{
Evolution of the FID (test) score of stochastic $\tau$-GDA and $\tau$-SGA as a function of a wall-clock time 
on the Wasserstein GAN of MNIST. 
a): $\tau$-GDA at $\tau=10$ vs $\tau$-SGA at $\tau=5$. 
b): $\tau$-GDA at $\tau=20$ vs $\tau$-SGA at $\tau=5$. 
}
\label{fig:compspeedMNIST}
\end{figure}

\subsection{Extra results on MNIST (digit 0)}
\label{app:extranumexpMNIST0}

The results of $\tau$-GDA
and $\tau$-SGA are reported in Table \ref{tab:mnistAltGDASGA}.
For $\tau$-GDA, we vary $\tau \in \{ 5, 10, 30 \} $. 
We observe that when $\tau=30$,
$\tau$-GDA has more stable 
dynamics than $\tau=5$ and $\tau=10$
because the angle
stays around a positive constant.
At $\tau=5$, we observe a negative angle
at $t=T$ in $\tau$-GDA because it oscillates around zero over $t$.
On the other hand, a larger $\tau$ tends to slowdown the reduction of $\mf$ 
as in Figure \ref{fig:numtausga}(a).
Furthermore, the FID scores at $T=3 \times 10^5$
are only slightly improved compared to those at $T=2 \times 10^4$. 
Facing such a dilemma, 
we evaluate the performance of $\tau$-SGA using 
$\tau=5$ such that $\tau \ga$ remains the same.
This is the case where $\tau$-GDA is not convergent due to 
oscillating angles.
We find that both the angle and the FID scores are significantly improved
with a suitable choice of $\theta$ and $T$ in $\tau$-SGA.

\begin{table}
 \caption{ \label{tab:mnistAltGDASGA}
Last iteration measures 
of stochastic $\tau$-GDA and $\tau$-SGA on the
 Wasserstein GAN of MNIST (digit 0). 
 We report the $\mf$, angle, and FID scores  
  computed at $(\mx(T),\my(T))$. 
   $\tau$-GDA is trained for $T=2 \times 10^4$ iterations with $\ga = 0.05/\tau$.
  $\tau$-SGA is trained longer 
  with $\ga = 0.01$, $\tau=5$, $\theta=0.075$. 
  }
    \centering
    \small
\resizebox{14cm}{!}{    
    \begin{tabular}{ccccc}
	&  \multicolumn{3}{c}{  $\tau$-GDA}      & \\ 
        \toprule
         $\tau$ & $\mf$ & angle &  FID (train)   &  FID (val)   \\
       \midrule \midrule
	  $5$    &  -0.09  & -0.26   & 21  & 29 \tiny{($\pm 0.9$)} \\ 
	  $10$  &  0.09 & 0.11     & 34 & 36 \tiny{($\pm 0.6$)} \\ 
	  $30$  & 0.036 & 0.4  &   13  & 20 \tiny{($\pm 0.95$)} \\ 
        \bottomrule
    \end{tabular}
    \begin{tabular}{ccccc}
	&   \multicolumn{3}{c}{  $\tau$-SGA} & \\
        \toprule
       $T (10^5)$ & $\mf$ & angle &  FID (train)   &  FID (val)   \\
       \midrule \midrule
	$ 1$ 	&	 0.02     &     0.16	&  12   &  21 \tiny{($\pm 0.9$)}  \\  
	 $2$ 	& 	0.018     &  	0.2	    &  9.3 \tiny{($\pm 0.2$)}    		&   15 \tiny{($\pm 0.5$)}     \\  
	 $3$ &     	 0.016 &  0.2  &	 8.2 \tiny{($\pm 0.2$)}   &  14 \tiny{($\pm 0.3$)}   \\  
        \bottomrule
    \end{tabular}
}
\end{table}

Regarding the choice of $\theta$ in $\tau$-SGA, we have used this dataset to 
tune this parameter and then chosen a reasonable value to be used 
for the other datasets. Intuitively, when $\theta$ is too small,
$\tau$-SGA can be as unstable as $\tau$-GDA. When $\theta$ is too big,
it may amplify the stochastic gradient noise in the correction term of $\tau$-SGA. 
Therefore to choose a suitable $\theta$ is a delicate question. 
In Figure \ref{fig:thetachoiceonMNIST0}, 
we observe nevertheless that in a wide range of $\theta$, 
the performance of $\tau$-SGA in terms of the FID (test) score is similar.
It also suggests that $\tau$-SGA could have a global convergence property when $\theta$ is small.

\begin{figure}[h!]
\centering
\begin{subfigure}[t]{4cm}
	\includegraphics[height=2.67cm]{./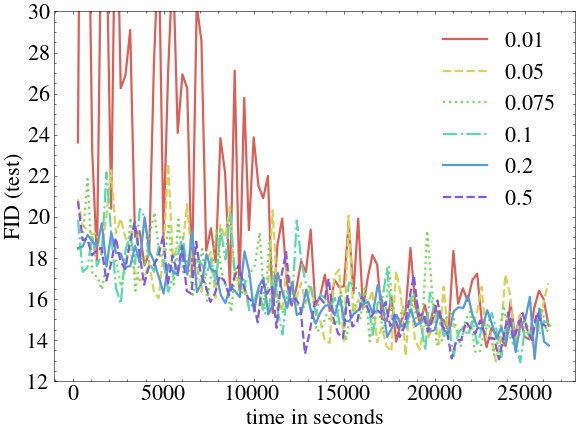}	
\end{subfigure}
\caption{
Evolution of the FID (test) score of stochastic $\tau$-SGA as a function of a wall-clock time 
on the Wasserstein GAN of MNIST (digit 0). We vary $\theta \in  [0.01,0.5] $ with a fixed $\tau=5$. 
}
\label{fig:thetachoiceonMNIST0}
\end{figure}

\subsection{Extra results on Fashion-MNIST}
\label{app:extranumexpFMNIST}

We perform extra numerical simulations on the Fashion-MNIST dataset.
In Table \ref{tab:fmnist10GDASGA}, we study the performance of $\tau$-GDA 
by varing the choice of $\tau$. 
It is run for $T=2 \times 10^4$ iterations with $\ga = 0.1/\tau$.
At $\tau=1$ and $\tau=5$, we observe a similar instability in $\tau$-GDA 
as the MNIST case at $\tau=5$ and $\tau=10$ (in Table \ref{tab:mnist10GDASGA}).
At $\tau=10$, $\tau$-GDA has a stable dynamics and a good performance 
in terms of FID scores.

We next study the performance of $\tau$-SGA by varing the training iterations $T$, 
using the same $\ga = 0.02$, $\tau=5$, $\theta=0.075$. Table \ref{tab:fmnist10GDASGA}
reaches a similar conclusion as Table \ref{tab:mnist10GDASGA}, 
confirming the importance of the correction term in $\tau$-SGA 
to improve the local convergence of $\tau$-GDA.

Lastly, we compare the computational time 
between $\tau$-GDA and $\tau$-SGA in Figure \ref{fig:compspeedFMNIST}.
Different to the MNIST case in Figure \ref{fig:compspeedMNIST},
we find that $\tau$-GDA has a faster computational speed 
at $\tau=10$, compared to $\tau$-SGA. This suggests that 
the speedup of the $\tau$-SGA in terms of the number of iterations is less significant 
in this case since $\tau$-GDA works well with a relatively small $\tau$. 
On the other hand, a larger $\tau=20$ in $\tau$-GDA makes it slower.


\begin{table}[h!]
 \caption{ \label{tab:fmnist10GDASGA}
 Last iteration measures 
of stochastic $\tau$-GDA and $\tau$-SGA on the
 Wasserstein GAN of Fashion-MNIST. 
 We report the $\mf$, angle, and FID scores  
  computed at $(\mx(T),\my(T))$. $\tau$-GDA is trained for $T=2 \times 10^4$ iterations with $\ga = 0.1/\tau$. $\tau$-SGA is trained longer with $\ga = 0.02$, $\tau=5$, $\theta=0.075$. 
  }
    \centering
    \small
\resizebox{14cm}{!}{
    \begin{tabular}{ccccc}
	&  \multicolumn{3}{c}{  $\tau$-GDA}      & \\ 
        \toprule
         $\tau$ & $\mf$ & angle &  FID (train)   &  FID (val)   \\
       \midrule \midrule
	  $1$    & 1.60   \tiny{($\pm 0.006$)}    &	 0.92 & 		 107  &		 109 \tiny{($\pm 0.3$)}   \\ 
	  $5$  &   0.02 & 	0.48  \tiny{($\pm 0.01 $)}         & 17.6  \tiny{($\pm  0.08  $)} &  19.8  \tiny{($\pm 0.2$)} \\ 
	  $10$  &    0.02 & 0.4      &  17      &  19 \\ 
        \bottomrule
    \end{tabular}
    \begin{tabular}{ccccc}
	&   \multicolumn{3}{c}{  $\tau$-SGA} & \\
        \toprule
       $T (10^5)$ & $\mf$ & angle &  FID (train)   &  FID (val)   \\
       \midrule \midrule
	$ 1$ 	&	0.019     &    0.33  \tiny{($\pm 0.01 $)}     &   16.98    \tiny{($\pm 0.05 $)}   &  19.2       \tiny{($\pm 0.17   $)}   \\  
	 $3$ 	& 	0.019     &    0.26  \tiny{($\pm 0.008  $)}    &  16    		&   18.3       \tiny{($\pm 0.1   $)}   \\  
	 $5$ &     	0.016  &  0.39   \tiny{($\pm 0.03 $)}  &  14.37   \tiny{($\pm 0.05 $)}   &  16.6     \tiny{($\pm 0.1   $)}    \\  
        \bottomrule
    \end{tabular}
}
\end{table}

\begin{figure}[h!]
\centering
\begin{subfigure}[t]{4cm} 
	\caption{}
	\includegraphics[height=2.67cm]{./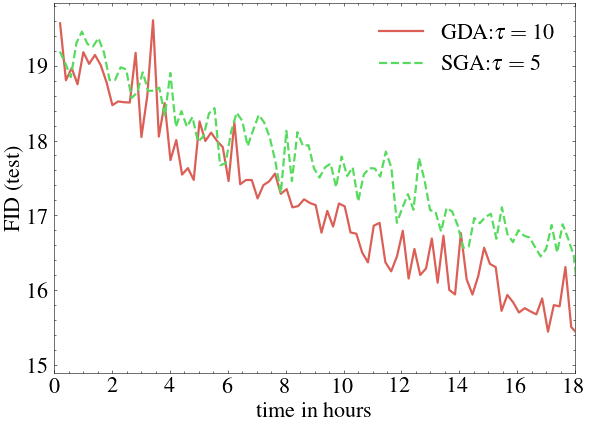}	 
\end{subfigure}
\begin{subfigure}[t]{4cm}
	\caption{}
	\includegraphics[height=2.67cm]{./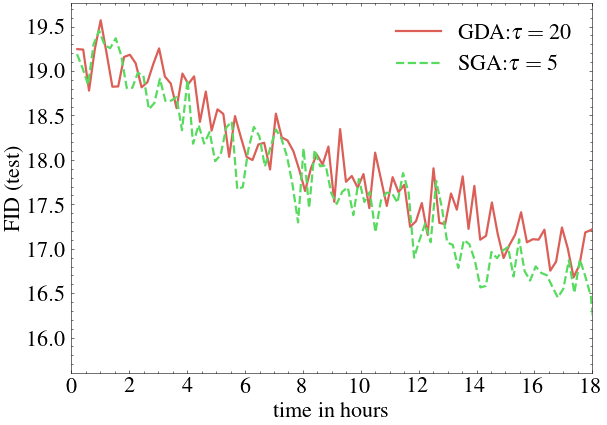}	
\end{subfigure}
\caption{
Computational speed of 
$\tau$-GDA and $\tau$-SGA as a function of a wall-clock time 
on the Wasserstein GAN of Fashion-MNIST. 
a): $\tau$-GDA at $\tau=10$ vs $\tau$-SGA at $\tau=5$. 
b): $\tau$-GDA at $\tau=20$ vs $\tau$-SGA at $\tau=5$. 
}
\label{fig:compspeedFMNIST}
\end{figure}

\subsection{Extra results on Fashion-MNIST (T-shirt)}

\label{app:extranumexpFMNIST0}


From the results in Table \ref{tab:fmnistAltGDASGA}, we see that the behavior of the $\tau$-GDA and $\tau$-SGA algorithms (in terms of the f and angle measures) are similar to the MNIST (digit 0) case. As in Table \ref{tab:mnistAltGDASGA}, we observe that one can improve the local convergence of $\tau$-GDA in the range of small $\tau$ by using $\tau$-SGA. We also find that an improved convergence (in terms of the angle) 
lead to improved GAN models in terms of the FID scores.

\begin{table}[h!]
 \caption{ \label{tab:fmnistAltGDASGA}
 Last iteration measures 
of stochastic $\tau$-GDA and $\tau$-SGA on the
 Wasserstein GAN of Fashion-MNIST (T-shirt). 
 We report the $\mf$, angle, and FID scores  
  computed at $(\mx(T),\my(T))$. 
$\tau$-GDA is trained for $T=2 \times 10^4$ iterations with $\ga = 0.1/\tau$.
 $\tau$-SGA is trained longer with $\ga = 0.02$, $\tau=5$, $\theta=0.075$. 
  }
    \centering
    \small
\resizebox{14cm}{!}{
    \begin{tabular}{ccccc}
	&  \multicolumn{3}{c}{  $\tau$-GDA}      & \\ 
        \toprule
         $\tau$ & $\mf$ & angle &  FID (train)   &  FID (val)   \\
       \midrule \midrule
	  $5$    &  -0.52  &  -0.31 &  80  \tiny{($\pm 0.4$)}  & 92  \tiny{($\pm 1$)}  \\ 
	  $10$  &   0.03 \tiny{($\pm 0.03$)} & 0.05    \tiny{($\pm 0.04$)}     &  73  &  86 \\ 
	  $30$  &    0.01 & 0.15 \tiny{($\pm 0.03$)}    &  24.7  \tiny{($\pm 0.2$)}     &  38.7  \tiny{($\pm 0.5$)} \\ 
        \bottomrule
    \end{tabular}
    \begin{tabular}{ccccc}
	&   \multicolumn{3}{c}{  $\tau$-SGA} & \\
        \toprule
       $T (10^5)$ & $\mf$ & angle &  FID (train)   &  FID (val)   \\
       \midrule \midrule
	$ 1$ 	&	0.02       &    0.30 	  \tiny{($\pm 0.02$)} &   22.7   \tiny{($\pm 0.3$)}  & 36.1   \tiny{($\pm 0.5$)}  \\  
	 $2$ 	& 	   0.018  &  		 0.35    \tiny{($\pm 0.03$)}     &  20.6  \tiny{($\pm 0.2$)}    		&   34.1 \tiny{($\pm 0.6$)}     \\  
	 $3$ &     	0.018  &   0.32  \tiny{($\pm 0.03$)}   &	  19.8  \tiny{($\pm 0.2$)}  &  33.3  \tiny{($\pm 0.3$)}   \\  
        \bottomrule
    \end{tabular}
}
\end{table}

\subsection{Extra results on computational time}
\label{subsec:comp_time_numres}

In Figure \ref{fig:compspeed},
we compare the computation time 
of $\tau$-GDA and $\tau$-SGA.
The best performed methods are selected to compare at the last iteration,
according to the value of $f$ in Example 2 or the FID (val) score on MNIST (digit 0) and Fashion-MNIST (T-shirt).

In Example 2,  we set $\ga = 0.001/\tau$ for 
$\tau$-GDA and $\tau$-SGA (both with deterministic gradients).
We find that $\tau$-SGA has a significant speedup compared to $\tau$-GDA. 
This is due to a much faster convergence of $\tau$-SGA
at $\tau=10,\theta=0.15$ compared to the $\tau$-GDA at $\tau=50$.

On the two image datasets, the speedup is less significant
using $\tau$-SGA compared to $\tau$-GDA (both with stochastic gradients).
On MNIST (digit 0), 
we compare $\tau$-GDA at $\tau=30$ with $\tau$-SGA at $\tau=5,\theta=0.075$
using the same discriminator leraning rate $\ga \tau = 0.05$.
We find that $\tau$-SGA is slightly faster and it can reach a lower FID (test) score. 
On Fashion-MNIST (T-shirt), 
we compare $\tau$-GDA at $\tau=30$ with $\tau$-SGA at $\tau=5,\theta=0.075$
using $\ga \tau = 0.1$.
We find that the speed of $\tau$-GDA and $\tau$-SGA is similar.


\begin{figure}[h!]
\centering
\begin{subfigure}[t]{4cm} 
	\caption{}
	\includegraphics[height=2.67cm]{./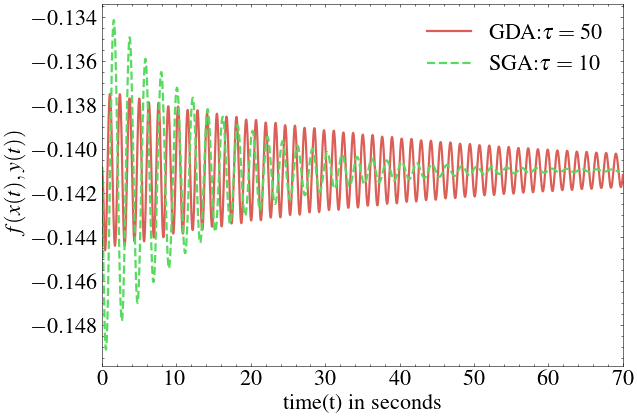}	 
\end{subfigure}
\begin{subfigure}[t]{4cm}
	\caption{}
	\includegraphics[height=2.67cm]{./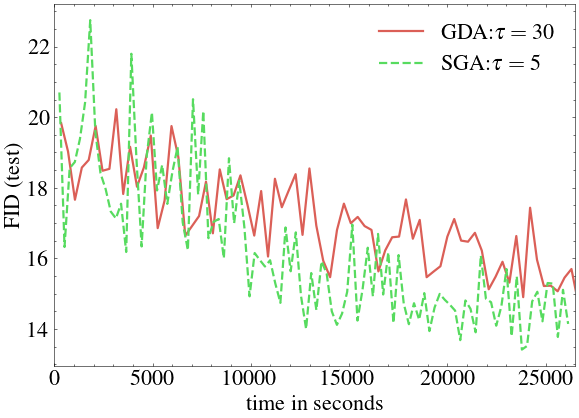}	
\end{subfigure}
\begin{subfigure}[t]{4cm}
	\caption{}
	\includegraphics[height=2.67cm]{./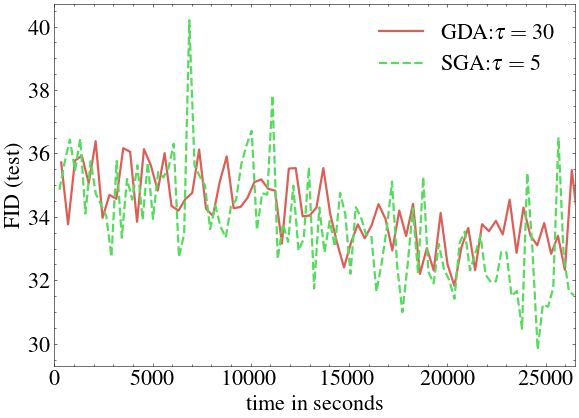}
\end{subfigure}
\caption{
Computational speed of 
$\tau$-GDA and $\tau$-SGA as a function of a wall-clock time 
in Example 2
and on MNIST (digit 0) and Fashion-MNIST (T-shirt). 
The FID (test) score is computed 
from the fake (GAN model) samples and the test samples of each dataset. 
a): Example 2. b): MNIST (digit 0).
c): Fashion-MNIST (T-shirt).
}
\label{fig:compspeed}
\end{figure}

\end{document}